\renewcommand{\ForumContactRow}{%
  \begingroup\small\raggedright
    \ifx\ForumEmail\empty\else
      {\color{ForumAccent}\faEnvelope[regular]~}\ %
      \href{mailto:\ForumEmail}{\textcolor{ForumContactText}{\texttt{\ForumEmail}}}\par
      \vspace{\ForumContactGap}%
    \fi
    
  \endgroup
}
\definecolor{skyblue}{RGB}{204,229,255}
\definecolor{darkblue}{rgb}{0, 0, 0.5}
\title{Differential Smoothing Mitigates Sharpening and Improves LLM Reasoning}
\definecolor{jcg}{RGB}{100,160,0}
\definecolor{sachin}{RGB}{0,0,150}
\definecolor{hqz}{RGB}{160,100,100}
\definecolor{gnz}{HTML}{64B5F6}
\definecolor{myDarkGreen}{RGB}{50, 70, 70} 
\definecolor{myLightGray}{RGB}{240, 240, 240} 
\definecolor{titlebgcolor}{RGB}{70, 80, 100}
\definecolor{bodybgcolor}{RGB}{245, 245, 245}
\definecolor{bordercolor}{RGB}{120, 120, 120}
\definecolor{darkblue}{rgb}{0.0, 0.0, 0.55}   
\definecolor{darkgreen}{rgb}{0.0, 0.5, 0.0}   
\definecolor{darkred}{rgb}{0.6, 0.0, 0.0}     
\newcommand{\our}{\pi_\textrm{DS}}
\newcommand{\en}{\pi_\textrm{ent}}
\newcommand{\vanilla}{\pi_\textrm{van}}
\newcommand{\base}{\pi_\textrm{base}}
\definecolor{myLightBlue}{RGB}{230, 240, 255} 
\newtcolorbox{responsebox}[2][]{
    breakable,
    enhanced,
    colback=white,             
    colframe=blue!50!black,    
    coltext=black,             
    coltitle=white,    
    fonttitle=\bfseries\rmfamily, 
    arc=3mm,                   
    boxrule=1pt,
    title=#2,
    #1
}
\definecolor{lightblue}{RGB}{235,243,252}
\definecolor{mybgcolor}{RGB}{235, 235, 250}
\definecolor{myGreen}{RGB}{240, 250, 240}
\newtcolorbox{takeawaybox}[1][]{
  enhanced,
  colback=mybgcolor, 
  colframe=black,    
  boxrule=0.5pt,     
  arc=3mm,           

  attach boxed title to top left={yshift=-0.25em, xshift=1em},
  fonttitle=\bfseries, 
  title={#1},          
  boxed title style={
    colback=black,     
    sharp corners,     
  },
}
\newtcolorbox{equationbox}[1]{
  colback=white,                
  colframe=gray!75!black,       
  boxrule=1pt,                  
  
  title=#1,                     
  attach boxed title to top left={yoffset=-2mm, xshift=2mm}, 
  
  colbacktitle=gray!75!black,   
  coltitle=white,               
  fonttitle=\bfseries\sffamily, 
  
  boxed title style={
    boxrule=0pt,                
    frame code={}               
  }
}
\begin{document}

\maketitle

\begin{abstract}
It is widely recognized that reinforcement learning (RL) fine-tuning of large language models often leads to \textit{diversity collapse}, where outputs lack variety. Prior work has proposed a range of heuristics to counteract this effect, but these methods are ad hoc: they frequently trade off correctness for diversity, their effectiveness varies across tasks, and in some cases they even contradict one another. In this work, we place these observations on a rigorous foundation. We first provide a formal proof of why RL fine-tuning exhibits diversity collapse via a selection and reinforcement bias. Next, we make a key observation that any reward modification to address diversity collapse only needs to be applied on the correct trajectories. Building directly on this analysis, we introduce a principled method---\textit{differential smoothing}---that provably improves both correctness and diversity, outperforming vanilla RL as well as widely used entropy-based heuristics. Our theory precisely characterizes when existing heuristics help and why they fail, while showing that differential smoothing is universally superior. Extensive experiments with models from 1B to 7B parameters, across domains including CountDown and real-world mathematical reasoning, demonstrate consistent gains. Differential smoothing improves both Pass@1 and Pass@k, with up to 6.7\% improvements on AIME24 dataset.
\end{abstract}

\section{Introduction}



Reinforcement learning (RL) has become a powerful technique for fine-tuning Large Language Models (LLMs), enhancing capabilities ranging from complex reasoning \citep{guo2025deepseek,yu2025dapo,shao2024deepseekmath} to human preference alignment \citep{ouyang2022training,bai2022training}. However, this process is often plagued by a significant side effect: a collapse in generation diversity \citep{song2024mind,dang2025weight,yue2025does,zhao2025echo,he2025rewarding}. 
This degradation is empirically observed in metrics like Pass@$K$; RL-tuned models often show diminishing improvements for larger values of $K$ and can even underperform the original base model \citep{he2025rewarding,cobbe2021training,chow2024inference,chen2025rethinking}. 



However, mitigating this diversity collapse is non-trivial and presents several challenges. Simple heuristics such as early stopping or high-temperature decoding may boost diversity and achieve higher Pass@K, but they frequently hurt Pass@1 performance. This points to an inherent tension between increasing diversity and preserving accuracy. Second, most existing methods lack robustness across settings.  A striking example is entropy control, where some works recommend maximizing entropy to improve both Pass@1 and Pass@K, while others report that minimizing entropy can yield the same outcome. Our experiments confirm that prior techniques designed to enhance diversity fail to succeed broadly and often succeed only on the tasks for which they were originally developed.

Motivated by these limitations, our work has two primary goals: (1) to provide clarity on the seemingly contradictory effects of previous methods, and (2) to develop a principled method that robustly improves both correctness (Pass@1) and diversity (Pass@K) across a range of benchmarks.

Our analysis from first principles shows that RL fine-tuning introduces two biases, \textbf{selection bias} and \textbf{reinforcement bias}, which jointly cause diversity collapse within the correct trajectories. Selection bias arises because correct trajectories with high probability under the pre-trained model are more likely to be reinforced (Theorem~\ref{thm:selection_bias}). Reinforcement bias then arises because these same trajectories receive disproportionately larger updates (Theorem~\ref{thm:reinforcement_bias}).

Leveraging the insights from our theoretical analysis, we propose a \textit{simple but novel twist} to vanilla RL. The core of our method is that while we need to change the reward function on correct trajectories to prevent the diversity collapse, the incorrect trajectories can continue to use the original reward or even a modification that encourages ``sharpening''. This differentiated reward mechanism that applies distinct pressures to correct and incorrect trajectories can provably mitigate the tradeoff between correctness and diversity that plagues other heuristics. 

We propose the \textbf{differential smoothing} approach. For correct trajectories, our reward mitigates the diversity collapse by subtracting a term proportional to their log-probability. On incorrect trajectories, our reward modification focuses on correctness, by adding the log-probability of the incorrect trace. We present our proposed DS-GRPO algorithm in Section \ref{sec:DS-GRPO}. 

We validate our differential smoothing approach both theoretically and empirically. Our theoretical analysis (Section~\ref{sec:theory}) formally proves that the reward modification for correct trajectories directly optimizes for diversity, while the adjustment for incorrect ones enhances correctness without compromising diversity.

We evaluate DS-GRPO on five different models ranging from 1.5B-8B, and across a range of real-world settings, from simpler tasks such as Countdown to more challenging benchmarks in mathematical reasoning tasks. We consistently observe that DS-GRPO improves both Pass@1 and Pass@K relative to the vanilla baseline; on the Mistral-8B model, DS-GRPO outperforms the baseline by up to 6.7\%. We further compare against prior heuristics, including entropy regularization and recent diversity-prompting techniques \citep{he2025rewarding,chen2025pass,walder2025pass}. While these baselines yield improvements only in certain settings, DS-GRPO delivers robust gains across \textit{all} datasets and models tested. Thus, DS-GRPO represents a principled approach that not only improves upon vanilla RL but also provides consistently stronger results than existing heuristics.


Finally, our analysis clarifies the contradictory effects of global entropy regularization in prior work. Increasing entropy across all trajectories improves diversity but reduces correctness, which can help on tasks with many valid solutions. Conversely, decreasing entropy improves correctness at the cost of diversity, which suits tasks with few valid solutions. Our experiments confirm this principle, and we further show that \textit{differential entropy control}, increasing entropy on correct trajectories while decreasing it on incorrect ones—achieves the best of both, paralleling the effect of differential smoothing.

\begin{tcolorbox}[
  enhanced, breakable,
  colframe=black!12, boxrule=0.35pt, arc=1mm,
  title={\textbf{Summary of Our Main Contribution}},
  coltitle=black, fonttitle=\sffamily\bfseries,
  colbacktitle=green!15!white,  
  colback=green!5!white,      
  boxed title style={
    sharp corners, boxrule=0pt,
    top=3pt, bottom=3pt, left=4mm, right=4mm,
    borderline={0.5pt}{0pt}{black!10}       
  },
  attach boxed title to top left={xshift=4mm,yshift*=-1.2mm},
  boxsep=1.5mm, top=1.5mm, bottom=1.5mm, left=4mm, right=4mm,
  before skip=10pt, after skip=10pt
]
\begin{enumerate}[topsep=0pt,leftmargin=10pt]\setlength{\itemsep}{0pt}
\item We analyze diversity collapse from first principles in a formal setting.
\item Based on our diagnosis, we propose a novel differential smoothing algorithm that empirically improves both Pass@1 and Pass@K and outperforms previous methods \textit{robustly} in various real-world settings. 
\item We formally prove that our proposed differential smoothing approach improves diversity and correctness over the vanilla approach and the popular entropy maximization heuristic.
\item The analysis in this work also has broader implications of clarifying when and why existing heuristics work and guide the principled modifications of such heuristics. 
\end{enumerate}
\end{tcolorbox}

 \section{Related Work}\label{sec:related work}


\noindent\textbf{Mitigating Diversity Collapse in RL for Reasoning.}
Repeated sampling has gained recent attention for significantly improving LLM performance \citep{sessa2407bond,chow2024inference,wang2024math,brown2024large,roziere2023code}. However, reinforcement learning fine-tuning is known to cause ``diversity collapse''—where the policy sharpens around a few solutions \citep{dang2025weight, yue2025does}—which directly impairs Pass@K performance \citep{chen2025rethinking}. Although prior work has empirically documented this effect \citep{wu2025invisible}, we are the first to rigorously explain its underlying cause in a simple formal setting.

Existing methods to mitigate this issue, such as optimizing for Pass@K \citep{tang2025optimizing, walder2025pass} or encouraging low-probability solutions \citep{he2025rewarding, song2025outcomebasedexplorationllmreasoning}, often force a trade-off: they improve diversity (Pass@K) at the expense of correctness (Pass@1) and suffer from widely inconsistent performance. In contrast, our analysis yields a method that fundamentally overcomes this trade-off. To achieve this, we propose a differential modification of the reward function, applying distinct reward structures to correct and incorrect trajectories.

While \citet{he2025rewarding} also employ a differential reward scheme, their method achieves only limited success, performing well on specific tasks but failing on others, such as mathematical reasoning. Our method, however, consistently outperforms their approach across all evaluated tasks. We further provide rigorous theoretical and empirical evidence for the necessity of this differential reward structure.


\noindent\textbf{Controlling Distribution Entropy in RLVR.} 
A common strategy for mitigating diversity collapse and encouraging exploration is to increase policy entropy. However, the precise role and optimal application of entropy regularization remain highly debated.
Some studies advocate for entropy maximization to promote exploration and diversity \citep{yu2025dapo, he2025skywork, liu2025prorl}. Conversely, other work reports that entropy minimization can improve correctness \citep{agarwal2025unreasonable, gao2025oneshotentropyminimization}. These contradictory findings have created significant uncertainty about the optimal strategy.
Our work addresses this ambiguity by interpreting our method as a novel form of conditional entropy control that outperforms these global, uniform strategies. Our analysis clarifies the seemingly disparate effects of entropy manipulation and provides a new principle for its effective regulation.

 \begin{figure}[t!]
    \centering
    \vspace{-35pt}
        \includegraphics[width=0.9\linewidth]{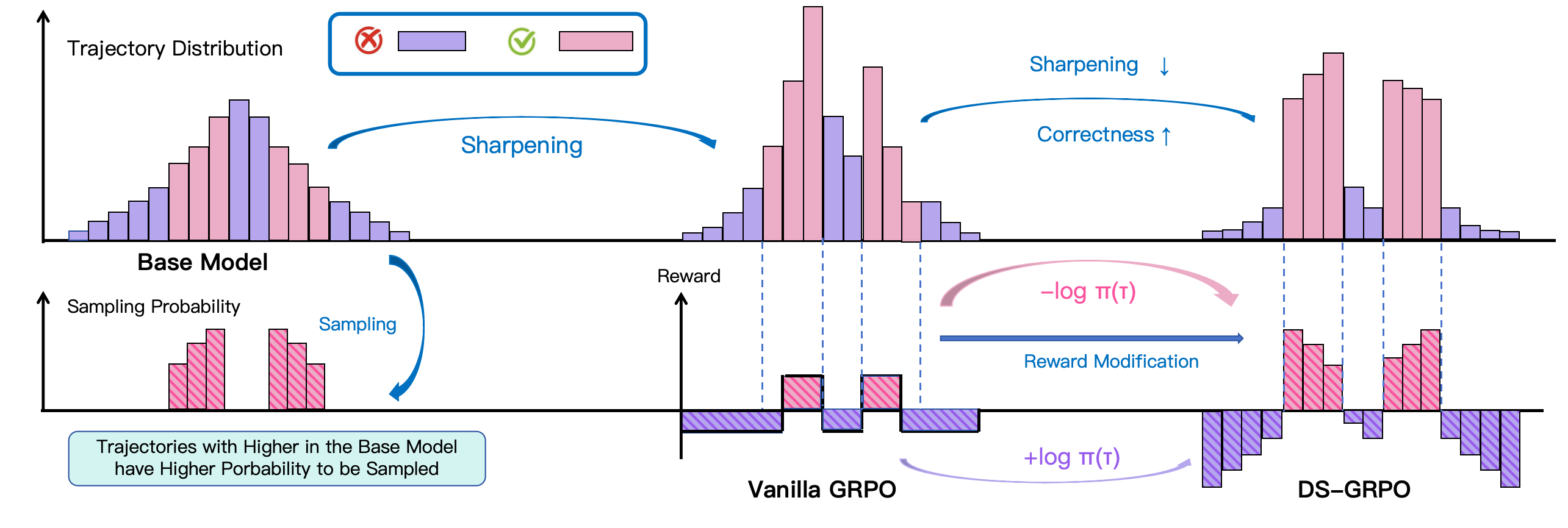}
        \vspace{-10pt}
        \caption{An illustration of the sharpening effect in vanilla RL and the mitigation mechanism of DS-GRPO}
       \vspace{-10pt}
        \label{fig:pkpo_entropy}
        \vspace{-5pt}
\end{figure}
 
\section{A theoretical perspective on the diversity collapse in RL}
\label{sec:theory}
In this section, we theoretically analyze the \textbf{diversity collapse} phenomenon that arises during the RL fine-tuning of LLMs where the fine-tuned model converges to a limited set of solutions for a given problem, neglecting other valid alternatives. \citep{he2025rewarding,song2025outcomebasedexplorationllmreasoning}. This diversity collapse leads to a measurable degradation in metrics such as Pass@K as RL progresses, and potentially stifles the model's creative problem-solving capabilities.

Our analysis proceeds as follows: We first introduce our theoretical abstraction of the RL fine-tuning process (Sec~\ref{sec:theory-setup}). We then identify and analyze two primary factors that drive diversity collapse (Sec~\ref{sec:theory-sharpening}). Finally, based on these insights, we derive a principled reward function designed to mitigate this collapse and actively promote solution diversity (Sec~\ref{sec:theory-sharpening} and Sec~\ref{sec:theory-reward}). 

\subsection{Setup}
\label{sec:theory-setup}
\textbf{Preliminaries.} We model language generation as a token-level Markov Decision Process (MDP). The environment is defined by a state space $\mathcal{S}$, a token vocabulary (action space) $\mathcal{A}$, and a maximum length $H$. An episode begins with an input prompt $\mathbf{x} \in \mathcal{X}$, which defines the initial state $s_1 = (\mathbf{x})$. At each step $h$, the state $s_h = (\mathbf{x}, a_1, \dots, a_{h-1}) \in \mathcal{S}_h$ captures the prompt and previously generated tokens. The agent's policy $\pi_h(\cdot | s_h): \mathcal{S}_h \to \Delta(\mathcal{A})$ provides a distribution over the next token. State transitions are deterministic: selecting action $a_h \in \mathcal{A}$ in state $s_h$ leads to $s_{h+1} = (\mathbf{x}, a_1, \dots, a_h)$.

A complete trajectory is $\tau = (\mathbf{x}, a_1, \dots, a_H)$, which receives a terminal reward $r(\tau)$. Let $\mathcal{C}$ denote the set of all correct (successful) trajectories. We assume a binary reward function $r(\tau) = \mathbf{1}[\tau \in \mathcal{C}]$. We use an estimated reward function $\widehat{r}$, which is constructed as follows: Initially, $\widehat{r}(\tau) = 0$ for all $\tau$. A dataset of trajectories is sampled from base policy $\base$. A verifier then identifies the subset of successful trajectories . The estimated rewards for these trajectories are subsequently set to $1$.


\textbf{RL fine-tuning over a base policy.} A base model (or a pre-trained model) with corresponding policy $\base$ is fine-tuned to optimize the following objective: 
\begin{align}
\label{eq:vanilla_GRPO}
\vanilla(\tau)=\textstyle\arg\textstyle\max\limits_{\pi}\mathbb{E}_{\tau\sim\pi}\widehat{r}(\tau)-\beta\cdot \mathbb{D}_{\text{KL}}(\pi||\base),
\end{align}
where the KL-divergence term serves as a regularizer that prevents the updated policy from deviating too much from the base policy $\base$ and $\beta$ is a hyperparameter that balances the trade-off between maximizing reward and preserving the knowledge of the base model. We denote this by $\vanilla$ to distinguish from proposed improvements in later sections. 




\subsection{Why does RL collapse diversity?}
\label{sec:theory-sharpening}
Ideally, RL fine-tuning should increase the likelihood of all correct trajectories. In practice, however, the process disproportionately reinforces correct trajectories that are already high-probability under the base LLM, a phenomenon we refer to as diversity collapse. As a starting point for mitigation, we first formally analyze \emph{why} there is a diversity collapse in the theoretical setup discussed above. 
\begin{tcolorbox}[
  enhanced, breakable,
  colback=myLightBlue!10,
  colframe=black!12, boxrule=0.35pt, arc=1mm,
  title={\textbf{\sffamily Analysis of Diversity Collapse in Reinforcement Learning}},
  coltitle=black, fonttitle=\sffamily\bfseries,
  colbacktitle=blue!10!white, 
  colback=blue!2!white, 
  boxed title style={
    sharp corners, boxrule=0pt,
    top=3pt, bottom=3pt, left=4mm, right=4mm,
    borderline={0.5pt}{0pt}{black!10}         
  },
  attach boxed title to top left={xshift=4mm,yshift*=-1.2mm},
  boxsep=2mm, top=2mm, bottom=2mm, left=4mm, right=4mm,
  before skip=10pt, after skip=10pt
]
\begin{proposition}[\textbf{Selection bias}]
\label{thm:selection_bias_main} The probability that a correct trajectory’s likelihood increases is monotonically related to its initial probability under the base model. Formally, for any two correct trajectories $\tau_1, \tau_2$ and $\beta > 0$, we have
\begin{align*}
\base(\tau_1) \geq \base(\tau_2) 
\;\;\implies\;\;
\mathbb{P}\!\left(\vanilla(\tau_1) > \base(\tau_1)\right) 
\;\;\geq\;\;
\mathbb{P}\!\left(\vanilla(\tau_2) > \base(\tau_2)\right).
\end{align*}
\end{proposition}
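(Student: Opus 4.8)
The plan is to exploit the closed-form solution of the KL-regularized objective \eqref{eq:vanilla_GRPO} and then reduce the probabilistic statement to a simple sampling event. First I would recall the standard fact that the maximizer of $\mathbb{E}_{\tau\sim\pi}[\widehat r(\tau)] - \beta\,\mathbb{D}_{\mathrm{KL}}(\pi\|\base)$ is the exponentially tilted distribution
\[
\vanilla(\tau) = \frac{\base(\tau)\,\exp(\widehat r(\tau)/\beta)}{Z},\qquad Z = \textstyle\sum_{\tau'}\base(\tau')\exp(\widehat r(\tau')/\beta).
\]
The crucial structural observation is that $\vanilla$ depends on the randomness only through the estimated reward $\widehat r$, and since $\widehat r$ is binary, it is entirely determined by the (random) set $S$ of trajectories that were sampled from $\base$ and verified correct during dataset construction.

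Next I would characterize exactly when a correct trajectory's likelihood increases. Writing $p := \base(S) = \sum_{\tau'\in S}\base(\tau')$ and using $\widehat r\in\{0,1\}$, the partition function collapses to $Z = 1 + (e^{1/\beta}-1)\,p$. For a correct trajectory $\tau$ with $\base(\tau)>0$, the inequality $\vanilla(\tau) > \base(\tau)$ is equivalent to $\exp(\widehat r(\tau)/\beta) > Z$. If $\tau\notin S$ then $\widehat r(\tau)=0$, the left side equals $1\le Z$, and the likelihood cannot increase; if $\tau\in S$ then the condition becomes $e^{1/\beta} > 1 + (e^{1/\beta}-1)p$, which after dividing by $e^{1/\beta}-1>0$ reduces to $p<1$. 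Thus, up to the degenerate event $\{\base(S)=1\}$ (which I would argue away, e.g. because $\base$ places positive mass outside $S$ almost surely), the event $\{\vanilla(\tau)>\base(\tau)\}$ coincides exactly with $\{\tau\in S\}$.

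Finally I would compute the sampling probability. Since $\tau_1,\tau_2$ are assumed correct, membership in $S$ is equivalent to being sampled at least once. Under the natural i.i.d. model with $N$ draws from $\base$ this gives $\mathbb{P}(\tau\in S) = 1-(1-\base(\tau))^N$, which is strictly increasing in $\base(\tau)$, so $\base(\tau_1)\ge\base(\tau_2)$ immediately yields $\mathbb{P}(\tau_1\in S)\ge \mathbb{P}(\tau_2\in S)$ and hence the claimed monotonicity. (A coupling of the sample draws would deliver the same conclusion for any sampling scheme in which higher base probability means a weakly higher inclusion probability, without committing to a specific number of draws.)

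I expect the main obstacle to be the careful handling of the partition function and the degenerate case. A priori the event $\{\vanilla(\tau)>\base(\tau)\}$ depends not only on $\tau\in S$ but on the global quantity $\base(S)$ through $Z$, so I must verify that this dependence does not spoil the clean reduction to $\{\tau\in S\}$; this is precisely where showing $\base(S)<1$ almost surely (or treating $\{\base(S)=1\}$ as a negligible event) is essential. A secondary subtlety is pinning down the sampling model precisely enough to assert monotonicity of $\mathbb{P}(\tau\in S)$ in $\base(\tau)$, which the explicit $1-(1-\base(\tau))^N$ formula makes transparent.
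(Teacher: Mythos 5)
Your proposal is correct and takes essentially the same route as the paper's proof: both reduce the event $\{\vanilla(\tau) > \base(\tau)\}$ to the event that $\tau$ is drawn at least once among the $N$ i.i.d.\ samples from $\base$, yielding $\mathbb{P}\left(\vanilla(\tau) > \base(\tau)\right) = 1-(1-\base(\tau))^N$, which is monotone in $\base(\tau)$. You are in fact slightly more careful than the paper, which simply asserts that a positive estimated reward guarantees an increase in likelihood, whereas you verify this via the partition function $Z = 1+(e^{1/\beta}-1)\base(S)$ and explicitly handle the degenerate event $\base(S)=1$ (which, one can further note, cannot break monotonicity anyway, since on that event every positive-probability trajectory lies in $S$, so it subtracts the same mass from both sides).
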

\begin{proposition}[\textbf{Reinforcement bias}] 
\label{thm:reinforcement_bias_main} The magnitude of probability gain for a given corect trajectory is directly proportional to its probability under the base policy. Formally, if the reward update mechanism has access to the complete set of correct trajectories ($\widehat{r}(\tau)=1$ for all correct trajectories), then for any correct trajectory $\tau$ and $\beta > 0$, we have 
 \begin{align*}    
 \vanilla(\tau) - \base(\tau) \propto \base(\tau).
 \end{align*} 
\end{proposition}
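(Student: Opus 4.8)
The plan is to exploit the closed-form solution of the KL-regularized objective in Eq.~\eqref{eq:vanilla_GRPO}. First I would recall that for the problem $\max_\pi \mathbb{E}_{\tau \sim \pi}[\widehat{r}(\tau)] - \beta\, \mathbb{D}_{\text{KL}}(\pi \| \base)$, a standard Lagrangian argument over the probability simplex yields the Gibbs (exponential-tilting) form
\[
\vanilla(\tau) = \frac{1}{Z}\,\base(\tau)\,\exp\!\left(\frac{\widehat{r}(\tau)}{\beta}\right), \qquad Z = \sum_{\tau'} \base(\tau')\,\exp\!\left(\frac{\widehat{r}(\tau')}{\beta}\right).
\]
This is the one nontrivial analytic input. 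I would derive it by differentiating the objective with respect to each $\pi(\tau)$ subject to the normalization constraint $\sum_\tau \pi(\tau) = 1$, then solving the resulting stationarity condition and confirming that the unique stationary point is the global maximizer.

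Next I would specialize to the regime of the proposition, where the verifier has identified every correct trajectory, so that $\widehat{r}(\tau) = \mathbf{1}[\tau \in \mathcal{C}]$ holds exactly. The crucial observation is that the tilting factor $\exp(\widehat{r}(\tau)/\beta)$ takes only two values: it equals $e^{1/\beta}$ on the entire correct set $\mathcal{C}$ and equals $1$ on its complement. Because this factor is \emph{constant} across all of $\mathcal{C}$, for any correct $\tau$ we get $\vanilla(\tau) = (e^{1/\beta}/Z)\,\base(\tau)$; the optimal update simply rescales every correct trajectory by one and the same multiplier, leaving their relative weights untouched.

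I would then evaluate $Z$ by splitting the sum over correct and incorrect trajectories. Writing $p_{\mathcal{C}} = \sum_{\tau' \in \mathcal{C}} \base(\tau')$ for the base mass on correct trajectories, this gives $Z = e^{1/\beta} p_{\mathcal{C}} + (1 - p_{\mathcal{C}})$, a quantity independent of the particular correct trajectory. Substituting and subtracting yields
\[
\vanilla(\tau) - \base(\tau) = \left(\frac{e^{1/\beta}}{Z} - 1\right)\base(\tau),
\]
and since $e^{1/\beta} > 1$ for $\beta > 0$ while $Z$ is a convex combination of $e^{1/\beta}$ and $1$, the bracketed constant is strictly positive. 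As this constant does not depend on $\tau$, the proportionality $\vanilla(\tau) - \base(\tau) \propto \base(\tau)$ follows for every correct trajectory.

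The main obstacle is conceptual rather than computational: the entire conclusion hinges on the reward being \emph{uniform} across the correct set, which reduces the exponential tilt to a single scalar and leaves the within-$\mathcal{C}$ weighting unchanged, so the absolute probability gain inherits $\base(\tau)$ directly. The only step demanding genuine care is justifying the Gibbs closed form (existence, uniqueness, and global optimality over the simplex); once that is established, the remaining algebra is immediate.
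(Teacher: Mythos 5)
Your proposal is correct and follows essentially the same route as the paper: it derives the Gibbs closed form $\vanilla(\tau)\propto\base(\tau)\exp\!\left(\widehat{r}(\tau)/\beta\right)$ via a Lagrangian stationarity argument over the simplex (the paper's Lemma~\ref{lm_leaf_distribution}) and then writes $\vanilla(\tau)-\base(\tau)=\base(\tau)\left[\left(\exp(\widehat{r}(\tau)/\beta)-\mathcal{Z}\right)/\mathcal{Z}\right]$, noting the bracket is a single $\tau$-independent constant on the correct set because the reward is uniform there. Your explicit evaluation $\mathcal{Z}=e^{1/\beta}p_{\mathcal{C}}+(1-p_{\mathcal{C}})$ is a harmless elaboration the paper omits (with the minor caveat that strict positivity of the multiplier requires $p_{\mathcal{C}}<1$, though proportionality itself does not), so the two proofs coincide in substance.
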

\end{tcolorbox}
Proposition~\ref{thm:selection_bias_main} reveals a \textit{selection bias}: among correct trajectories, those with higher base probabilities are more likely to be reinforced. In addition, Proposition~\ref{thm:reinforcement_bias_main} shows that there is a \textit{reinforcement bias}: these same high-probability correct trajectories further receive disproportionately larger boosts, further amplifying the model’s existing preferences and sharpening the distribution. The proofs for Proposition~\ref{thm:reinforcement_bias_main} and Proposition~\ref{thm:selection_bias_main} are provided in Appendix~\ref{sec:proof_sharpening}. These results are derived by directly calculating the expression for the fine-tuned policy and analyzing its resulting probability distribution across trajectories.
Crucially, the analysis of reinforcement and selection biases pertains specifically to correct trajectories. Conversely, incorrect trajectories exhibit an inverse dynamic: high-probability errors are subject to stronger penalization. Consequently, the probability distribution over incorrect trajectories is flattened rather than sharpened.

\subsection{The correctness-diversity tradeoff}
\paragraph{A reward function to mitigate diversity collapse.}
To counter diversity collapse, we modify the reward function to discourage the reinforcement of high-probability trajectories.
Since Lemma~\ref{lm_leaf_distribution} implies that $\pi^\star(\tau) \propto \exp(\widehat{r}(\tau)/\beta)$, we define the modified reward
$$
\tilde{r}(\tau) = \widehat{r}(\tau) - \underbrace{\gamma_{\mathrm{ent}} \log \pi_{\mathrm{base}}(\tau)}_{\text{entropy bonus}},
$$
where $\gamma_{\mathrm{ent}} > 0$ controls the strength of regularization. The additional entropy bonus encourages the policy to assign more weight to trajectories that were less likely under the base policy countering the selection and reinforcement biases.

While such a change is natural, and has been previously proposed, we see that the gains of an entropy bonus are not universal. In Section~\ref{sec_experiment_entropy} (Fig~\ref{fig:pkpo_entropy}), we test out the entropy bonus objective and find that entropy bonus improves performance on Countdown but hurts performance (and underperforms standard GRPO) on MATH500.

Conversely, other works advocate for an \textit{entropy penalty} i.e. adding rather than subtracting $\gamma_\text{ent} \log \pi_\text{base}(\tau)$  rather than an entropy bonus to improve performance~\citep{gao2025oneshotentropyminimization, agarwal2025unreasonable}. We see that entropy penalty in fact improves performance on MATH500, while underperforming on Countdown. 

\textbf{This apparent contradictory benefits of both entropy bonus and entropy penalty suggests the optimal strategy is task-dependent.} Intuitively, entropy bonus improves diversity but at the cost of correctness. Such a tradeoff has also been reported with other methods such as increasing temperature~\citep{dang2025weight}. On the other hand, entropy penalty does the reverse and improves correctness but at the cost of diversity. For tasks with high inherent diversity in correct solutions, the gain in diversity offers a significant benefit and an entropy bonus overall works out to be beneficial; for tasks with low solution diversity, the increase in diversity does not offer a benefit that outweighs a potential decrease in correctness, and an entropy penalty ends up being beneficial. 

We can observe this more precisely by defining a \textit{Solution Multiplicity} metric, defined as the average number of unique correct solutions per problem:
\begin{align}
\label{eq:solution_multiplicity}
\text{Solution Multiplicity}(\mathcal{X}) = \textstyle\frac{1}{|\mathcal{X}|} \sum_{x \in \mathcal{X}} A(x),
\end{align}where $A(x)$ is the number of correct solutions for problem $x$.

We experimentally test the performance of entropy bonus/penalty on a few other datasets and find that entropy bonus is more effective when Solution Multiplicity is high, whereas an entropy penalty performs better when it is low (Section~\ref{sec_experiment_entropy}). 

Overall, entropy bonus addresses diversity collapse but at the cost of correctness. Furthermore, both entropy maximization and minimization have been proposed as valid modifications, but neither is universally optimal across all tasks.

\subsection{A differential approach to mitigating the tradeoff}
\label{sec:theory-reward}

The natural mitigation of diversity collapse---entropy bonus---is not the perfect solution because it decreases correctness. 
\begin{center}
\emph{Is this tradeoff inherent, or is there a way to simultaneously improve both the correctness and diversity?}
\end{center}

We make a key observation: \textbf{we only care about diversity over correct (high reward) trajectories}. Also, reinforcement and selection biases pertains specifically to correct trajectories. Hence, one must redefine the desired diversity metric to focus exclusively on correct trajectories. 

We formalize our definition in the simple theoretical framework described previously in Section \ref{sec:theory-setup}. We then describe a new differential reward modification that assigns different entropy terms to the positive and negative trajectories. Finally, we theoretically show that our proposed differential loss mitigates the tradeoff between diversity (over high-reward trajectories) and correctness. 


\begin{definition}[Correctness and Correct-Solution Diversity]
\label{def:correctness_diversity_measurement}
For any policy $\pi$, we define its correctness as $C(\pi)=\textstyle\sum_{\tau \in \mathcal{C}}\pi(\tau)$. We use the normalized variance on correct trajectories to measure diversity over correct solutions. Namely, we define $\sigma(\pi)=[\textstyle\sum_{\tau\in\mathcal{C}}\pi(\tau)^2-C(\pi)^2]/C(\pi)^2$.
\end{definition}
A small normalized variance indicates that the probability distribution is more uniform (i.e., less "sharp"). In such a distribution, the probability mass is not concentrated on a few high-probability trajectories but is spread more evenly across a wider set of solutions. This directly translates to higher diversity, as a greater variety of trajectories are likely to be sampled. Therefore, a policy with lower normalized variance exhibits higher diversity. Based on this insight, we consider the following differential reward modification:
\begin{align}
\label{eq:reward_modify_ds_1}
    r_{DS}(\tau) = \begin{cases}
    \widehat{r}(\tau) - \gamma_{\textrm{DS}} \cdot \log (\base(\tau)) & \text{if } \widehat{r}(\tau)>0 \quad \text{(correct trajectories)}\\
    \widehat{r}(\tau) & \text{if } \widehat{r}(\tau)\leq 0\quad \text{(incorrect trajectories)}.
\end{cases}
\end{align}
We then proceed to compare $\our$ that maximizes the reward above, and $\en$ that maximizes the standard entropy bonus modification to improve diversity.
\begin{align*}
&\en=\textstyle\arg\textstyle\max\limits_{\pi}\mathbb{E}_{\tau\sim\pi}\left[\widehat{r}(\tau)-\gamma_{\textrm{ent}} \cdot \log (\base(\tau))\right]-\beta_{\textrm{ent}}\cdot \mathbb{D}_{\text{KL}}(\pi||\base),\\
&\our=\textstyle\arg\textstyle\max\limits_{\pi}\mathbb{E}_{\tau\sim\pi}\left[r_{\textrm{DS}}(\tau)\right]-\beta_{\textrm{DS}}\cdot \mathbb{D}_{\text{KL}}(\pi||\base).
\end{align*}

Here, $\beta_{\textrm{ent}}$ and $\beta_{\textrm{DS}}$ denote the KL coefficients for the two RL algorithms we are comparing, which use different reward modifications. With the definitions above, we aim to show that our differential reward modification performs better in both correctness and diversity. To this end, we present the following theoretical result:

\begin{tcolorbox}[
  enhanced, breakable,
  colback=myLightBlue!10,
  colframe=black!12, boxrule=0.35pt, arc=1mm,
  title={\textbf{Theoretical Benefit of Differential Reward Modification}},
  coltitle=black, fonttitle=\sffamily\bfseries,
  colbacktitle=blue!10!white, 
  colback=blue!2!white, 
  boxed title style={
    sharp corners, boxrule=0pt,
    top=3pt, bottom=3pt, left=4mm, right=4mm,
    borderline={0.5pt}{0pt}{black!10}         
  },
  attach boxed title to top left={xshift=4mm,yshift*=-1.2mm},
  boxsep=2mm, top=2mm, bottom=2mm, left=4mm, right=4mm,
  before skip=10pt, after skip=10pt
]
\begin{theorem}
\label{thm:superiority_of_our_method}
Assume the model have correct estimation for the reward of all trajectories. For any parameters $\gamma_{\textrm{ent}} \geq 0$ and $\beta_\textrm{ent}>0$ used in Eq.~\ref{eq:vanilla_GRPO} (for $\en$) that satisfy a proximity constraint $K_\rho(\en, \base) \leq \kappa$, there exist parameters $\gamma_{\textrm{DS}} \geq 0$ and $\beta_\textrm{DS} > 0$ for $\our$ such that it also satisfies $K_\rho(\our, \base) \leq \kappa$, and the following inequalities hold:
\begin{align*}
    C(\our) \geq C(\en) \quad \text{and} \quad \sigma(\our) \geq \sigma(\en).
\end{align*}
This result holds for $K_\rho(\pi, \base)\in\{\mathbb{D}_{\text{KL}}(\pi \| \base),\mathbb{D}_{\text{KL}}(\base \| \pi),\mathbb{D}_{\chi^2}(\pi \| \base),\mathbb{D}_{\chi^2}(\base \| \pi)\}$.
\end{theorem}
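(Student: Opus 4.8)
The plan is to reduce the comparison to closed forms via the Gibbs-form optimum of the KL-regularized objective (Lemma~\ref{lm_leaf_distribution}), which gives $\pi^\star(\tau)\propto\base(\tau)\exp(R(\tau)/\beta)$ for its effective reward $R$. Substituting the entropy-bonus reward tempers the whole base law, $\en(\tau)\propto\base(\tau)^{\alpha_e}\exp(\widehat r(\tau)/\beta_\textrm{ent})$ with $\alpha_e:=1-\gamma_\textrm{ent}/\beta_\textrm{ent}$, whereas the differential reward tempers only the correct branch: $\our(\tau)\propto\base(\tau)^{\alpha_d}e^{1/\beta_\textrm{DS}}$ for $\tau\in\mathcal C$ and $\our(\tau)\propto\base(\tau)$ for $\tau\notin\mathcal C$, with $\alpha_d:=1-\gamma_\textrm{DS}/\beta_\textrm{DS}$. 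Writing $S_C(\alpha)=\sum_{\tau\in\mathcal C}\base(\tau)^\alpha$, I would record two facts. First, $\sigma(\pi)=S_C(2\alpha)/S_C(\alpha)^2-1$ is a functional of the correct-conditional distribution alone, so it depends on the policy only through the correct-branch exponent $\alpha$. Second, at fixed $\alpha$ the correctness $C(\pi)$ is continuous and strictly increasing in the reward strength $\lambda:=e^{1/\beta}$, sweeping an interval $(C_{\min},1)$.

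The engine of the proof is a chain-rule decomposition of each proximity functional. Splitting the defining sum over $\mathcal C$ and $\mathcal C^c$ and factoring every policy into its correct mass $C$, its correct-conditional, and its incorrect-conditional, the two KL variants become exactly additive, $K_\rho(\pi\|\base)=\Phi_\rho(C)+w_C\,\delta^C_\rho+w_I\,\delta^I_\rho$, while the two $\chi^2$ variants become separable, $K_\rho(\pi\|\base)+1=\Phi_\rho(C)\,M_C+\Psi_\rho(C)\,M_I$ with $M_C,M_I\ge1$. In every case the incorrect-branch contribution ($\delta^I_\rho$, or $M_I$) attains its minimum exactly when the incorrect-conditional equals the base incorrect-conditional. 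The decisive observation is that $\our$ preserves the base incorrect-conditional verbatim, so it pays the minimal incorrect-branch cost (namely $\delta^I_\rho=0$ or $M_I=1$), whereas $\en$ tempers the incorrect branch to exponent $\alpha_e$ and therefore pays a strictly positive excess whenever it actually perturbs that branch ($\alpha_e\ne1$).

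With this in hand I would simply match. Set $\alpha_d=\alpha_e$, which is legitimate because $\gamma_\textrm{DS}=\beta_\textrm{DS}(1-\alpha_e)\ge0$ for any $\beta_\textrm{DS}>0$; by the first structural fact this forces $\sigma(\our)=\sigma(\en)$, so the diversity comparison holds with equality. By the second fact, whenever the target correctness $C_e:=C(\en)$ lies in the attainable interval I choose $\beta_\textrm{DS}$ so that $C(\our)=C_e$; the decomposition then yields $K_\rho(\our\|\base)\le K_\rho(\en\|\base)\le\kappa$, since at identical $(\alpha,C)$ the two functionals differ only by ent's positive incorrect-branch excess. This already gives $C(\our)\ge C(\en)$ and $\sigma(\our)\ge\sigma(\en)$ inside the $\kappa$-ball, and the freed budget $\delta>0$ lets me nudge $\beta_\textrm{DS}$ to push $C(\our)$ strictly above $C_e$ by continuity of $K_\rho$ in $C$, converting the correctness gain into a strict one.

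The step I expect to be the main obstacle is making the decomposition argument uniform across all four $K_\rho$ together with the realizability boundary. The two KL decompositions are transparently additive, but the $\chi^2$ functionals are only multiplicatively separable, so I must verify that the incorrect factor genuinely decouples, that $M_I=1$ characterizes the base incorrect-conditional, and that the residual $C$-dependence does not spoil the comparison at fixed $(\alpha,C)$. The delicate corner is $C_e\le C_{\min}$: then no $\our$ at exponent $\alpha_e$ can be dialed down to $C_e$, but every such policy already exceeds $C_e$ in correctness, so it remains only to confirm that the least-aggressive choice ($\beta_\textrm{DS}\to\infty$) still sits in the $\kappa$-ball---plausible from the same decomposition, since it trades ent's incorrect-branch excess and larger split term for a cheaper split term at the same correct-branch exponent, but this must be checked divergence by divergence. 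Finally, the degenerate case $\gamma_\textrm{ent}=0$ ($\alpha_e=1$) makes $\en$ and $\our$ coincide and all inequalities hold as equalities.
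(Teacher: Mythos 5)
Your proposal is correct and is, at its mathematical core, the same proof as the paper's: you use the same closed forms (Lemmas~\ref{lm_leaf_distribution_only_add_entropy} and~\ref{lm_leaf_distribution_split_reward}), make the same pivotal parameter choice $\gamma_{\textrm{DS}}/\beta_{\textrm{DS}}=\gamma_{\textrm{ent}}/\beta_{\textrm{ent}}$, match correctness by tuning $\beta_{\textrm{DS}}$, and conclude by comparing divergences at the matched correctness level. Where you genuinely differ is in the packaging of that comparison: the paper proves four separate lemmas (\ref{lm:superiority_of_our_method_reverse_KL}--\ref{lm:superiority_of_our_method_forward_chi}), writing each $K_\rho$ explicitly as a function of $C$ and invoking Jensen's inequality for the KL cases and Cauchy--Schwarz for the $\chi^2$ cases, whereas your chain-rule/separability decomposition exposes all four as instances of a single fact---$\our$ preserves the base conditional on incorrect trajectories and so pays the minimal incorrect-branch cost, while $\en$ tempers that branch and pays a nonnegative excess. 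Indeed, the paper's Jensen step is exactly your $\delta^I_\rho\ge 0$ and its Cauchy--Schwarz step is exactly your $M_I\ge 1$, so nothing new is being proved, but your version buys uniformity across the four divergences and, more importantly, is careful on two points the paper elides. First, the appendix lemmas only ever establish the correctness inequality; the diversity claim $\sigma(\our)\ge\sigma(\en)$ is asserted without argument, and your observation that $\sigma$ is a functional of the correct-branch exponent alone, forcing $\sigma(\our)=\sigma(\en)$ under the matched choice, is precisely the missing step. Second, you correctly flag the realizability corner $C(\en)\le C_{\min}$, where no $\beta_{\textrm{DS}}>0$ can match the correctness downward; the paper silently assumes matching is feasible, and your fallback (the $\beta_{\textrm{DS}}\to\infty$ limit automatically gives $C(\our)>C(\en)$, with the constraint $K_\rho(\our,\base)\le\kappa$ then needing a per-divergence check) is the right repair, though as you note it remains to be verified divergence by divergence. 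One small caution: your closing move, spending the freed budget to make the correctness gain strict, is not needed for the theorem (only weak inequalities are claimed) and quietly assumes the incorrect-branch excess is strictly positive and that $K_\rho$ varies continuously and controllably in $\beta_{\textrm{DS}}$ near the matched point; the former fails in degenerate cases (e.g., $\gamma_{\textrm{ent}}=0$, or a base policy whose incorrect-conditional is invariant under tempering), so either drop this step or state those nondegeneracy conditions.
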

\end{tcolorbox}

\noindent The KL-divergence constraint ($K_\rho(\cdot, \base) \leq \kappa$) is a practical necessity and a standard assumption in prior work \citep{setlur2025scaling}. It prevents the fine-tuned policy from deviating excessively from the base model, thereby retaining pre-trained knowledge and avoiding catastrophic forgetting. Thus, Theorem~\ref{thm:superiority_of_our_method} demonstrates that differential reward modification (for $\our$) achieves superior correctness ($C$) and diversity ($\sigma$) while adhering to the same proximity constraints as the global penalty method (for $\en$).
\subsection{Differential Smoothing with Entropy Bonus and Entropy Penalty}
\label{sec:theory_differential_reward_definition}
We proved that increasing entropy on correct trajectories alone can mitigate the tradeoff with correctness. Additionally, our analysis indicates that reducing the entropy (diversity) of incorrect trajectories enhances model correctness without detriment to the diversity of correct generations.
With this perspective, let's revisit entropy penalty which we saw improves performance on some tasks with low solution diversity such as MATH500 but underperformed on tasks with high solution diversity such Countdown. Interestingly, theorem~\ref{thm:superiority_of_our_method} interestingly holds even if we apply an entropy penalty exclusively on negative trajectories. In other words, an entropy penalty on negative trajectories does not impact the policy's diversity over correct responses and hence such a modification could prevent

Motivated by this finding, as well as the empirical observation that entropy penalties can improve correctness \citep{gao2025oneshotentropyminimization, agarwal2025unreasonable}, we propose our final method. We term this approach Differential Smoothing (DS), which applies an entropy bonus exclusively to positive samples and an entropy penalty exclusively to negative samples:
\begin{equation}
\label{eq_n_p_modified_reward}
r_\textrm{DS} (\tau) = 
\begin{cases}
    \widehat{r}(\tau) - \gamma_p \cdot \log(\base(\tau)) & \text{if } \widehat{r}(\tau) > 0 \quad \text{(correct trajectories)} \\
    \widehat{r}(\tau) + \gamma_n \cdot \log(\base(\tau)) & \text{if } \widehat{r}(\tau) \leq 0 \quad \text{(incorrect trajectories)}, 
\end{cases}
\end{equation}
where $\gamma_p, \gamma_n \geq 0$ are hyperparameters. This formulation generalizes the reward modification strategy in \eqref{eq:reward_modify_ds_1}. In the following sections, we first detail the implementation of differential smoothing in GRPO. We will then demonstrate empirically that differential smoothing achieves a significant advantage: it simultaneously increases both correctness (Pass@1) and diversity (Pass@K) across different tasks, outperforming other methods that aim to improve diversity.

\section{Empirical evaluation of differential smoothing GRPO}\label{sec:experiments}


In this section, we empirically evaluate the effectiveness of our proposed reward modification (Eq.\ref{eq:reward_modify_ds_1}) in LLM reinforcement finetuning. 

We first recap the popular GRPO algorithm for LLM fine-tuning with RL and present our adaptation, DS-GRPO that incorporates the differential reward described in Section~\ref{sec:theory_differential_reward_definition}. We then compare DS-GRPO with GRPO on Countdown and mathematical reasoning. We further compare our proposed method with other methods aimed at improving diversity, including entropy-based approaches and other recently proposed techniques. Our key finding is that DS-GRPO consistently outperforms both GRPO and prior approaches on all tasks. In contrast, previous methods only outperform baselines on a subset of tasks because they fail to resolve the inherent tradeoff between correctness and diversity. We further analyze entropy-based approaches (both entropy bonus and entropy penalty) and show why their gains are uneven and fundamentally limited. In contrast, our differential reward formulation offers a principled way to enhance diversity over correct trajectories, leading to consistent improvements across the board.

\subsection{Preliminaries: Group Relative Policy Optimization}

We adopt \emph{Group Relative Policy Optimization} (GRPO)~\citep{shao2024deepseekmath} as our training backbone. 
For each input $x$ sampled from the training set, the policy decodes a group of $G$ completions 
$\{y_i\}_{i=1}^{G}\sim \pi_{\theta_{\mathrm{old}}}(\cdot\mid x)$, where $\pi_{\theta_{\mathrm{old}}}$ denotes the behavior policy
used to collect the batch (the policy parameters at the previous update).
Let $r_i=r(y_i)$ be the scalar reward of completion $y_i$, and denote by $\mu(\{r_j\}_{j=1}^G)$ and $\sigma(\{r_j\}_{j=1}^G)$ 
the mean and standard deviation of $\{r_j\}_{j=1}^G$, respectively. 
GRPO replaces the learned critic with a \emph{group baseline} and uses the group-standardized advantage 
\begin{equation}
\label{eq:group-adv}
A_i
\;=\;
\textstyle\dfrac{r_i-\mu(\{r_j\}_{j=1}^{G})}{\sigma(\{r_j\}_{j=1}^{G})}.
\end{equation}
GRPO adopts a clipped objective with a forward KL regularizer to the fixed \emph{base policy} $\pi^b   $: 
\begin{align*}
\label{eq:grpo-ppo}
&\mathcal{J}_{\mathrm{GRPO}}(\theta)
=\mathbb E _{x} \mathbb E_{\{y_i\}_{i=1}^G \sim \pi _{\theta_{\mathrm{old}}}(\cdot  | q)}\\&\left [\textstyle\frac{1}{G}
\textstyle\sum_{i=1}^{G}\frac 1 {|y_i|} \sum_{t=1}^{|y_i|}
\min\!\big(
\rho_{i,t}(\theta)\,A_i,\;
\mathrm{clip}\big(\rho_{i,t}(\theta),\,1-\epsilon,\,1+\epsilon\big)\,A_i
\big)
\;-\;
\beta_{\mathrm{KL}}\mathbb{
D}_{\mathrm{KL}}\!\big(\pi_{\theta} \,\big\|\,\pi^b\big)\right ] ,
\end{align*}
where $\rho_{i,t}(\theta)=\frac{\pi_{\theta}(y_{i,t}\mid x, y_{i,<t})}{\pi_{\theta_{\mathrm{old}}}(y_{i,t}\mid x, y_{i,<t})}$ is the importance ratio.


\subsection{Reward Modification for Mitigating Diversity Collapse}\label{sec:DS-GRPO}
To operationalize the theoretical principles from Section~\ref{sec:theory-reward} within the GRPO framework, we propose \emph{Differential Smoothing GRPO} (DS-GRPO), a novel algorithm that reshapes the advantage function $A_i$ (Equation~\eqref{eq:adv-shaped}). This reshaping directly implements our theoretical reward modification for a detailed discussion). Specifically, for successful completions (where $r_i = 1$), we subtract the term $\gamma_p \log \pi_{\theta_{\mathrm{old}}}(y_i \mid x)$ from the advantage; conversely, for unsuccessful completions, we add the term $\gamma_n \log \pi_{\theta_{\mathrm{old}}}(y_i \mid x)$.

\begin{tcolorbox}[
  enhanced, breakable,
  colframe=black!12, boxrule=0.35pt, arc=1mm,
  title={Differential Smoothing GRPO (DS-GRPO)},
  coltitle=black, fonttitle=\sffamily\bfseries,
  colbacktitle=cyan!15!white,  
  colback=cyan!5!white,       
  boxed title style={
    sharp corners, boxrule=0pt,
    top=3pt, bottom=3pt, left=4mm, right=4mm,
    borderline={0.5pt}{0pt}{black!10}       
  },
  attach boxed title to top left={xshift=4mm,yshift*=-1.2mm},
  boxsep=1.5mm, top=1.5mm, bottom=1.5mm, left=4mm, right=4mm,
  before skip=10pt, after skip=10pt
]
\begin{equation}
\label{eq:adv-shaped}
A_i^\textrm{DS}
=
A_i
\;+\;
\begin{cases}
-\;\gamma_p\;\log \pi_{\theta_{\mathrm{old}}}(y_i\mid x), & \text{if } r_i=1,\\[4pt]
+\;\gamma_n\;\log \pi_{\theta_{\mathrm{old}}}(y_i\mid x), & \text{otherwise},
\end{cases}
\end{equation}
We plug the modified advantages $A_i^\textrm{DS}$ into the GRPO objective:
\begin{align*}
\label{eq:grpo-ppo}
&\mathcal{J}_{\mathrm{DS}}(\theta)
=\mathbb E _{x} \mathbb E_{\{y_i\}_{i=1}^G \sim \pi _{\theta_{\mathrm{old}}}(\cdot  | q)}\\&\left [\textstyle\frac{1}{G}
\textstyle\sum_{i=1}^{G}\frac 1 {|y_i|} \sum_{t=1}^{|y_i|}
\min\!\big(
\rho_{i,t}(\theta)\,A_i^\textrm{DS},\;
\mathrm{clip}\big(\rho_{i,t}(\theta),\,1-\epsilon,\,1+\epsilon\big)\,A_i^\textrm{DS}
\big)
\;-\;
\beta_{\mathrm{KL}}\mathbb{
D}_{\mathrm{KL}}\!\big(\pi_{\theta} \,\big\|\,\pi^b\big)\right ].
\end{align*}
\end{tcolorbox}

In our theoretical framework, we employ $\base$ as the bonus and penalty terms within the modified reward function. Notably, our analysis remains valid even if $\base$ is substituted with $\pi$. However, for practical implementation, we empirically select $\pi_{\theta_{\mathrm{old}}}$ due to its superior stability. A detailed justification linking the theory to this practical choice is provided in Appendix~\ref{sec:equivalence-theory-practice}. We evaluate DS-GRPO on the Countdown and MATH reasoning benchmarks across a range of models. As demonstrated in the subsequent sections, our method consistently improves both correctness (Pass@1) and diversity (Pass@K), outperforming existing diversity-promoting approaches on all evaluation metrics.
\subsection{Experimental Setup}
We detail our experimental setup below, covering task specifications, model architectures, and hyperparameter configurations.

\noindent\textbf{Countdown.} We evaluate the performance of DS-GRPO on the arithmetic reasoning task \textit{Countdown}, where $3$ to $4$ integers in $[1,99]$ are provided and the goal is to produce an equation that equals a target number using all given integers exactly once. Following \citet{tinyzero}, we use the Qwen2.5-3B-Instruct\citep{qwen2.5} as the backbone model.
For DS-GRPO, we set $\gamma_p=0.03,\gamma_n=0.01$. More experiment details are given in Appendix \ref{app:countdown detail}.

\noindent\textbf{Math Reasoning.} We further explore DS-GRPO on the task of math reasoning. We choose three models with varied sizes and response lengths, Qwen2.5-Math-1.5B \citep{qwen2.5}, Qwen3-1.7B \citep{qwen3technicalreport}, Qwen2.5-Math-7B \citep{qwen2.5} and Ministral-8B-Instruct \citep{mistral7b}. Our training dataset includes 14053 english-written questions from DAPO17k \citep{yu2025dapo} dataset and 12000 questions of the MATH12k dataset \citep{hendrycks2021measuring}. For each model, we train with a batch size of 32 until the training reward saturates, then evaluate on five commonly used benchmarks of math reasoning: MATH500 \citep{hendrycks2021measuring}, OlympiadBench \citep{he2024olympiadbench}, AMC23 \citep{amc23_2025_hf}, AIME24 \citep{aime2024_2025_hf} and AIME25 \citep{aime2025_opencompass}, with a temperature of 0.7. Following DAPO \citep{yu2025dapo}, we lift the high clip ratio to 0.25 with the low clip ratio ramained 0.20, and adopt their length-dependent loss aggregation method. Experimental details are in App.\ref{app:math-exp}.
\subsection{Comparison of DS-GRPO with Vanilla GRPO}
\label{sec:math_experiment}
 As shown in Figure \ref{fig:countdown_KL_temp}, DS-GRPO demonstrates remarkable robustness. It consistently enhances Pass@K (for all $K$) by $\approx 4\%$ compared to vanilla GRPO. Crucially, these performance gains are accompanied by a $4\times$ inference speedup. (See Appendix~\ref{app:countdown additional results} for additional results).

Figure~\ref{fig:math} extends our evaluation to three base models and five mathematical reasoning benchmarks. Our strategy yields substantial improvements, with Pass@1 gains of 0.2\%--2.9\% and Pass@64 gains of 0.5\%--6.7\% (detailed in Appendix~\ref{sec:appendix_math_experiment_result}). This demonstrates our method's ability to improve RL reasoning while mitigating diversity collapse. Furthermore, it delivers significant efficiency gains: DS-GRPO matches the Pass@64 of vanilla GRPO using only $k=16$ samples—yielding a nearly $4\times$ inference speedup—while simultaneously pushing the maximum achievable Pass@K. This uniform uplift underscores our approach's efficacy in enhancing both exploration and diversity.

\subsection{Comparison to Base Model}
Prior literature \citep{hochlehnert2025sober} indicates that RL fine-tuning can lead to a degradation in Pass@K performance relative to the base model. To demonstrate that our approach mitigates this diversity collapse, we compare the Pass@K metrics of the base model against those of GRPO and DS-GRPO. As shown below, while GRPO exhibits a performance drop compared to the base model, DS-GRPO achieves a consistent improvement. This result confirms that our proposed method effectively alleviates diversity collapse in RL.
\begin{table}[htbp] 
    \centering      
    \resizebox{\textwidth}{!}{%
        \begin{tabular}{lcccccccccccc}
            \toprule
            \textbf{Method ($\Delta$)} & \textbf{K=1} & \textbf{K=4} & \textbf{K=16} & \textbf{K=64} & \textbf{K=128} & \textbf{K=512} & \textbf{K=1024} & \textbf{K=2048} & \textbf{K=3072} & \textbf{K=4096} & \textbf{K=6144} & \textbf{K=8192} \\
            \midrule
            GRPO - Base    & +9.49 & +9.57 & +7.45 & +8.75 & +10.25 & +9.11 & +5.14 & +0.71 & \textbf{-1.44} & \textbf{-2.58} & \textbf{-3.68} & \textbf{-3.80} \\
            DS-GRPO - Base & +12.73 & +11.14 & +7.41 & +8.27 & +9.81 & +9.65 & +6.81 & +3.98 & +2.67 & +1.91 & +1.40 & +1.60 \\
            \bottomrule
        \end{tabular}%
    }
\end{table}

\subsection{Ablation Experiments on Hyperparameters}
To demonstrate the robustness of our method, we further evaluate performance across different hyperparameters, including sampling temperature, the KL coefficient $\beta_{\textrm{KL}}$, and the reward modification coefficients $\gamma_p$ and $\gamma_n$.

\noindent\textbf{Temperature and KL Coefficient.}
We evaluate the stability of DS-GRPO across varying sampling temperatures and KL coefficients ($\beta_{\textrm{KL}}$). Our results demonstrate consistent improvements over vanilla GRPO: DS-GRPO enhances Pass@K (for all $K$) by $\approx 4\%$ across the temperature range and by $\approx 3.2\%$ across different KL coefficients.

\noindent\textbf{Reward Modification Coefficient}
To isolate the contribution of each component in our reward modification strategy, we conduct an ablation study. We compare the full DS-GRPO algorithm against two specialized variants: \emph{DS-GRPO-Positive}, which only modifies the advantage for correct trajectories, and \emph{DS-GRPO-Negative}, which only modifies the advantage for incorrect trajectories.
Their respective advantage modifications are defined as follows:
\begin{align*}
    A_i^\textrm{DS+}=A_i-\gamma_p\;\log \pi_{\theta_{\mathrm{old}}}(y_i\mid x), \quad \text{if } r_i=1, \quad
    A_i^\textrm{DS-}=A_i+\gamma_n\;\log \pi_{\theta_{\mathrm{old}}}(y_i\mid x), \quad \text{if } r_i\neq1.
\end{align*}
The full DS-GRPO algorithm demonstrates superior performance over both of its individual components (DS-GRPO-Positive and DS-GRPO-Negative) for all $K$. Detailed results and discussion are available in Section \ref{sec:appendix_ablation}.
sharpening.





\begin{figure}[t]
    \centering
    \vspace{-30pt}
    \includegraphics[width=0.9\linewidth]{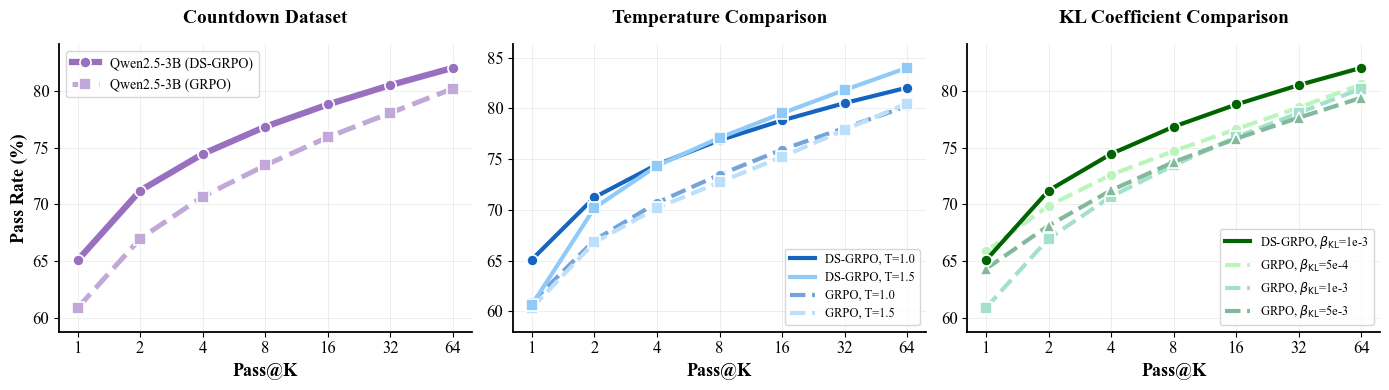}
    \vspace{-10pt}
    \caption{Pass@K performance of DS-GRPO on the Countdown task, compared with GRPO under varying decoding temperatures and KL coefficients.}
    \label{fig:countdown_KL_temp}
    \vspace{-10pt}
\end{figure}

\begin{figure*}[htbp]
\centering
\vspace{-0pt}
\includegraphics[width=1.0\columnwidth]{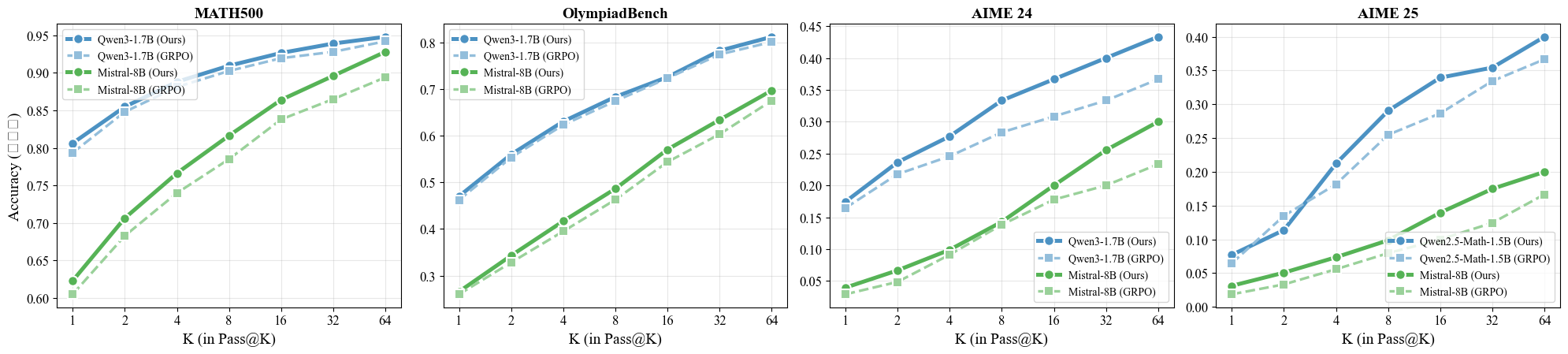} 
\vspace{-20pt}
\caption{Pass@K performance after reward modification, compared with vanilla GRPO. X-axis denotes K and y-axis denotes pass rates. Trained on DAPO\citep{yu2025dapo} and MATH\citep{hendrycks2021measuring} Dataset.} 
\label{fig:math}
\end{figure*} 

\subsection{Comparison with Other Methods for Increasing Diversity}\label{sec:comparison with previous methods}

We compare DS-GRPO with prior approaches that encourage diversity in reasoning through reward or advantage shaping, either by optimizing Pass@$K$ rate directly\citep{tang2025optimizing, walder2025pass, chen2025pass} or by applying rank-based penalties \citep{he2025rewarding}. For mathematical reasoning experiments, we use the Mistral-8B-Instruct\citep{mistral7b} as the base model. For PKPO and GR-PKPO, we use $K=4$ which performs best in previous work \citep{chen2025pass}; for rank-based panelty, we sweep across various configurations. See more result details in Appendix~\ref{app:countdown additional results}.

\noindent\textbf{Pass@K Optimization Methods.} Methods that directly optimize the Pass@$K$ metric use it as a reward signal \citep{tang2025optimizing, walder2025pass, chen2025pass}. However, this approach can assign zero reward to correct solutions, which increases gradient variance and harms training stability. Experimentally, these methods often trade correctness for diversity; for instance, GR-PKPO slightly improves Pass@64 at the cost of Pass@1 and is unstable on the Countdown task (Figure~\ref{fig:pkpo_entropy}, Left). In contrast, DS-GRPO consistently improves Pass@$K$ across all values of $K$.


\noindent\textbf{Comparison with Unlikeliness Reward Method.} Our work is conceptually similar to methods that reward unlikely solutions, such as the one proposed by \citet{he2025rewarding}. However, our approach has key advantages. DS-GRPO is derived from a theoretical framework that guarantees its optimality. More critically, it employs a differentiated reward strategy: it modifies rewards for correct trajectories to boost diversity, while a complementary modification for incorrect trajectories improves correctness. In contrast, methods like that of \citet{he2025rewarding} focus solely on diversity, which can harm correctness. Our experimental results (Figure~\ref{fig:pkpo_entropy}) validate this, showing DS-GRPO's superior performance across all values of $K$.

\noindent\textbf{Comparison with Other RL Reasoning Methods.} We further compare our approach with a recently proposed method that focuses on improving RL reasoning: CISPO \citep{chen2025minimax}. Empirical results demonstrate that our method consistently outperforms CISPO across all mathematical reasoning datasets. Detailed comparisons are provided in Appendix~\ref{sec:appendix_CISPO}.

\subsection{Comparing Differential Approach to Entropy Control Method}
\label{sec_experiment_entropy}

 \begin{figure}[h]
    \centering
    \vspace{-10pt}
        \includegraphics[width=0.48\linewidth]{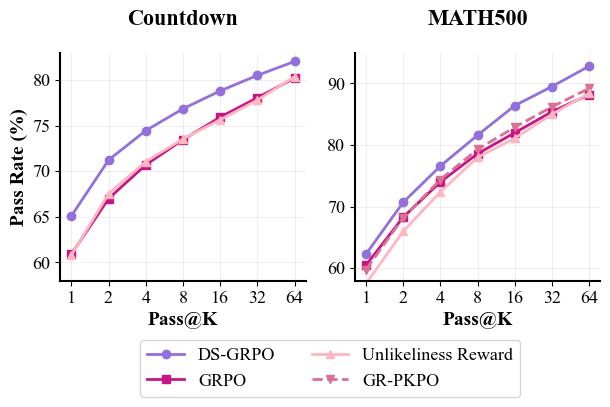}
        \includegraphics[width=0.48\linewidth]{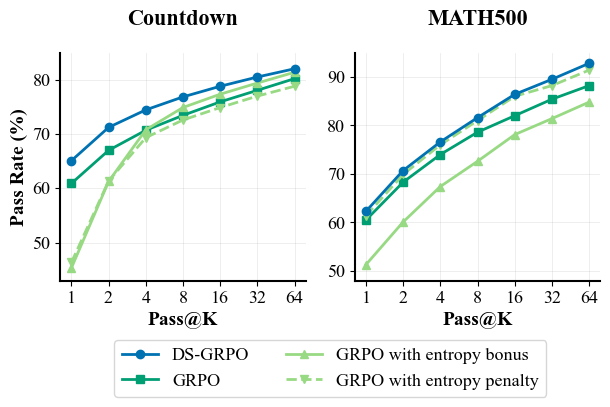}
       \vspace{-10pt}
       \caption{Performance comparisons on MATH500 and Countdown. 
        Left: Comparison among DS-GRPO, GRPO, GR-PKPO~\citep{chen2025pass}, and the Unlikeliness Reward method~\citep{he2025rewarding}. 
        Right: Comparison among DS-GRPO, GRPO, Entropy Regularization, and Entropy Minimization. 
        On the Countdown task, training with GR-PKPO collapses, so results are omitted.}
        \label{fig:pkpo_entropy}
        \vspace{-10pt}
\end{figure}
\label{sec:entropy_control}
Entropy control is a widely adopted strategy for enhancing the diversity and reasoning capabilities of large language models. In Section~\ref{sec:theory-reward}, we have theoretically shown that our differential control approach outperforms both baseline methods and those based on direct entropy maximization. This section presents an empirical comparison of our method against these prior entropy control strategies. We demonstrate that our method achieves superior performance compared to both the vanilla GRPO baseline and GRPO augmented with direct entropy maximization.

\subsubsection{DS-GRPO Outperform Entropy Based Method}
\noindent The role of entropy in RL fine-tuning is complex and subject to ongoing debate. While conventional methods employ entropy regularization to prevent policy collapse \citep{schulman2017proximal}, recent studies suggest that explicitly \textit{minimizing} entropy can, counter-intuitively, boost performance in certain scenarios \citep{agarwal2025unreasonable, tissue2025entropyschdule}. To position our method within this context, we compare it against two direct entropy control baselines: one that adds an \textbf{entropy bonus} to encourage exploration, and one that applies an \textbf{entropy penalty} to encourage exploitation. The respective optimization objectives are:
\begin{tcolorbox}[
  enhanced, breakable,
  colframe=black!12, boxrule=0.35pt, arc=1mm,
  title={Entropy Based Method},
  coltitle=black, fonttitle=\sffamily\bfseries,
  colbacktitle=cyan!15!white,  
  colback=cyan!5!white,       
  boxed title style={
    sharp corners, boxrule=0pt,
    top=3pt, bottom=3pt, left=4mm, right=4mm,
    borderline={0.5pt}{0pt}{black!10}       
  },
  attach boxed title to top left={xshift=4mm,yshift*=-1.2mm},
  boxsep=1.5mm, top=1.5mm, bottom=1.5mm, left=4mm, right=4mm,
  before skip=10pt, after skip=10pt
]
\begin{align*}
\mathcal{J}_{\mathrm{ent}^+} &= \mathcal{J}_{\mathrm{GRPO}}(\theta) - \eta_+ \textstyle\sum_{y}\pi_{\theta}(y\mid x)\log(\pi_{\theta}(y\mid x)), \quad \text{Entropy Bonus} \\
\mathcal{J}_{\mathrm{ent}^-} &= \mathcal{J}_{\mathrm{GRPO}}(\theta) + \eta_{-} \textstyle\sum_{y}\pi_{\theta}(y\mid x)\log(\pi_{\theta}(y\mid x)), \quad \text{Entropy Penalty} 
\end{align*}
\end{tcolorbox}
As illustrated in Fig.~\ref{fig:pkpo_entropy}, our method consistently outperforms both the entropy bonus and penalty approaches, regardless of the direction of the regularization. This suggests that a simple, global adjustment to entropy is less effective than our differentiated reward strategy.

\subsubsection{Deeper Discussion on Entropy Control}
\label{sec:entropy_control}
Our experiments (Fig.~\ref{fig:pkpo_entropy}) reveal a critical insight: the effectiveness of global entropy control is highly task-dependent. Specifically, an entropy bonus improves performance over vanilla GRPO on the Countdown task but hinders it on math reasoning benchmarks. Conversely, an entropy penalty benefits math reasoning while degrading performance on Countdown. \textit{Can we explain these differing trends?} 

\noindent\textbf{A Principle for Task-Aware Entropy Control.} 
Global entropy bonus does increase diversity but comes at the cost of correctness. This is part of our theoretical argument in Section~\ref{sec:theory-reward}. On the other hand, global entropy penalty increases correctness but comes at the cost of diversity. In tasks where diversity is more important, entropy bonus works well but in cases where diversity is less important, entropy penalty works well. 

\noindent To quantify the importance of solution diversity for a given task, we define Solution Multiplicity in Eq.~\ref{eq:solution_multiplicity}. We measured this metric across four tasks (sampling 200 problems each) and correlated it with the change in Pass@8 performance from adding an entropy bonus. The results are presented in Table~\ref{tab:config2_transposed}, with experimental details in Appendix~\ref{sec：entropy_control_appendix}.
\begin{table}[h]
\vspace{-10pt}
\label{tab:config2_transposed}
\centering
\begin{tabular}{l|l|l|l|llllll}
\hline\rowcolor{myLightBlue}
\textbf{Task} &Knight and Knaves   & Math  & Countdown-3 & Countdown\\
\hline\rowcolor{myLightBlue}
\textbf{Solution Multiplicity} &1.5 &3.7&6.5 &15.7\\
\hline\rowcolor{myLightBlue}
\textbf{Entropy Effect} (for Pass@8) &-9.0\%&-6.0\%&+1.0\%&+3.4\%  \\
\hline
\end{tabular}
\caption{Relationship Between Solution Multiplicity and Entropy Effect}
\vspace{-10pt}
\end{table}
We conclude that when the number of unique solutions is larger, the benefit of increasing diversity outweighs the potential trade-off in single-solution correctness. Consequently, an entropy bonus is more favorable than an entropy penalty. This leads to our guiding principle for entropy control: for tasks characterized by high Solution Multiplicity, entropy bonus is beneficial but for a task with low Solution Multiplicity, entropy penalty is beneficial.

\noindent\textbf{Differential Control for Correct and Incorrect Trajectories.} The underlying mechanism of DS-GRPO is similar to a form of \textit{differentiated entropy control}. An objective function representing this principle can be formulated as:
\begin{align*}
\mathcal{J}_{\mathrm{DS-En}} = \mathcal{J}_{\mathrm{GRPO}}
& - \eta_p \textstyle\sum_{y :r(y)>0} \pi_{\theta}(y \mid x) \log \pi_{\theta}(\tau \mid x) + \eta_n \sum_{y :r(y)\leq0} \pi_{\theta}(y \mid x) \log \pi_{\theta}(y \mid x) .
\end{align*}
By selectively increasing entropy only for positive samples, we attain the full diversity benefits of traditional entropy regularization, as we are only concerned with diversity among correct solutions. Concurrently,
decreasing entropy for negative samples reinforces correctness. This targeted approach enables simultaneous gains in both correctness (Pass@1) and diversity (Pass@K), offering a more robust and principled method for model fine-tuning across different tasks.

    

\begin{tcolorbox}[
  enhanced, breakable,
  colframe=black!12, boxrule=0.35pt, arc=1mm,
 title={\textbf{Takeaway: Effect and Principle for  Entropy Control}},
  coltitle=black, fonttitle=\sffamily\bfseries,
  colbacktitle=green!15!white,  
  colback=green!5!white,      
  boxed title style={
    sharp corners, boxrule=0pt,
    top=3pt, bottom=3pt, left=4mm, right=4mm,
    borderline={0.5pt}{0pt}{black!10}       
  },
  attach boxed title to top left={xshift=4mm,yshift*=-1.2mm},
  boxsep=1.5mm, top=1.5mm, bottom=1.5mm, left=4mm, right=4mm,
  before skip=10pt, after skip=10pt
]
\begin{itemize}[topsep=0pt, leftmargin=5pt] \setlength{\itemsep}{2pt}
    \item \textbf{Inherent Trade-off:} A global entropy bonus enhances diversity at the cost of correctness, whereas an entropy penalty improves correctness but curtails diversity.
    
    \item \textbf{Task-Dependent Strategy:} For tasks with high complexity, an entropy bonus is more advantageous. The gains in diversity from exploration outweigh the potential reduction in single-solution accuracy.
    
    \item \textbf{Superiority of Differentiated Control:} DS-GRPO consistently outperforms both global entropy bonus and penalty strategies. This demonstrates controlling entropy differentially for correct and incorrect trajectories successfully captures the benefits of both approaches—enhancing diversity and reinforcing correctness simultaneously.
\end{itemize}
\end{tcolorbox}

\section{Conclusion}


\noindent In this work, we conduct a formal, first-principles analysis of diversity collapse, from which we derive a novel method to enhance policy diversity. We empirically demonstrate that our method outperforms existing approaches and theoretically prove its optimality. Our analysis also clarifies the nuanced, task-dependent role of entropy in fine-tuning, leading to a principled control strategy that simultaneously improves both correctness (Pass@1) and diversity (Pass@K). A formal theoretical analysis of our entropy principle and the nuanced effects of entropy it reveals is left as a promising direction for future research.

\section{Acknowledgments}
We gratefully acknowledge support from Schmidt Sciences, NSF, Apple, Open Philanthropy, Google. We also thank Gaurav Rohit Ghosal, Manan Agarwal, Sachin Goyal, Kaiyue Wen, Sadhika Malladi, Yuda Song, Akshay Krishnamurthy, Audre He, Xingyu Dang, Chen Wu, Ziqian Zhong, Bingbin Liu for their thoughtful comments and suggestions that helped improve this work.

\newpage
\bibliography{iclr2026_conference}
\bibliographystyle{colm2025_conference}

\newpage
\appendix
\renewcommand \thepart{} 
    \renewcommand \partname{}
\part{Appendix} 

    \parttoc 
\newpage
\section{Additional Related Work}
\paragraph{Mitigating diversity collapse in the RL of reasoning models} 

Reinforcement Learning with Verifiable Rewards (RLVR) has emerged as the dominant paradigm for enhancing LLM reasoning on tasks like mathematics and programming \citep{guo2025deepseek, jaech2024openai}. This process is often framed as "sharpening," where the model learns to place greater probability mass on high-quality sequences, thereby amortizing the high inference-time cost of generation \citep{huang2024self, huang2022large, wang2022self, bai2022constitutional, pang2023language}.

However, this self-improvement risks reducing creativity. Recent studies observe that RLVR often induces "diversity collapse," where the generation distribution becomes overly concentrated \citep{dang2025weight, yue2025does}. This collapse manifests empirically: despite higher pass@1 performance, models trained with RLVR (RLVR-trained models) often underperform their base model on pass@k for large $k$. This degradation limits test-time scaling and raises a fundamental question: does RLVR truly expand a model’s reasoning capabilities, or does it merely sharpen the probability mass around solutions already present in the base distribution \citep{wu2025invisible, yue2025does}?

To mitigate the problem of diversity collapse, a variety of approaches have been proposed from different perspectives.
From the algorithm side, \citet{yu2025dapo} clip-higher strategy and the removal of KL divergence penalties in the GRPO of reasoning models, while \citet{he2025skywork} suggests adaptively using entropy as a form of regularization. 
\citet{zhu2025surprising} shows positive samples in RLVR sharpens
the distribution around the sampled correct trajectories, whereas penalizing negative samples preserves diversity, motivating a higher weighting of negative samples in the training objective. 
In terms of reward design, several studies have proposed making rewards explicitly diversity-aware. \citet{walder2025pass, chen2025pass} suggest directly using pass@k metric as the reward. \citet{he2025rewarding} introduces rank-based penalties within sampled groups to encourage diverse output, while  \citet{cui2025entropy}  incorporate entropy into advantage estimation to promote exploration. 
Other methods include interpolate the weights of the base model and the fine-tuned model \citep{dang2025weight}.

\paragraph{Controlling distribution entropy in RLVR}

The entropy of the policy distribution is a key internal indicator of a model’s exploration capability\citep{cui2025entropy, cheng2025reasoning}.  Various methods have been proposed to maintain high entropy during training in order to encourage exploration, including clipping higher, adding entropy bonus \citep{yu2025dapo, he2025skywork}, or selectively training on critical high-entropy tokens  \citep{wang20258020rulehighentropyminority}. Other studies report that RLVR improves performance at the expense of reduced policy entropy \citep{cui2025entropy}, and that simply minimizing entropy can effectively improve pass@1 accuracy\citep{agarwal2025unreasonable, gao2025oneshotentropyminimization}. \citet{tissue2025entropyschdule} further propose an entropy scheduling approach that maintains high entropy in the early stage to encourage exploration and reduces entropy later to improve final performance. In contrast to prior approaches, we treat correct and incorrect samples separately: bonusing entropy for correct samples and penalizing entropy for incorrect ones.  We demonstrate the superiority of this design both theoretically and empiricall.
\newpage
\section{Proof of Theorems}
\subsection{Proof of Proposition~\ref{thm:selection_bias} and Proposition~\ref{thm:reinforcement_bias}}
\label{sec:proof_sharpening}
\begin{lemma}
\label{lm_leaf_distribution}
The solution to the KL-regularized optimization problem:
\begin{align*}
    \pi^*_{\beta_{\textrm{ent}}} = \arg\max_{\pi} \left\{ \mathbb{E}_{\tau\sim\pi}r(\tau) - \beta_{\textrm{ent}} \cdot \mathbb{D}_{\text{KL}}(\pi||\base) \right\}
\end{align*}
has the following closed-form expression for a trajectory $\tau$:
\begin{align*}
    \pi^*_{\beta_{\textrm{ent}}(\tau)} = \frac{\left[\prod_{h=1}^{H}\pi_\textrm{base,h}(a_h\mid s_h)\right]\exp\left(\frac{1}{\beta_{\textrm{ent}}}r(\tau)\right)}{\sum_{\tau': s'_1=s}\left[\prod_{h=1}^{H}\pi_\textrm{base,h}(a'_h\mid s'_h)\right]\exp\left(\frac{1}{\beta_{\textrm{ent}}}r(\tau')\right)},
\end{align*}
where the summation in the denominator is over all valid trajectories $\tau'$ starting from the initial state $s$.
\end{lemma}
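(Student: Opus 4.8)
The plan is to recognize this as the standard KL-regularized reward-maximization problem, whose optimum is a Gibbs tilt of the base policy, and to prove it by the \emph{completing-the-KL} argument rather than via Lagrange multipliers, which keeps the optimality and uniqueness claims clean.

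First I would exploit the deterministic transition structure of the MDP introduced in Section~\ref{sec:theory-setup}. Because each action $a_h$ deterministically fixes $s_{h+1}$, a policy $\pi$ induces a distribution over the finite set of length-$H$ trajectories $\tau$ rooted at the initial state $s$ via the factorization $\pi(\tau)=\prod_{h=1}^H \pi_h(a_h\mid s_h)$, and likewise $\base(\tau)=\prod_{h=1}^H \pi_{\textrm{base},h}(a_h\mid s_h)$. Conversely, any probability vector on this trajectory set is realized by some policy, obtained by setting the per-step conditionals equal to the corresponding conditionals of the trajectory distribution. Hence maximizing over policies is equivalent to maximizing the objective $F(\pi)=\sum_\tau \pi(\tau) r(\tau) - \beta_{\textrm{ent}}\sum_\tau \pi(\tau)\log\frac{\pi(\tau)}{\base(\tau)}$ over the probability simplex on trajectories.

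Next I would introduce the candidate optimum $q(\tau) = \base(\tau)\exp(r(\tau)/\beta_{\textrm{ent}})/Z$ with normalizer $Z=\sum_{\tau'}\base(\tau')\exp(r(\tau')/\beta_{\textrm{ent}})$, which is exactly the claimed closed form once the factorizations of $\base$ are substituted. The key algebraic step is the identity $F(\pi) = \beta_{\textrm{ent}}\log Z - \beta_{\textrm{ent}}\mathbb{D}_{\text{KL}}(\pi\|q)$, obtained by expanding $\mathbb{D}_{\text{KL}}(\pi\|q)$ and collecting the reward and $\log\base$ terms. Since $\beta_{\textrm{ent}}\log Z$ is constant in $\pi$ and $\beta_{\textrm{ent}}>0$, maximizing $F$ is equivalent to minimizing $\mathbb{D}_{\text{KL}}(\pi\|q)$; by Gibbs' inequality the latter is nonnegative and vanishes uniquely at $\pi=q$, so $\pi^*_{\beta_{\textrm{ent}}}=q$, giving the stated formula.

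The main obstacle is really just the reduction in the first step: one must confirm that the factorized, per-step policy parameterization imposes \emph{no} additional constraint beyond lying in the trajectory simplex, so that the unconstrained Gibbs minimizer is attainable by a valid policy. This is precisely where determinism of transitions is essential—with stochastic transitions the induced trajectory measure would carry fixed transition kernels and the realizable set could be a strict subset of the simplex. The remaining steps (concavity of $F$, the completing-the-KL identity, and Gibbs' inequality) are routine, and I would only need to note that $Z$ is finite because the action space and horizon $H$ are finite.
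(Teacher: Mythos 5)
Your proof is correct, but it takes a genuinely different route from the paper. The paper proves this lemma by Lagrangian stationarity: it forms $\mathcal{L}(\pi,\mu)$ with a multiplier for the normalization constraint, sets $\partial\mathcal{L}/\partial\pi(\tau)=0$, solves for $\pi(\tau)\propto\base(\tau)\exp(r(\tau)/\beta_{\textrm{ent}})$, and normalizes. You instead complete the KL: writing $q(\tau)=\base(\tau)\exp(r(\tau)/\beta_{\textrm{ent}})/Z$, your identity $F(\pi)=\beta_{\textrm{ent}}\log Z-\beta_{\textrm{ent}}\,\mathbb{D}_{\text{KL}}(\pi\|q)$ checks out (expanding $\mathbb{D}_{\text{KL}}(\pi\|q)$ recovers exactly the reward term and $\mathbb{D}_{\text{KL}}(\pi\|\base)$ up to the constant $\log Z$), and Gibbs' inequality then gives the maximizer. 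Your approach buys two things the paper's argument leaves implicit: (i) \emph{global} optimality and uniqueness come for free from nonnegativity of KL, whereas first-order stationarity alone requires separately invoking strict concavity of the objective and ruling out boundary solutions $\pi(\tau)=0$, neither of which the paper addresses; (ii) you explicitly justify the reduction from per-step policies to arbitrary distributions on the trajectory simplex via determinism of transitions, correctly noting that with stochastic transitions the realizable set could be a strict subset of the simplex --- the paper silently treats the problem as an optimization over trajectory distributions from the outset. The paper's Lagrangian route is more mechanical and generalizes directly to the modified rewards in its later lemmas (e.g.\ Lemma~\ref{lm_leaf_distribution_only_add_entropy} and Lemma~\ref{lm_leaf_distribution_split_reward}, which reuse the same calculation with $r$ replaced by the shaped reward), though your argument would extend there just as easily by absorbing the $\log\base$ terms into the exponent of $q$. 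Your closing observations --- finiteness of $Z$ from the finite action space and horizon, and that the conditionals of $q$ define a valid policy on positive-probability prefixes --- are exactly the right hygiene for the reduction step.
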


\begin{proof}
The optimization problem can be written as finding a probability distribution $\pi(\tau)$ over trajectories that solves:
\begin{align*}
    \max_{\pi} \sum_{\tau} \pi(\tau)r(\tau) - \beta_{\textrm{ent}} \sum_{\tau} \pi(\tau) \ln\left(\frac{\pi(\tau)}{\base(\tau)}\right), \quad \text{subject to} \quad \sum_{\tau} \pi(\tau) = 1.
\end{align*}
We introduce a Lagrange multiplier $\mu$ for the probability constraint and form the Lagrangian $\mathcal{L}(\pi, \mu)$:
\begin{align*}
    \mathcal{L}(\pi, \mu) = \sum_{\tau}\pi(\tau) r(\tau) - \beta_{\textrm{ent}} \sum_{\tau}\pi(\tau)\left[\ln(\pi(\tau)) - \ln(\base(\tau))\right] - \mu\left(\sum_{\tau}\pi(\tau)-1\right).
\end{align*}
To find the optimal policy, we take the partial derivative of $\mathcal{L}$ with respect to $\pi(\tau)$ and set it to zero:
\begin{align*}
    \frac{\partial \mathcal{L}}{\partial \pi(\tau)} = r(\tau) - \beta_{\textrm{ent}} \left( \ln\left(\frac{\pi(\tau)}{\base(\tau)}\right) + 1 \right) - \mu = 0.
\end{align*}
Solving for $\pi(\tau)$, we obtain:
\begin{align*}
    \ln\left(\frac{\pi(\tau)}{\base(\tau)}\right) &= \frac{r(\tau)}{\beta_{\textrm{ent}}} - 1 - \frac{\mu}{\beta_{\textrm{ent}}} \\
    \implies \pi(\tau) &= \base(\tau) \exp\left(\frac{r(\tau)}{\beta_{\textrm{ent}}} - 1 - \frac{\mu}{\beta_{\textrm{ent}}}\right) = \base(\tau) \exp\left(\frac{r(\tau)}{\beta_{\textrm{ent}}}\right) \exp\left(-1 - \frac{\mu}{\beta_{\textrm{ent}}}\right).
\end{align*}
The term $\exp(-1 - \mu/\beta_{\textrm{ent}})$ is a constant determined by the normalization constraint $\sum_{\tau'} \pi(\tau') = 1$. Let the partition function be $\mathcal{Z} = \sum_{\tau'} \base(\tau')\exp\left(\frac{r(\tau')}{\beta_{\textrm{ent}}}\right)$. The normalization constant must be $1/\mathcal{Z}$, which gives the solution:
\begin{align*}
    \pi^*_{\beta_{\textrm{ent}}}(\tau) = \frac{\base(\tau)\exp\left(\frac{1}{\beta_{\textrm{ent}}}r(\tau)\right)}{\mathcal{Z}} = \frac{\base(\tau)\exp\left(\frac{1}{\beta_{\textrm{ent}}}r(\tau)\right)}{\sum_{\tau'}\base(\tau')\exp\left(\frac{1}{\beta_{\textrm{ent}}}r(\tau')\right)}.
\end{align*}
By substituting the definitions $\base(\tau) = \prod_{h=1}^{H}\base {_{,h}}(a_h\mid s_h)$, we arrive at the expression stated in the lemma. This completes the proof.
\end{proof}
\begin{proposition}[Selection Bias]
\label{thm:selection_bias} The probability that a correct trajectory’s likelihood increases is monotonically related to its initial probability under the base model. Formally, for any two correct trajectories $\tau_1, \tau_2$ and $\beta_{\textrm{ent}} > 0$, we have
\begin{align*}
\base(\tau_1) \geq \base(\tau_2) 
\;\;\implies\;\;
\mathbb{P}\!\left(\vanilla(\tau_1) > \base(\tau_1)\right) 
\;\;\geq\;\;
\mathbb{P}\!\left(\vanilla(\tau_2) > \base(\tau_2)\right).
\end{align*}
\end{proposition}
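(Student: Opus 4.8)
The plan is to exploit the closed form for $\vanilla$ given by Lemma~\ref{lm_leaf_distribution}, keeping in mind that the only source of randomness in the statement is the dataset sampled from $\base$ that is used to construct $\widehat{r}$. Writing $S$ for the (random) set of correct trajectories that appear in this dataset, we have $\widehat{r}(\tau)=1$ exactly for $\tau\in S$ and $\widehat{r}(\tau)=0$ otherwise, so applying the lemma with $\widehat{r}$ in place of $r$ yields
\begin{align*}
\vanilla(\tau) = \frac{\base(\tau)\exp(\widehat{r}(\tau)/\beta)}{\mathcal{Z}}, \qquad \mathcal{Z} = e^{1/\beta}\,p + (1-p) = 1 + p\,(e^{1/\beta}-1),
\end{align*}
where $p := \sum_{\tau'\in S}\base(\tau')$ is the base mass captured by $S$. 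Since $\beta>0$ gives $e^{1/\beta}>1$, this shows $\mathcal{Z}\geq 1$, with equality only when $p=0$.

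The key reduction is to show that, for a correct trajectory $\tau$, the event $\{\vanilla(\tau) > \base(\tau)\}$ coincides with the sampling event $\{\tau\in S\}$. Dividing by $\base(\tau)$, the increase $\vanilla(\tau)>\base(\tau)$ holds iff $\exp(\widehat{r}(\tau)/\beta) > \mathcal{Z}$. If $\tau\notin S$, the left side equals $1\leq\mathcal{Z}$, so the likelihood never strictly increases. If $\tau\in S$, the left side equals $e^{1/\beta}$, and the inequality $e^{1/\beta} > 1 + p\,(e^{1/\beta}-1)$ reduces to $p<1$; this holds under the mild assumption that the base model places positive probability on at least one incorrect trajectory, so that $p\leq\sum_{\tau'\in\mathcal{C}}\base(\tau')<1$ almost surely. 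Hence $\mathbb{P}(\vanilla(\tau)>\base(\tau)) = \mathbb{P}(\tau\in S)$.

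It then remains to compute $\mathbb{P}(\tau\in S)$ and to observe its monotonicity in the base probability. If the dataset consists of $N$ i.i.d.\ draws from $\base$ and the verifier deterministically accepts correct trajectories, then $\tau$ enters $S$ iff it is drawn at least once, giving $\mathbb{P}(\tau\in S) = 1 - (1-\base(\tau))^N$. This is strictly increasing in $\base(\tau)$, so $\base(\tau_1)\geq\base(\tau_2)$ immediately yields $\mathbb{P}(\tau_1\in S)\geq\mathbb{P}(\tau_2\in S)$, and the equivalence from the previous step delivers the claimed inequality.

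I expect the main obstacle to be the careful bookkeeping in the second step: establishing that the increase event is \emph{exactly} the sampling event requires correctly handling the normalization constant $\mathcal{Z}$, which itself depends on the random set $S$ through $p$, and isolating the innocuous side condition $p<1$. Once the randomness is correctly attributed to dataset sampling and $\mathcal{Z}$ is expressed in terms of $p$, the monotonicity argument in the final step is routine.
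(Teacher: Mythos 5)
Your proof is correct and follows essentially the same route as the paper's: reduce the event $\{\vanilla(\tau)>\base(\tau)\}$ to the sampling event $\{\tau\in S\}$ and then use the monotonicity of $1-(1-\base(\tau))^N$ in $\base(\tau)$. If anything, you supply rigor the paper omits—using the closed form of Lemma~\ref{lm_leaf_distribution} to verify that a positive estimated reward strictly increases the likelihood while an unsampled trajectory's likelihood cannot increase, and isolating the side condition $p<1$ (the base model must place some mass on incorrect trajectories), a step the paper's proof simply asserts.
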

\begin{proposition}[Reinforcement bias] 
\label{thm:reinforcement_bias} The magnitude of probability gain for a given trajectory is directly proportional to its probability under the base policy. Formally, if the reward update mechanism has access to the complete set of correct trajectories ($r(\tau)=1$ for all correct trajectories), then for any trajectory $\tau$ and $\beta_{\textrm{ent}} > 0$, we have 
 \begin{align*}    
 \vanilla(\tau) - \base(\tau) \propto \base(\tau).
 \end{align*} 
\end{proposition}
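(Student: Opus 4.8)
The plan is to reduce everything to the closed form already established in Lemma~\ref{lm_leaf_distribution}. The completeness hypothesis---that the verifier has recovered \emph{every} correct trajectory, so that $\widehat{r}(\tau) = r(\tau) = \mathbf{1}[\tau \in \mathcal{C}]$ exactly---means that $\vanilla$ is precisely the optimizer $\pi^*_{\beta_{\textrm{ent}}}$ of that lemma. Hence I may write $\vanilla(\tau) = \base(\tau)\exp(r(\tau)/\beta_{\textrm{ent}})/\mathcal{Z}$ with $\mathcal{Z} = \sum_{\tau'} \base(\tau')\exp(r(\tau')/\beta_{\textrm{ent}})$, and the whole argument becomes a short direct computation that exploits the binary structure of the reward.

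First I would record that the exponential weight takes only two values: writing $e := \exp(1/\beta_{\textrm{ent}})$, it equals $e$ on correct trajectories and $1$ on incorrect ones. Splitting the partition sum over $\mathcal{C}$ and its complement, and letting $c := \sum_{\tau' \in \mathcal{C}} \base(\tau')$ denote the total base mass on correct trajectories, the denominator collapses to the scalar $\mathcal{Z} = e\,c + (1-c)$, which depends only on $\beta_{\textrm{ent}}$ and $c$ and not on any individual $\tau$. Substituting into the gain for a correct trajectory gives $\vanilla(\tau) - \base(\tau) = \base(\tau)\bigl(e/\mathcal{Z} - 1\bigr) = \base(\tau)\cdot (e-1)(1-c)/\mathcal{Z}$. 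Since the factor $(e-1)(1-c)/\mathcal{Z}$ is a global constant independent of the particular $\tau$, this is exactly the asserted proportionality $\vanilla(\tau) - \base(\tau) \propto \base(\tau)$.

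There is no serious analytic obstacle here; the real content is interpretive, so the step I would be most careful about is verifying that the proportionality constant is both $\tau$-independent and of the correct sign. For $\beta_{\textrm{ent}} > 0$ we have $e > 1$, and in any nondegenerate instance $0 < c < 1$, so $(e-1)(1-c)/\mathcal{Z} > 0$: the gain is strictly positive and scales linearly with $\base(\tau)$, which is the precise sense in which high-probability correct trajectories receive disproportionately larger boosts. I would close by recording the companion computation for incorrect trajectories, where the weight is $1$ and the gain becomes $\base(\tau)\bigl(1/\mathcal{Z}-1\bigr) = -\base(\tau)\,c\,(e-1)/\mathcal{Z} < 0$; this confirms the remark in the main text that the distribution over incorrect trajectories is flattened (penalized) rather than sharpened, completing the picture.
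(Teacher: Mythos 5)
Your proof is correct and follows essentially the same route as the paper's: both invoke the closed form $\vanilla(\tau) = \base(\tau)\exp(r(\tau)/\beta_{\textrm{ent}})/\mathcal{Z}$ from Lemma~\ref{lm_leaf_distribution} and observe that the resulting factor multiplying $\base(\tau)$ is constant across all trajectories sharing the same reward value. Your version merely adds detail the paper leaves implicit --- the explicit evaluation $\mathcal{Z} = e\,c + (1-c)$, the positive constant $(e-1)(1-c)/\mathcal{Z}$ for correct trajectories, and the companion negative computation for incorrect ones --- all of which check out.
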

\begin{proof}
\textbf{Part 1: Monotonicity of Likelihood Improvement.}
Under the specified reward mechanism, a correct trajectory $\tau$ receives a positive reward if it is not missed in all $N$ independent samples. This occurs with probability $1 - (1-\base(\tau))^N$. A positive reward ensures that the likelihood of the trajectory increases after fine-tuning. Thus, the probability of improvement is:
\begin{align*}
    \mathbb{P}(\vanilla(\tau) > \base(\tau)) = 1 - (1-\base(\tau))^N.
\end{align*}
This function is monotonically increasing with respect to $\base(\tau)$ for $\base(\tau) \in [0, 1]$. Therefore, if $\base(\tau_1) \geq \base(\tau_2)$, the first claim holds.

\vspace{1em}
\textbf{Part 2: Proportionality of Probability Gain.}
From the closed-form solution for the optimal policy (as derived in Lemma~\ref{lm_leaf_distribution}), we have $\pi^*_{\beta_{\textrm{ent}}}(\tau) = \base(\tau) \exp(r(\tau)/\beta_{\textrm{ent}}) / \mathcal{Z}$, where $\mathcal{Z}$ is the partition function. The change in probability is:
\begin{align*}
    \pi^*_{\beta_{\textrm{ent}}}(\tau) - \base(\tau) &= \frac{\base(\tau)\exp(r(\tau)/\beta_{\textrm{ent}})}{\mathcal{Z}} - \base(\tau) 
    = \base(\tau) \left[ \frac{\exp(r(\tau)/\beta_{\textrm{ent}}) - \mathcal{Z}}{\mathcal{Z}} \right].
\end{align*}
The partition function $\mathcal{Z} = \sum_{\tau'}\base(\tau')\exp(r(\tau')/\beta_{\textrm{ent}})$ is a constant for a given policy $\base$ and reward function $r$. The term in the brackets is therefore constant for all trajectories $\tau$ that share the same reward value (e.g., all correct trajectories). Consequently, the probability gain is directly proportional to the initial probability $\base(\tau)$.
\end{proof}

\subsection{Proof of Theorem \ref{thm:superiority_of_our_method}}

\begin{lemma}
\label{lm_leaf_distribution_only_add_entropy}
The solution to the KL-regularized optimization problem with an entropy-based reward modification:
\begin{align*}
    \pi^*_{\beta_{\textrm{ent}},\gamma_{\textrm{ent}}} = \arg\max_{\pi} \left\{ \mathbb{E}_{\tau\sim\pi}\left[r(\tau) - \gamma_{\textrm{ent}}\log(\base(\tau))\right] - \beta_{\textrm{ent}} \cdot \mathbb{D}_{\text{KL}}(\pi||\base) \right\}
\end{align*}
is given by:
\begin{align*}
    \pi^*_{\beta_{\textrm{ent}},\gamma_{\textrm{ent}}}(\tau) = \frac{\left[\base(\tau)\right]^{1-\frac{\gamma_{\textrm{ent}}}{\beta_{\textrm{ent}}}}\exp\left(\frac{1}{\beta_{\textrm{ent}}}r(\tau)\right)}{\sum_{\tau'}\left[\base(\tau')\right]^{1-\frac{\gamma_{\textrm{ent}}}{\beta_{\textrm{ent}}}}\exp\left(\frac{1}{\beta_{\textrm{ent}}}r(\tau')\right)},
\end{align*}
where the summation in the denominator is over all valid trajectories $\tau'$.
\end{lemma}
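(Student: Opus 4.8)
The plan is to mirror the Lagrangian argument used in Lemma~\ref{lm_leaf_distribution}, treating $\pi$ as a finite-dimensional probability vector over trajectories and optimizing subject to the normalization constraint $\sum_\tau \pi(\tau) = 1$. The essential new observation is that the added reward term $-\gamma_{\textrm{ent}}\log\base(\tau)$ and the $+\beta_{\textrm{ent}}\log\base(\tau)$ term produced by expanding the KL divergence are both linear functionals of $\pi$, so they collect into a single effective coefficient on $\log\base(\tau)$. This is exactly what produces the modified exponent $1-\gamma_{\textrm{ent}}/\beta_{\textrm{ent}}$ on $\base(\tau)$, so the derivation is a near-verbatim adaptation of the earlier lemma with one extra bookkeeping term.

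Concretely, first I would write the objective as $\sum_\tau \pi(\tau)\bigl[r(\tau)-\gamma_{\textrm{ent}}\log\base(\tau)\bigr] - \beta_{\textrm{ent}}\sum_\tau \pi(\tau)\bigl[\log\pi(\tau)-\log\base(\tau)\bigr]$ and form the Lagrangian with a single multiplier $\mu$ enforcing $\sum_\tau\pi(\tau)=1$. Setting $\partial\mathcal{L}/\partial\pi(\tau)=0$ gives the stationarity condition $r(\tau)-\gamma_{\textrm{ent}}\log\base(\tau) - \beta_{\textrm{ent}}\bigl(\log\pi(\tau)-\log\base(\tau)+1\bigr)-\mu=0$. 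Solving for $\log\pi(\tau)$, the two $\log\base(\tau)$ contributions merge into $\bigl(1-\gamma_{\textrm{ent}}/\beta_{\textrm{ent}}\bigr)\log\base(\tau)$, and exponentiating yields $\pi(\tau)\propto[\base(\tau)]^{1-\gamma_{\textrm{ent}}/\beta_{\textrm{ent}}}\exp(r(\tau)/\beta_{\textrm{ent}})$, where the multiplier-dependent prefactor $\exp(-1-\mu/\beta_{\textrm{ent}})$ is a trajectory-independent constant. Imposing normalization fixes this constant to the reciprocal of the partition function $\sum_{\tau'}[\base(\tau')]^{1-\gamma_{\textrm{ent}}/\beta_{\textrm{ent}}}\exp(r(\tau')/\beta_{\textrm{ent}})$, recovering the stated closed form.

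There is no genuine difficulty in the algebra; the one point that deserves care is justifying that this stationary point is the global maximizer rather than a mere critical point. This follows because the objective is strictly concave in $\pi$: the reward terms are linear, and $-\beta_{\textrm{ent}}\mathbb{D}_{\textrm{KL}}(\pi\|\base)$ is strictly concave on the probability simplex for $\beta_{\textrm{ent}}>0$, since relative entropy is strictly convex. Hence the KKT stationary point is unique and optimal, and no second-order or boundary analysis is needed beyond noting that $\base(\tau)>0$ on the support so that $\log\base(\tau)$ is well defined. As a consistency check, setting $\gamma_{\textrm{ent}}=0$ collapses the exponent to $1$ and reduces the expression exactly to Lemma~\ref{lm_leaf_distribution}, confirming the derivation.
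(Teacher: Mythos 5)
Your proposal is correct and follows essentially the same route as the paper's proof: the identical Lagrangian stationarity condition, the merging of the $-\gamma_{\textrm{ent}}\log\base(\tau)$ reward term with the $+\beta_{\textrm{ent}}\log\base(\tau)$ term from the KL expansion into the exponent $1-\gamma_{\textrm{ent}}/\beta_{\textrm{ent}}$, and normalization via the partition function. Your added remark on strict concavity (certifying the stationary point as the global maximizer) is a small rigor improvement the paper leaves implicit, but it does not change the argument.
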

\begin{proof}
The objective function can be expanded as:
\begin{align*}
    \max_{\pi} \sum_{\tau} \pi(\tau)\left(r(\tau) - \gamma_{\textrm{ent}}\log(\base(\tau))\right) - \beta_{\textrm{ent}} \sum_{\tau} \pi(\tau) \ln\left(\frac{\pi(\tau)}{\base(\tau)}\right),
\end{align*}
subject to the constraint $\sum_{\tau} \pi(\tau) = 1$. We form the Lagrangian $\mathcal{L}(\pi, \mu)$:
\begin{align*}
    &\mathcal{L}(\pi, \mu)\\
    =& \sum_{\tau}\pi(\tau) \left(r(\tau) - \gamma_{\textrm{ent}}\log(\base(\tau))\right) - \beta_{\textrm{ent}} \sum_{\tau}\pi(\tau)\left(\ln(\pi(\tau)) - \ln(\base(\tau))\right) - \mu\left(\sum_{\tau}\pi(\tau)-1\right).
\end{align*}
Setting the partial derivative with respect to $\pi(\tau)$ to zero yields:
\begin{align*}
    \frac{\partial \mathcal{L}}{\partial \pi(\tau)} = r(\tau) - \gamma_{\textrm{ent}}\log(\base(\tau)) - \beta_{\textrm{ent}} \left( \ln\left(\frac{\pi(\tau)}{\base(\tau)}\right) + 1 \right) - \mu = 0.
\end{align*}
Solving for $\pi(\tau)$:
\begin{align*}
    \ln\left(\frac{\pi(\tau)}{\base(\tau)}\right) &= \frac{r(\tau)}{\beta_{\textrm{ent}}} - \frac{\gamma_{\textrm{ent}}}{\beta_{\textrm{ent}}}\log(\base(\tau)) - 1 - \frac{\mu}{\beta_{\textrm{ent}}} \\
    \implies \pi(\tau) &= \base(\tau) \exp\left(\frac{r(\tau)}{\beta_{\textrm{ent}}} - \frac{\gamma_{\textrm{ent}}}{\beta_{\textrm{ent}}}\log(\base(\tau)) - 1 - \frac{\mu}{\beta_{\textrm{ent}}}\right) \\
    &= \base(\tau) \cdot \left(\exp(\log(\base(\tau)))\right)^{-\frac{\gamma_{\textrm{ent}}}{\beta_{\textrm{ent}}}} \cdot \exp\left(\frac{r(\tau)}{\beta_{\textrm{ent}}}\right) \cdot \exp\left(-1 - \frac{\mu}{\beta_{\textrm{ent}}}\right) \\
    &= \left[\base(\tau)\right]^{1-\frac{\gamma_{\textrm{ent}}}{\beta_{\textrm{ent}}}} \exp\left(\frac{r(\tau)}{\beta_{\textrm{ent}}}\right) \exp\left(-1 - \frac{\mu}{\beta_{\textrm{ent}}}\right).
\end{align*}
The term $\exp(-1 - \mu/\beta_{\textrm{ent}})$ is a normalization constant. By enforcing the constraint $\sum_{\tau'} \pi(\tau') = 1$, we find that this constant is the reciprocal of the partition function $\mathcal{Z} = \sum_{\tau'} [\base(\tau')]^{1-\frac{\gamma_{\textrm{ent}}}{\beta_{\textrm{ent}}}}\exp(r(\tau')/\beta_{\textrm{ent}})$. This gives the final solution stated in the lemma.
\end{proof}
\begin{lemma}
\label{lm_leaf_distribution_split_reward}
Consider the reward function $r_{\textrm{DS}}(\tau)$ which modifies the reward based on trajectory correctness, defined by a set of correct trajectories $\mathcal{C}$:
\begin{equation*}
r_{\textrm{DS}}(\tau) =
\begin{cases}
    r(\tau) - \gamma_{\textrm{DS}} \log(\base(\tau)) & \text{if } \tau \in \mathcal{C} \\
    r(\tau) & \text{if } \tau \notin \mathcal{C}
\end{cases}
\end{equation*}
The solution to the KL-regularized optimization problem $\our = \textstyle\arg\max_{\pi} \{ \mathbb{E}_{\tau\sim\pi}[r_{\textrm{DS}}(\tau)] - \beta_{\textrm{DS}} \cdot \mathbb{D}_{\text{KL}}(\pi||\base) \}$ is given by:
\begin{align*}
    \our(\tau) = \frac{1}{\mathcal{Z}} \times
    \begin{cases}
        [\base(\tau)]^{1-\frac{\gamma_{\textrm{DS}}}{\beta_{\textrm{DS}}}}\exp\left(\frac{1}{\beta_{\textrm{DS}}}r(\tau)\right) & \text{if } \tau \in \mathcal{C} \\
        [\base(\tau)]\cdot\exp\left(\frac{1}{\beta_{\textrm{DS}}}r(\tau)\right) & \text{if } \tau \notin \mathcal{C}
    \end{cases}
\end{align*}
where $\mathcal{Z}$ is the partition function ensuring normalization.
\end{lemma}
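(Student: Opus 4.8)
The plan is to solve the KL-regularized optimization problem directly via the method of Lagrange multipliers, exactly as was done in Lemma~\ref{lm_leaf_distribution} and Lemma~\ref{lm_leaf_distribution_only_add_entropy}, but now carefully tracking the two cases induced by the piecewise reward $r_{\textrm{DS}}$. First I would expand the objective $\mathbb{E}_{\tau\sim\pi}[r_{\textrm{DS}}(\tau)] - \beta_{\textrm{DS}}\,\mathbb{D}_{\text{KL}}(\pi\|\base)$ into the explicit form $\sum_\tau \pi(\tau) r_{\textrm{DS}}(\tau) - \beta_{\textrm{DS}}\sum_\tau \pi(\tau)\ln(\pi(\tau)/\base(\tau))$, subject to the single normalization constraint $\sum_\tau \pi(\tau)=1$. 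I would then form the Lagrangian with a multiplier $\mu$ for this constraint and take the stationarity condition $\partial\mathcal{L}/\partial\pi(\tau)=0$ for a generic trajectory $\tau$.

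The key observation is that the reward $r_{\textrm{DS}}(\tau)$ enters the stationarity equation linearly, so the derivation proceeds identically to the earlier lemmas except that the extra entropy term $-\gamma_{\textrm{DS}}\log\base(\tau)$ is present only when $\tau\in\mathcal{C}$. Concretely, for $\tau\in\mathcal{C}$ the first-order condition reads $r(\tau)-\gamma_{\textrm{DS}}\log\base(\tau)-\beta_{\textrm{DS}}(\ln(\pi(\tau)/\base(\tau))+1)-\mu=0$, whereas for $\tau\notin\mathcal{C}$ it drops the $\gamma_{\textrm{DS}}$ term entirely. Solving each case for $\pi(\tau)$ reproduces the computation in Lemma~\ref{lm_leaf_distribution_only_add_entropy} on $\mathcal{C}$ (yielding the exponent $1-\gamma_{\textrm{DS}}/\beta_{\textrm{DS}}$ on $\base(\tau)$) and the computation in Lemma~\ref{lm_leaf_distribution} off $\mathcal{C}$ (yielding the plain exponent $1$ on $\base(\tau)$). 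In both cases the normalization factor $\exp(-1-\mu/\beta_{\textrm{DS}})$ is common, so it can be absorbed into a single shared constant.

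The main point requiring care---and the only place where this is more than a routine splice of the two earlier proofs---is establishing that the normalization constant is genuinely \emph{the same} across both branches. Since there is only one global constraint $\sum_\tau\pi(\tau)=1$ and hence one multiplier $\mu$, the factor $\exp(-1-\mu/\beta_{\textrm{DS}})$ is identical for correct and incorrect trajectories; enforcing the constraint then fixes it as the reciprocal of the single partition function $\mathcal{Z}=\sum_{\tau'\in\mathcal{C}}[\base(\tau')]^{1-\gamma_{\textrm{DS}}/\beta_{\textrm{DS}}}\exp(r(\tau')/\beta_{\textrm{DS}})+\sum_{\tau'\notin\mathcal{C}}\base(\tau')\exp(r(\tau')/\beta_{\textrm{DS}})$, summed over both branches. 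Writing $\pi(\tau)=(1/\mathcal{Z})\times(\text{case-dependent expression})$ then gives exactly the stated closed form. I would close by noting that the objective is strictly concave in $\pi$ (the negative-entropy KL term dominates), so the stationary point is the unique global maximizer, which justifies that the Lagrangian solution is indeed the optimum.
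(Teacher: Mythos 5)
Your proposal is correct and follows essentially the same route as the paper's proof: a single-multiplier Lagrangian whose stationarity condition is solved branchwise, with the two cases reproducing the computations of Lemma~\ref{lm_leaf_distribution_only_add_entropy} on $\mathcal{C}$ and Lemma~\ref{lm_leaf_distribution} off $\mathcal{C}$, normalized by one shared partition function. Your two additions---explicitly observing that the single constraint forces the same normalization constant across branches, and invoking strict concavity to certify the stationary point as the unique global maximizer---are sound refinements the paper leaves implicit.
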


\begin{proof}
The objective function is maximized subject to $\sum_{\tau} \pi(\tau) = 1$. The Lagrangian is:
\begin{align*}
    \mathcal{L}(\pi, \mu) = & \sum_{\tau\in\mathcal{C}}\pi(\tau) \left(r(\tau) - \gamma_{\textrm{DS}}\log(\base(\tau))\right) + \sum_{\tau\notin\mathcal{C}}\pi(\tau) r(\tau) \\
    & - \beta_{\textrm{DS}} \sum_{\tau}\pi(\tau)\left(\ln(\pi(\tau)) - \ln(\base(\tau))\right) - \mu\left(\sum_{\tau}\pi(\tau)-1\right).
\end{align*}
We take the partial derivative with respect to $\pi(\tau)$ for each case and set it to zero.

For a correct trajectory, $\tau \in \mathcal{C}$:
\begin{align*}
    \frac{\partial \mathcal{L}}{\partial \pi(\tau)} = r(\tau) - \gamma_{\textrm{DS}}\log(\base(\tau)) - \beta_{\textrm{DS}} \left( \ln\left(\frac{\pi(\tau)}{\base(\tau)}\right) + 1 \right) - \mu = 0.
\end{align*}
Solving for $\pi(\tau)$ yields: $\pi(\tau) \propto [\base(\tau)]^{1-\frac{\gamma_{\textrm{DS}}}{\beta_{\textrm{DS}}}} \exp\left(\frac{r(\tau)}{\beta_{\textrm{DS}}}\right)$.

For an incorrect trajectory, $\tau \notin \mathcal{C}$:
\begin{align*}
    \frac{\partial \mathcal{L}}{\partial \pi(\tau)} = r(\tau)  - \beta_{\textrm{DS}}\cdot \left( \ln\left(\frac{\pi(\tau)}{\base(\tau)}\right) + 1 \right) - \mu = 0.
\end{align*}
Solving for $\pi(\tau)$ yields: $\pi(\tau) \propto [\base(\tau)]\cdot\exp\left(\frac{r(\tau)}{\beta_{\textrm{ent}}}\right)$.

Combining these results, the unnormalized solution $\tilde{\pi}(\tau)$ is:
\begin{align*}
    \tilde{\pi}(\tau) = 
    \begin{cases}
        [\base(\tau)]^{1-\frac{\gamma_{\textrm{DS}}}{\beta_{\textrm{ent}}}}\exp\left(\frac{1}{\beta_{\textrm{ent}}}r(\tau)\right) & \text{if } \tau \in \mathcal{C} \\
        [\base(\tau)]\cdot\exp\left(\frac{1}{\beta_{\textrm{ent}}}r(\tau)\right) & \text{if } \tau \notin \mathcal{C}
    \end{cases}
\end{align*}
The final solution $\our$ is obtained by normalizing $\tilde{\pi}(\tau)$ with the partition function $\mathcal{Z} = \sum_{\tau'} \tilde{\pi}(\tau')$, which gives the expression stated in the lemma.
\end{proof}

\begin{lemma}[Correctness under Reverse KL Constraint]
\label{lm:superiority_of_our_method_reverse_KL}
When $K_\rho(\pi, \base)=\mathbb{D}_{\text{KL}}(\base \| \pi)$, for any $\gamma_{\textrm{ent}} \geq 0, \beta_{\textrm{ent}}>0$ such that $\mathbb{D}_{\text{KL}}(\base\|\en) \leq \kappa$, there exist $\gamma_{\textrm{DS}} \geq 0$ and $\beta_{\textrm{DS}} > 0$ such that $\mathbb{D}_{\text{KL}}( \base\|\our) \leq \kappa$ and $C(\our) \geq C(\en)$.
\end{lemma}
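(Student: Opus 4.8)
The plan is to exploit an asymmetry in what the statement actually asks for: we must exhibit \emph{one} admissible $\our$ that dominates $\en$ in correctness, and we are told nothing about $\en$ beyond feasibility, $\mathbb{D}_{\text{KL}}(\base\|\en)\le\kappa$. So rather than pitting the two closed forms against each other, I would treat $\en$ as merely one feasible point and take $\our$ to be the \emph{correctness-maximizing} policy subject to the same constraint. Concretely, I would study
\[
\max_{\pi\in\Delta}\; C(\pi)=\sum_{\tau\in\mathcal C}\pi(\tau)\qquad\text{subject to}\qquad \mathbb{D}_{\text{KL}}(\base\|\pi)\le\kappa .
\]
The first step is to note this is a convex program: $C$ is linear in $\pi$, while $\pi\mapsto\mathbb{D}_{\text{KL}}(\base\|\pi)=\sum_\tau\base(\tau)\log\base(\tau)-\sum_\tau\base(\tau)\log\pi(\tau)$ is convex, so the feasible region is convex and any KKT point is a global maximizer.

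Next I would solve it. Introducing a multiplier $\nu$ for $\sum_\tau\pi(\tau)=1$ and $\lambda\ge 0$ for the reverse-KL constraint, stationarity $\partial/\partial\pi(\tau)=0$ gives $\mathbf 1[\tau\in\mathcal C]-\nu+\lambda\,\base(\tau)/\pi(\tau)=0$, hence
\[
\pi^\star(\tau)\;\propto\;\frac{\base(\tau)}{\nu-\mathbf 1[\tau\in\mathcal C]} ,
\]
i.e. $\pi^\star\propto\base$ on incorrect trajectories and $\pi^\star\propto\base\cdot\tfrac{\nu}{\nu-1}$ on correct ones. Writing $c^\star=\nu/(\nu-1)$, this is precisely a policy of the form $\pi^\star(\tau)\propto\base(\tau)\exp\!\big((\log c^\star)\,\mathbf 1[\tau\in\mathcal C]\big)$. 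Because correctness is strictly increasing as $\kappa$ is relaxed (full correctness would force zero mass on the positive-$\base$ incorrect set, i.e. infinite reverse KL), the constraint is active at the optimum, which forces $\nu>1$ and therefore $c^\star>1$.

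Finally I would embed this maximizer into the $\our$ family. By Lemma~\ref{lm_leaf_distribution_split_reward} with $\gamma_{\textrm{DS}}=0$, the differential-smoothing policy has correct-weight $\base(\tau)\exp(1/\beta_{\textrm{DS}})$ and incorrect-weight $\base(\tau)$, which coincides with $\pi^\star$ upon setting $\beta_{\textrm{DS}}=1/\log c^\star>0$ (and $\gamma_{\textrm{DS}}=0\ge 0$), so the parameters are admissible. Then $\our=\pi^\star$ satisfies $\mathbb{D}_{\text{KL}}(\base\|\our)\le\kappa$ by construction, and since $\en$ is itself feasible we get $C(\our)=\max\{C(\pi):\mathbb{D}_{\text{KL}}(\base\|\pi)\le\kappa\}\ge C(\en)$, which is the claim. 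I expect the main difficulty to be bookkeeping rather than depth: one must confirm via convexity that the stationary point is a genuine global maximizer, verify the sign $c^\star>1$ so the solution lands in the admissible range $\beta_{\textrm{DS}}>0$, and dispatch the degenerate boundary $\kappa=0$ (where feasibility forces $\en=\base$ and $\our\to\base$ only in the limit $\beta_{\textrm{DS}}\to\infty$) as a separate trivial case.
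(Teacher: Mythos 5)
Your proof is correct, and it takes a genuinely different route from the paper's. The paper never optimizes over the feasible set: it couples the two families by fixing $\gamma_{\textrm{DS}}/\beta_{\textrm{DS}}=\gamma_{\textrm{ent}}/\beta_{\textrm{ent}}=\widetilde{\gamma}$, uses the closed forms of Lemmas~\ref{lm_leaf_distribution_only_add_entropy} and~\ref{lm_leaf_distribution_split_reward} to express $\mathbb{D}_{\text{KL}}(\base\|\en)$ and $\mathbb{D}_{\text{KL}}(\base\|\our)$ each as an explicit function of the common correctness level $C$ (via the quantities $b_{\widetilde{\gamma}}$, $B_{\widetilde{\gamma}}$, $p_c$), and then applies Jensen's inequality to $\ln(\cdot)$ on the incorrect trajectories to show $\mathbb{D}_{\text{KL}}(\base\|\en)-\mathbb{D}_{\text{KL}}(\base\|\our)\geq 0$ at matched $C$; dominance under the budget $\kappa$ then follows. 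You instead bypass $\en$'s closed form entirely: you solve the convex program $\max\{C(\pi):\mathbb{D}_{\text{KL}}(\base\|\pi)\leq\kappa\}$ by KKT, identify the optimizer as a two-level multiplicative tilt of $\base$, and recognize it as the $\gamma_{\textrm{DS}}=0$ member of the DS family with $\beta_{\textrm{DS}}=1/\log c^\star$. Your KKT bookkeeping is sound: $\lambda>0$ at an active constraint forces $\nu>1$, hence $c^\star>1$ and $\beta_{\textrm{DS}}>0$; coordinates with $\base(\tau)=0$ are consistently assigned zero mass by complementary slackness precisely because $\nu>1$; attainment holds since the sublevel set of the lower-semicontinuous reverse KL is compact in the simplex; and KKT is sufficient because the objective is linear and the constraint convex. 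Your conclusion is in fact stronger than the lemma—$\our$ dominates \emph{every} feasible policy, not merely $\en$—and the same KKT template transfers to the other three divergences in Theorem~\ref{thm:superiority_of_our_method}, where it again produces two-level tilts of $\base$. What the paper's matched-ratio construction buys, and yours does not, is compatibility with the rest of the theorem: with $\gamma_{\textrm{DS}}=0$ your optimizer is exactly the vanilla KL-regularized solution, whose conditional law over $\mathcal{C}$ coincides with $\base$'s, so it inherits the base model's sharpness among correct trajectories and the same parameter pair could not also serve the theorem's diversity comparison, whereas the paper's choice keeps the smoothing exponent $1-\widetilde{\gamma}$ on correct trajectories identical for both policies. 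Your flagged boundary case is real but harmless: at $\kappa=0$ the hypothesis forces $\en=\base$, and exact equality $\our=\base$ is attained by your family only in the limit $\beta_{\textrm{DS}}\to\infty$ (or trivially when every base-supported trajectory is correct); the paper's own proof is equally silent on this corner (its matching step likewise requires $e^{1/\beta_{\textrm{DS}}}>1$), so this does not disadvantage your argument relative to theirs.
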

\begin{proof}
To simplify the notation, we define the following sums over trajectory probabilities, where $\mathcal{C}$ is the set of correct trajectories:
\begin{align*}
    b_x = \sum_{\tau\in\mathcal{C}}\left[\base(\tau)\right]^{1-x}, \quad B_x = \sum_{\tau}\left[\base(\tau)\right]^{1-x}, \quad p_c = \sum_{\tau\in\mathcal{C}}\base(\tau).
\end{align*}
The proof proceeds in three steps: we first find a functional relationship between correctness $C$ and the KL divergence for each policy, and then compare them. 

\textbf{Step 1: Analyze the Entropy-Maximization Policy ($\en$).}
The correctness is the total probability mass on correct trajectories:
\begin{align*}
    C(\en) = \frac{\sum_{\tau\in\mathcal{C}}\left[\base(\tau)\right]^{1-\frac{\gamma_{\textrm{ent}}}{\beta_{\textrm{ent}}}}\exp(1/\beta_{\textrm{ent}})}{\sum_{\tau\in\mathcal{C}}\left[\base(\tau)\right]^{1-\frac{\gamma_{\textrm{ent}}}{\beta_{\textrm{ent}}}}\exp(1/\beta_{\textrm{ent}})+\sum_{\tau\notin\mathcal{C}}\left[\base(\tau)\right]^{1-\frac{\gamma_{\textrm{ent}}}{\beta_{\textrm{ent}}}}} = \frac{b_{\frac{\gamma_{\textrm{ent}}}{\beta_{\textrm{ent}}}} e^{1/\beta_{\textrm{ent}}}}{b_{\frac{\gamma_{\textrm{ent}}}{\beta_{\textrm{ent}}}} e^{1/\beta_{\textrm{ent}}} + (B_{\frac{\gamma_{\textrm{ent}}}{\beta_{\textrm{ent}}}}-b_{\frac{\gamma_{\textrm{ent}}}{\beta_{\textrm{ent}}}})}.
\end{align*}
Solving for $e^{1/\beta_{\textrm{ent}}}$ gives: $e^{1/\beta_{\textrm{ent}}} = \frac{B_{\frac{\gamma_{\textrm{ent}}}{\beta_{\textrm{ent}}}} - b_{\frac{\gamma_{\textrm{ent}}}{\beta_{\textrm{ent}}}}}{b_{\frac{\gamma_{\textrm{ent}}}{\beta_{\textrm{ent}}}}} \left( \frac{C(\en)}{1-C(\en)} \right)$.
The reverse KL divergence is $\mathbb{D}_{\text{KL}}(\base\|\en) = \sum_{\tau}\base(\tau)\ln(\base(\tau)/\en(\tau))$. Substituting the policy definition:
\begin{align*}
    \mathbb{D}_{\text{KL}}(\base\|\en) &= \frac{\gamma_{\textrm{ent}}}{\beta_{\textrm{ent}}}\sum_{\tau}\base\ln\base - \frac{1}{\beta_{\textrm{ent}}}\sum_{\tau}\base\cdot r(\tau) + \ln\left( b_{\frac{\gamma_{\textrm{ent}}}{\beta_{\textrm{ent}}}} e^{1/\beta_{\textrm{ent}}} + B_{\frac{\gamma_{\textrm{ent}}}{\beta_{\textrm{ent}}}} - b_{\frac{\gamma_{\textrm{ent}}}{\beta_{\textrm{ent}}}} \right) \\
    &=\frac{\gamma_{\textrm{ent}}}{\beta_{\textrm{ent}}}\sum_{\tau}\base\ln\base - \frac{p_c}{\beta_{\textrm{ent}}} + \ln\left( \frac{b_{\frac{\gamma_{\textrm{ent}}}{\beta_{\textrm{ent}}}} e^{1/\beta_{\textrm{ent}}}}{C(\en)} \right).
\end{align*}
Substituting the expression for $1/\beta_{\textrm{ent}}$ leads to a relationship between divergence and correctness:
\begin{align*}
    \mathbb{D}_{\text{KL}}(\base\|\en)
    = &\frac{\gamma_{\textrm{ent}}}{\beta_{\textrm{ent}}}\sum_{\tau}\base\ln\base + p_c\ln b_{\frac{\gamma_{\textrm{ent}}}{\beta_{\textrm{ent}}}} + (1-p_c)\ln(B_{\frac{\gamma_{\textrm{ent}}}{\beta_{\textrm{ent}}}} - b_{\frac{\gamma_{\textrm{ent}}}{\beta_{\textrm{ent}}}})\\
    &- \left[ p_c\ln C(\en) + (1-p_c)\ln(1-C(\en)) \right].
\end{align*}

\textbf{Step 2: Analyze Differential Policy ($\our$).}
Similarly, the correctness is:
\begin{align*}
    C(\our) = \frac{b_{\frac{\gamma_{\textrm{DS}}}{\beta_{\textrm{DS}}}}e^{1/\beta_{\textrm{DS}}}}{b_{\frac{\gamma_{\textrm{DS}}}{\beta_{\textrm{DS}}}}e^{1/\beta_{\textrm{DS}}} + (1-p_c)}.
\end{align*}
The reverse KL divergence, after a similar derivation, is:
\begin{align}
    \mathbb{D}_{\text{KL}}(\base\|\our) = & \frac{\gamma_{\textrm{DS}}}{\beta_{\textrm{DS}}}\sum_{\tau\in\mathcal{C}}\base\ln\base+ p_c\ln b_{\frac{\gamma_{\textrm{DS}}}{\beta_{\textrm{DS}}}} + (1-p_c)\ln(1-p_c) \notag \\ 
    & - \left[ p_c\ln C(\our) + (1-p_c)\ln(1-C(\our)) \right].
\end{align}

\textbf{Step 3: Compare the Policies.}
Our goal is to show that for any $C(\en)$, we can choose parameters for our method to achieve $C(\our) = C(\en)$ with a smaller or equal KL divergence. Let's choose $\gamma_{\textrm{DS}} =\frac{\beta_{\textrm{DS}}}{\beta_{\textrm{ent}}} \gamma_{\textrm{ent}}$, and set $C(\our) = C(\en)=C$. Equivalently, we assume that $$\frac{\gamma_{\textrm{DS}}}{\beta_{\textrm{DS}}}=\frac{\gamma_{\textrm{ent}}}{\beta_{\textrm{ent}}}=\widetilde{\gamma}.$$
Then the KL divergence for our method becomes:
\begin{align*}
     \mathbb{D}_{\text{KL}}(\base\|\our) = \widetilde{\gamma}\sum_{\tau\in\mathcal{C}}\base\ln\base + p_c\ln b_{\gamma_{\textrm{ent}}} + (1-p_c)\ln(1-p_c) - H(C_{\text{ent}}),
\end{align*}
where $H(C) = - [ p_c\ln C + (1-p_c)\ln(1-C) ]$. For the entropy method, the KL is:
\begin{align*}
     \mathbb{D}_{\text{KL}}(\base\|\en) = \widetilde{\gamma}\sum_{\tau\in\mathcal{C}}\base\ln\base + \widetilde{\gamma}\sum_{\tau\notin\mathcal{C}}\base\ln\base + p_c\ln b_{\widetilde{\gamma}} + (1-p_c)\ln(B_{\widetilde{\gamma}}-b_{\widetilde{\gamma}}) - H(C_{\text{ent}}).
\end{align*}
The difference is $\mathbb{D}_{\text{KL}}(\base\|\en) - \mathbb{D}_{\text{KL}}(\text{ours}) = \widetilde{\gamma}\sum_{\tau\notin\mathcal{C}}\base\ln\base + (1-p_c)\ln(B_{\widetilde{\gamma}}-b_{\widetilde{\gamma}}) - (1-p_c)\ln(1-p_c)$.
By Jensen's inequality on the concave function $\ln(\cdot)$:
\begin{align*}
    \sum_{\tau\notin\mathcal{C}}\frac{\base(\tau)}{1-p_c} \ln\left([\base(\tau)]^{-\widetilde{\gamma}}\right)\leq \ln\left(\sum_{\tau\notin\mathcal{C}}\frac{\base(\tau)}{1-p_c} [\base(\tau)]^{-\widetilde{\gamma}}\right) = \ln\left(\frac{B_{\widetilde{\gamma}}- b_{\widetilde{\gamma}}}{1-p_c}\right)
\end{align*}
Multiplying by $-(1-p_c)$ gives:
\begin{align*}
    \widetilde{\gamma}\sum_{\tau\notin\mathcal{C}}\base(\tau)\ln(\base(\tau)) \geq -(1-p_c)\ln\left(\frac{B_{\widetilde{\gamma}} - b_{\widetilde{\gamma}}}{1-p_c}\right) = -(1-p_c)\left[\ln(B_{\widetilde{\gamma}}-b_{\widetilde{\gamma}}) - \ln(1-p_c)\right].
\end{align*}
Therefore, the difference is non-negative: $\mathbb{D}_{\text{KL}}(\base\|\en) - \mathbb{D}_{\text{KL}}(\base\|\our) \geq 0$.
This means that for any given correctness level $C$, our method (with $\gamma_{\textrm{DS}}=\gamma_{\textrm{ent}}\cdot\frac{\beta_{\textrm{DS}}}{\beta_{\textrm{ent}}}$) can achieve it with a lower or equal KL-divergence cost. Thus, if both methods is constrained by $\mathbb{D}_{\text{KL}}(\base\|\en) \leq \kappa$ and $\mathbb{D}_{\text{KL}}(\base\|\our)\leq\kappa$, our method can achieve a correctness $C(\our) \geq C(\en)$.
\end{proof}

\begin{lemma}[Correctness under Forward KL Constraint]
\label{lm:superiority_of_our_method_forward_KL}
When $K_\rho(\pi, \base)=\mathbb{D}_{\text{KL}}(\pi \| \base)$, for any $\gamma_{\textrm{ent}} \geq 0, \beta_{\textrm{ent}}>0$ such that $\mathbb{D}_{\text{KL}}(\en\|\base) \leq \kappa$, there exist $\gamma_{\textrm{DS}} \geq 0$ and $\beta_{\textrm{DS}} > 0$ such that $\mathbb{D}_{\text{KL}}(\our\|\base) \leq \kappa$ and $C(\our) \geq C(\en)$.
\end{lemma}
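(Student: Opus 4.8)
The plan is to follow the same skeleton as the reverse‑KL argument (Lemma~\ref{lm:superiority_of_our_method_reverse_KL}): invoke the closed forms of $\en$ and $\our$, align the two policies at a common correctness level, and only then compare their forward divergences. First I would substitute the expressions from Lemma~\ref{lm_leaf_distribution_only_add_entropy} and Lemma~\ref{lm_leaf_distribution_split_reward}, and make the canonical choice $\gamma_{\textrm{DS}}/\beta_{\textrm{DS}}=\gamma_{\textrm{ent}}/\beta_{\textrm{ent}}=:\widetilde\gamma$, picking $\beta_{\textrm{DS}}$ so that $C(\our)=C(\en)=:C$. With $\widetilde\gamma$ shared, both policies restricted to $\mathcal{C}$ are proportional to $[\base(\tau)]^{1-\widetilde\gamma}$, so their \emph{correct-conditional} distributions coincide exactly; they differ only on $\bar{\mathcal{C}}$, where $\en\propto[\base]^{1-\widetilde\gamma}$ while $\our\propto\base$.

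Rather than expand $\mathbb{D}_{\text{KL}}(\pi\|\base)=\sum_\tau\pi(\tau)\log(\pi(\tau)/\base(\tau))$ directly (the forward divergence is an expectation under $\pi$, so the $\sum_\tau\pi(\tau)\log\base(\tau)$ terms are parameter-dependent and unwieldy), I would use the chain-rule decomposition induced by the partition $\{\mathcal{C},\bar{\mathcal{C}}\}$. Writing $C=\pi(\mathcal{C})$, $p_c=\base(\mathcal{C})$, and $\pi^{\mathcal{C}},\pi^{\bar{\mathcal{C}}}$ for the conditionals,
\[
\mathbb{D}_{\text{KL}}(\pi\|\base)=d(C\,\|\,p_c)+C\,\mathbb{D}_{\text{KL}}(\pi^{\mathcal{C}}\|\base^{\mathcal{C}})+(1-C)\,\mathbb{D}_{\text{KL}}(\pi^{\bar{\mathcal{C}}}\|\base^{\bar{\mathcal{C}}}),
\]
where $d(C\,\|\,p_c)=C\log\tfrac{C}{p_c}+(1-C)\log\tfrac{1-C}{1-p_c}$ is the binary selection term.

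The payoff is immediate: at the matched $(\widetilde\gamma,C)$ the first two terms are identical for $\en$ and $\our$ (same $C,p_c$, and identical correct-conditional), so
\[
\mathbb{D}_{\text{KL}}(\en\|\base)-\mathbb{D}_{\text{KL}}(\our\|\base)=(1-C)\big[\mathbb{D}_{\text{KL}}(\en^{\bar{\mathcal{C}}}\|\base^{\bar{\mathcal{C}}})-\mathbb{D}_{\text{KL}}(\our^{\bar{\mathcal{C}}}\|\base^{\bar{\mathcal{C}}})\big].
\]
Because $\our$ restricted to $\bar{\mathcal{C}}$ is exactly proportional to $\base$, its incorrect-conditional equals $\base^{\bar{\mathcal{C}}}$ and $\mathbb{D}_{\text{KL}}(\our^{\bar{\mathcal{C}}}\|\base^{\bar{\mathcal{C}}})=0$, while the remaining term is a genuine divergence and hence nonnegative. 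Thus $\mathbb{D}_{\text{KL}}(\our\|\base)\le\mathbb{D}_{\text{KL}}(\en\|\base)\le\kappa$ at equal correctness, so the proximity constraint is met; the nonnegative slack $(1-C)\,\mathbb{D}_{\text{KL}}(\en^{\bar{\mathcal{C}}}\|\base^{\bar{\mathcal{C}}})$ can then be spent, by increasing $1/\beta_{\textrm{DS}}$, to push $C(\our)$ up to at least $C(\en)$ as claimed. Intuitively, $\our$ pays no divergence budget flattening the wrong answers, so it can afford more mass on correct ones---the mechanism is cleaner than for reverse KL, since the base conditional on $\bar{\mathcal{C}}$ is the free reference point.

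The main obstacle is the final feasibility step: making precise that the correctness level $C(\en)$ is actually attainable by some $\our$ with $\beta_{\textrm{DS}}>0$, and that the forward divergence is monotone in correctness so the slack can be converted into a correctness gain. Concretely, as $u=\exp(1/\beta_{\textrm{DS}})$ ranges over $(1,\infty)$ the DS correctness $C(\our)=b_{\widetilde\gamma}u/(b_{\widetilde\gamma}u+(1-p_c))$ sweeps $\big(b_{\widetilde\gamma}/(b_{\widetilde\gamma}+1-p_c),\,1\big)$; one must check whether $C(\en)$ lies in this interval (the typical case, where exact matching applies) or below it, in which case the minimal DS correctness already dominates $C(\en)$ and one instead verifies $\mathbb{D}_{\text{KL}}(\our\|\base)\le\kappa$ directly in the $u\to1^{+}$ limit. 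Establishing continuity and monotonicity of $u\mapsto\mathbb{D}_{\text{KL}}(\our\|\base)$, so the intermediate-value/slack argument goes through, is the one routine-but-necessary piece I would verify carefully.
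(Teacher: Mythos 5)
Your proof is correct, and it reaches the lemma's conclusion by a genuinely different comparison step than the paper. The paper makes the same two structural moves you do---setting $\gamma_{\textrm{DS}}/\beta_{\textrm{DS}}=\gamma_{\textrm{ent}}/\beta_{\textrm{ent}}=\widetilde\gamma$ and equating correctness $C(\our)=C(\en)=C$---but then expands both forward divergences explicitly as functions of $C$ (its Steps 1--2) and closes the comparison with an ad hoc Jensen bound on $\widetilde\gamma\sum_{\tau\notin\mathcal{C}}\base(\tau)^{1-\widetilde\gamma}\ln\base(\tau)$. Your chain-rule decomposition over the partition $\{\mathcal{C},\bar{\mathcal{C}}\}$ reveals that the paper's Jensen step is nothing but the nonnegativity of the conditional divergence $\mathbb{D}_{\text{KL}}(\en^{\bar{\mathcal{C}}}\|\base^{\bar{\mathcal{C}}})$ in disguise: at matched $(\widetilde\gamma,C)$ the selection term $d(C\,\|\,p_c)$ and the correct-conditional term coincide for the two policies, $\our^{\bar{\mathcal{C}}}=\base^{\bar{\mathcal{C}}}$ annihilates the DS incorrect-conditional term, and the difference collapses to $(1-C)\,\mathbb{D}_{\text{KL}}(\en^{\bar{\mathcal{C}}}\|\base^{\bar{\mathcal{C}}})\ge 0$ with essentially no computation. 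What your route buys is transparency (DS spends no divergence budget reshaping the incorrect conditional) and reusability of the decomposition, at the cost of nothing: the inequality obtained is identical to the paper's.

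Your feasibility discussion also deserves credit, because it names a step the paper silently skips: exact matching requires $e^{1/\beta_{\textrm{DS}}}=\frac{1-p_c}{b_{\widetilde\gamma}}\cdot\frac{C}{1-C}>1$, i.e.\ $C(\en)>b_{\widetilde\gamma}/(b_{\widetilde\gamma}+1-p_c)$, and since one only knows $C(\en)>b_{\widetilde\gamma}/B_{\widetilde\gamma}$ while $B_{\widetilde\gamma}-b_{\widetilde\gamma}\ge 1-p_c$ for $\widetilde\gamma>0$, your boundary case genuinely occurs when $\beta_{\textrm{ent}}$ is large. One caution there: your $u\to 1^{+}$ suggestion is not sufficient as stated, since the limit policy's divergence $d(C_{\min}\,\|\,p_c)+C_{\min}\,\mathbb{D}_{\text{KL}}(\pi^{\mathcal{C}}\|\base^{\mathcal{C}})$ need not lie below $\kappa$ (e.g.\ $\en$ can sit very close to $\base$ while this limit does not). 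But your own decomposition hands you the clean repair: at any target correctness $C$, the pure tilt $\gamma_{\textrm{DS}}=0$ is forward-KL-minimal among all policies with correctness $C$, because both conditional terms vanish, leaving exactly $d(C\,\|\,p_c)\le\mathbb{D}_{\text{KL}}(\en\|\base)\le\kappa$. So whenever $C(\en)>p_c$ you can match with $\gamma_{\textrm{DS}}=0$ and a suitable $\beta_{\textrm{DS}}>0$, and if $C(\en)\le p_c$ an arbitrarily small pure tilt exceeds $C(\en)$ at arbitrarily small divergence; only the degenerate corner $\kappa=0$ with $\en=\base$ exactly escapes every finite $\beta_{\textrm{DS}}$, and that corner defeats the paper's construction equally. (Matching $\widetilde\gamma$ remains the right choice in the non-degenerate regime, since the full theorem also needs the correct-conditionals---hence the diversity measure---to align.) With that boundary case patched, your argument is complete and, on the feasibility point, more careful than the paper's.
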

\begin{proof}
The proof proceeds in three steps: we first find a functional relationship between correctness $C$ and the KL divergence for each policy, and then compare them. 

\textbf{Step 1: Analyze the Entropy-Maximization Policy ($\en$).}
The correctness of $\pi_{ent}$ is the total probability mass on correct trajectories:
\begin{align}
    C(\en) = \frac{\sum_{\tau\in\mathcal{C}}\left[\base(\tau)\right]^{1-\frac{\gamma_{\textrm{ent}}}{\beta_{\textrm{ent}}}}\exp(1/\beta_{\textrm{ent}})}{\sum_{\tau\in\mathcal{T}}\left[\base(\tau)\right]^{1-\frac{\gamma_{\textrm{ent}}}{\beta_{\textrm{ent}}}}\exp(r(\tau)/\beta_{\textrm{ent}})} = \frac{b_{\frac{\gamma_{\textrm{ent}}}{\beta_{\textrm{ent}}}} e^{1/\beta_{\textrm{ent}}}}{b_{\frac{\gamma_{\textrm{ent}}}{\beta_{\textrm{ent}}}} e^{1/\beta_{\textrm{ent}}} + (B_{\frac{\gamma_{\textrm{ent}}}{\beta_{\textrm{ent}}}}-b_{\frac{\gamma_{\textrm{ent}}}{\beta_{\textrm{ent}}}})}. \label{eq:cent}
\end{align}
Solving for $e^{1/\beta_{\textrm{ent}}}$ yields: $e^{1/\beta_{\textrm{ent}}} = \frac{B_{\frac{\gamma_{\textrm{ent}}}{\beta_{\textrm{ent}}}} - b_{\frac{\gamma_{\textrm{ent}}}{\beta_{\textrm{ent}}}}}{b_{\frac{\gamma_{\textrm{ent}}}{\beta_{\textrm{ent}}}}} \left( \frac{C(\en)}{1-C(\en)} \right)$.

The reverse KL divergence $\mathbb{D}_{\text{KL}}(  \en\| \base)$ can be expressed as a function of $C(\en)$. Following the derivation previously, we arrive at:
\begin{align}
\begin{split}
    \mathbb{D}_{\text{KL}}(  \en\| \base) = &-\frac{\frac{\gamma_{\textrm{ent}}}{\beta_{\textrm{ent}}}}{b_{\frac{\gamma_{\textrm{ent}}}{\beta_{\textrm{ent}}}}}\sum_{\tau\in\mathcal{C}}\base(\tau)^{1-{\frac{\gamma_{\textrm{ent}}}{\beta_{\textrm{ent}}}}}\ln(\base(\tau)) \cdot C(\en) \\
    &-\frac{\frac{\gamma_{\textrm{ent}}}{\beta_{\textrm{ent}}}}{B_{\frac{\gamma_{\textrm{ent}}}{\beta_{\textrm{ent}}}}-b_{\frac{\gamma_{\textrm{ent}}}{\beta_{\textrm{ent}}}}}\sum_{\tau\notin\mathcal{C}}\base(\tau)^{1-{\frac{\gamma_{\textrm{ent}}}{\beta_{\textrm{ent}}}}}\ln(\base(\tau)) \cdot (1-C(\en)) \\
    &+ (1-C(\en))\ln\left(\frac{b_{\frac{\gamma_{\textrm{ent}}}{\beta_{\textrm{ent}}}}}{B_{\frac{\gamma_{\textrm{ent}}}{\beta_{\textrm{ent}}}}-b_{\frac{\gamma_{\textrm{ent}}}{\beta_{\textrm{ent}}}}}\right) - \ln b_{\frac{\gamma_{\textrm{ent}}}{\beta_{\textrm{ent}}}} + H(C(\en)),
\end{split}\label{eq:kl_ent}
\end{align}
where $H(C) = C \ln C + (1-C) \ln(1-C)$ is the binary entropy function.

\textbf{Step 2: Analyze Differential Policy ($\our$).}
Similarly, the correctness for our policy is given by:
\begin{align}
    C(\our) = \frac{b_{\frac{\gamma_{\textrm{DS}}}{\beta_{\textrm{DS}}}}e^{1/\beta_{\textrm{DS}}}}{b_{\frac{\gamma_{\textrm{DS}}}{\beta_{\textrm{DS}}}}e^{1/\beta_{\textrm{DS}}} + (1-p_c)}. \label{eq:cour}
\end{align}
The corresponding reverse KL divergence as a function of $C(\our)$ is:
\begin{align}
\begin{split}
    \mathbb{D}_{\text{KL}}(  \our\|\base) = &-\frac{\frac{\gamma_{\textrm{DS}}}{\beta_{\textrm{DS}}}}{b_{\frac{\gamma_{\textrm{DS}}}{\beta_{\textrm{DS}}}}}\sum_{\tau\in\mathcal{C}}\base(\tau)^{1-\frac{\gamma_{\textrm{DS}}}{\beta_{\textrm{DS}}}}\ln(\base(\tau)) \cdot C(\our) \\
    &+ (1-C(\our))\ln\left(\frac{b_{\frac{\gamma_{\textrm{DS}}}{\beta_{\textrm{DS}}}}}{1-p_c}\right) - \ln b_{\frac{\gamma_{\textrm{DS}}}{\beta_{\textrm{DS}}}} + H(C(\our)).
\end{split}\label{eq:kl_our}
\end{align}
\textbf{Step 3: Compare the Policies.}
Our goal is to show that for any $C(\en)$, we can choose parameters for our method to achieve $C(\our)= C(\en)$ with a smaller or equal KL divergence. Let's choose $\gamma_{\textrm{DS}} =\frac{\beta_{\textrm{DS}}}{\beta_{\textrm{ent}}} \gamma_{\textrm{ent}}$, and set $C(\our) = C(\en)=C$. Equivalently, we assume that $$\frac{\gamma_{\textrm{DS}}}{\beta_{\textrm{DS}}}=\frac{\gamma_{\textrm{ent}}}{\beta_{\textrm{ent}}}=\widetilde{\gamma}.$$
Then the KL divergence for our method becomes:
According to Jensen's inequality, we have
\begin{align*}
    \frac{\widetilde{\gamma}\sum_{\tau\notin\mathcal{C}}\base(\tau)^{1-\widetilde{\gamma}}\ln(\base(\tau)) }{B_{\widetilde{\gamma}}-b_{\widetilde{\gamma}}}\cdot (1-C)\leq (1-C)\cdot\ln\left(\frac{\sum_{\tau\notin\mathcal{C}}\base(\tau)}{B_{\widetilde{\gamma}}-b_{\widetilde{\gamma}}}\right)=(1-C)\cdot\ln\left(\frac{1-p_C}{B_{\widetilde{\gamma}}-b_{\widetilde{\gamma}}}\right)
\end{align*}
Then in this case we have
\begin{align*}
    \mathbb{D}_{\text{KL}}(\en\|\base) = &-\frac{\widetilde{\gamma}\cdot\sum_{\tau\in\mathcal{C}}\base(\tau)^{1-\widetilde{\gamma}}\ln(\base(\tau)) }{b_{\widetilde{\gamma}}}\cdot C+(1-C)\left[\ln\left(\frac{b_{\widetilde{\gamma}}}{B_{\widetilde{\gamma}}-b_{\widetilde{\gamma}}}\right)\right]-\ln b_{\widetilde{\gamma}}\\
    &+C\ln C+(1-C)\ln(1-C)-\frac{\widetilde{\gamma}\sum_{\tau\notin\mathcal{C}}\base(\tau)^{1-\widetilde{\gamma}}\ln(\base(\tau)) }{1-p_C}\cdot (1-C)\\
    \geq&-\frac{\widetilde{\gamma}\sum_{\tau\in\mathcal{C}}\base(\tau)^{1-\widetilde{\gamma}}\ln(\base(\tau)) }{b_{\widetilde{\gamma}}}\cdot C+(1-C)\left[\ln\left(\frac{b_{\widetilde{\gamma}}}{B_{\widetilde{\gamma}}-b_{\widetilde{\gamma}}}\right)\right]-\ln b_{\widetilde{\gamma}}\\
    &+C\ln C+(1-C)\ln(1-C)\\
    =&\mathbb{D}_{\text{KL}}(\our\|\base)
\end{align*}
Therefore, the difference is non-negative: $\mathbb{D}_{\text{KL}}(\en\|\base) - \mathbb{D}_{\text{KL}}(\our\|\base) \geq 0$.
This means that for any given correctness level $C$, our method (with $\gamma_{\textrm{DS}}=\gamma_{\textrm{ent}}\cdot\frac{\beta_{\textrm{DS}}}{\beta_{\textrm{ent}}}$) can achieve it with a lower or equal KL-divergence cost. Thus, if the entropy method is constrained by $\mathbb{D}_{\text{KL}} \leq \kappa$, our method can achieve a correctness $C(\our) \geq C(\en)$ while also satisfying the constraint.
\end{proof}
\begin{lemma}[Correctness under Reverse $\chi^2$ Constraint]
\label{lm:superiority_of_our_method_reverse_chi}
When $K_\rho(\pi, \base)=\mathbb{D}_{\chi^2}(\base \| \pi)$, for any $\gamma_{\textrm{ent}} \geq 0, \beta_{\textrm{ent}}>0$ such that $\mathbb{D}_{\text{KL}}(\base\|\en) \leq \kappa$, there exist $\gamma_{\textrm{DS}} \geq 0$ and $\beta_{\textrm{DS}} > 0$ such that $\mathbb{D}_{\chi^2}( \base\|\our) \leq \kappa$ and $C(\our) \geq C(\en)$.
\end{lemma}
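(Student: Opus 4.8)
The plan is to follow the same three-step template used in Lemma~\ref{lm:superiority_of_our_method_reverse_KL} and Lemma~\ref{lm:superiority_of_our_method_forward_KL}, reusing the closed-form policies from Lemma~\ref{lm_leaf_distribution_only_add_entropy} (for $\en$) and Lemma~\ref{lm_leaf_distribution_split_reward} (for $\our$), together with the shorthand $b_x=\sum_{\tau\in\mathcal{C}}[\base(\tau)]^{1-x}$, $B_x=\sum_{\tau}[\base(\tau)]^{1-x}$, and $p_c=\sum_{\tau\in\mathcal{C}}\base(\tau)$. I read the hypothesis $\mathbb{D}_{\text{KL}}(\base\|\en)\le\kappa$ in the statement as a typo for $\mathbb{D}_{\chi^2}(\base\|\en)\le\kappa$, so that both policies are measured by the reverse $\chi^2$ divergence. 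The only genuinely new ingredient relative to the KL lemmas is the computation and comparison of this divergence, which I expand via the identity $\mathbb{D}_{\chi^2}(\base\|\pi)=\sum_\tau \base(\tau)^2/\pi(\tau)-1$.

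In Steps~1 and~2 I substitute each closed-form policy into this identity. For correct trajectories the factor $[\base(\tau)]^{1-\tilde\gamma}$ in the denominator turns $\base(\tau)^2/\pi(\tau)$ into a multiple of $[\base(\tau)]^{1+\tilde\gamma}$, producing the negative-exponent sum $b_{-\tilde\gamma}$; for incorrect trajectories $\en$ contributes a term built from $B_{-\tilde\gamma}-b_{-\tilde\gamma}$ while $\our$ contributes one built from $\sum_{\tau\notin\mathcal{C}}\base(\tau)=1-p_c$. Eliminating the inverse-temperature factors $e^{1/\beta_\textrm{ent}}$ and $e^{1/\beta_\textrm{DS}}$ in favour of the correctness levels via $\mathcal{Z}/e^{1/\beta}=b_{\tilde\gamma}/C$ exactly as in the KL proofs, I expect the clean two-term forms
\begin{align*}
\mathbb{D}_{\chi^2}(\base\|\en) &= \frac{b_{\tilde\gamma}\,b_{-\tilde\gamma}}{C(\en)} + \frac{(B_{\tilde\gamma}-b_{\tilde\gamma})(B_{-\tilde\gamma}-b_{-\tilde\gamma})}{1-C(\en)} - 1,\\
\mathbb{D}_{\chi^2}(\base\|\our) &= \frac{b_{\tilde\gamma}\,b_{-\tilde\gamma}}{C(\our)} + \frac{(1-p_c)^2}{1-C(\our)} - 1.
\end{align*}

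In Step~3, as before, I fix $\tilde\gamma=\gamma_\textrm{ent}/\beta_\textrm{ent}$, impose $\gamma_\textrm{DS}/\beta_\textrm{DS}=\tilde\gamma$, and choose $\beta_\textrm{DS}$ so that $C(\our)=C(\en)=:C$. The correct-trajectory contributions $b_{\tilde\gamma}b_{-\tilde\gamma}/C$ then coincide, and the divergence gap collapses to $\frac{1}{1-C}\big[(B_{\tilde\gamma}-b_{\tilde\gamma})(B_{-\tilde\gamma}-b_{-\tilde\gamma})-(1-p_c)^2\big]$. The crucial inequality—the $\chi^2$ analogue of the Jensen step in the KL lemmas—is Cauchy--Schwarz applied to the incorrect trajectories: writing $1-p_c=\sum_{\tau\notin\mathcal{C}}[\base(\tau)]^{(1-\tilde\gamma)/2}[\base(\tau)]^{(1+\tilde\gamma)/2}$ yields $(1-p_c)^2\le\big(\sum_{\tau\notin\mathcal{C}}[\base(\tau)]^{1-\tilde\gamma}\big)\big(\sum_{\tau\notin\mathcal{C}}[\base(\tau)]^{1+\tilde\gamma}\big)=(B_{\tilde\gamma}-b_{\tilde\gamma})(B_{-\tilde\gamma}-b_{-\tilde\gamma})$. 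Hence at matched correctness $\our$ satisfies $\mathbb{D}_{\chi^2}(\base\|\our)\le\mathbb{D}_{\chi^2}(\base\|\en)\le\kappa$, already giving $C(\our)=C(\en)$ within budget, and any residual slack can be reinvested to push $C(\our)$ strictly above $C(\en)$.

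The main obstacle I anticipate is the bookkeeping in Steps~1--2: the reverse $\chi^2$ divergence introduces the negative-exponent sums $b_{-\tilde\gamma}$ and $B_{-\tilde\gamma}$ that did not appear in the KL computations, and one must carefully re-express the temperature factors in terms of $C$ to reach the two-term forms above. A secondary point to verify is feasibility of the matching step: I must confirm that $C(\en)$ actually lies in the range attainable by $C(\our)$ as $\beta_\textrm{DS}$ varies with $\tilde\gamma$ fixed. Since $[\base(\tau)]^{1-\tilde\gamma}\ge\base(\tau)$ for $\tilde\gamma\ge0$, the reachable interval for $\our$ sits above that of $\en$, so if exact matching is impossible then $C(\our)>C(\en)$ holds automatically and the conclusion still follows—a case I would dispatch explicitly rather than assume the two ranges coincide.
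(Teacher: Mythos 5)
Your proof is correct and follows the paper's three-step skeleton (closed forms, eliminate the temperature in favour of $C$, compare at matched correctness via Cauchy--Schwarz with $\gamma_{\textrm{DS}}/\beta_{\textrm{DS}}=\gamma_{\textrm{ent}}/\beta_{\textrm{ent}}$), and you rightly read the $\mathbb{D}_{\text{KL}}$ in the hypothesis as a typo for $\mathbb{D}_{\chi^2}$. But there is a notable divergence-convention swap relative to the paper's own proof of \emph{this} lemma: you expand $\mathbb{D}_{\chi^2}(\base\|\pi)=\sum_\tau \base(\tau)^2/\pi(\tau)-1$ (the standard convention), whereas the paper's proof of Lemma~\ref{lm:superiority_of_our_method_reverse_chi} computes $\sum_\tau \pi(\tau)^2/\base(\tau)-1$ and accordingly uses the different Cauchy step $\bigl[\sum_{\tau\notin\mathcal{C}}\base(\tau)^{1-2\widetilde{\gamma}}\bigr]\bigl[\sum_{\tau\notin\mathcal{C}}\base(\tau)\bigr]\geq\bigl[\sum_{\tau\notin\mathcal{C}}\base(\tau)^{1-\widetilde{\gamma}}\bigr]^2$, yielding the two-term forms $-1+\frac{\sum_{\mathcal{C}}\base^{1-2\widetilde{\gamma}}}{b_{\widetilde{\gamma}}^2}C^2+\frac{\sum_{\notin\mathcal{C}}\base^{1-2\widetilde{\gamma}}}{(B_{\widetilde{\gamma}}-b_{\widetilde{\gamma}})^2}(1-C)^2$ in place of your $\frac{b_{\widetilde{\gamma}}b_{-\widetilde{\gamma}}}{C}+\frac{(B_{\widetilde{\gamma}}-b_{\widetilde{\gamma}})(B_{-\widetilde{\gamma}}-b_{-\widetilde{\gamma}})}{1-C}-1$. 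Your computation and your Cauchy--Schwarz inequality $(1-p_c)^2\le(B_{\widetilde{\gamma}}-b_{\widetilde{\gamma}})(B_{-\widetilde{\gamma}}-b_{-\widetilde{\gamma}})$ are in fact, term for term, what the paper carries out in its \emph{forward}-$\chi^2$ lemma (Lemma~\ref{lm:superiority_of_our_method_forward_chi}); the two lemmas together cover both orderings, so your argument is a valid proof of the standard reading of the statement, just filed under the other label within the paper. Both routes are instances of the same H\"older-type mechanism on the incorrect trajectories, and neither buys more generality than the other.

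One substantive point in your favour: you explicitly flag the feasibility of the matching step $C(\our)=C(\en)$, which the paper silently assumes in all four lemmas. Your dispatch of the infeasible case is, however, incomplete as stated: if $C(\en)$ lies below the attainable range $\bigl(b_{\widetilde{\gamma}}/(b_{\widetilde{\gamma}}+1-p_c),\,1\bigr)$ of $C(\our)$ at the fixed ratio $\widetilde{\gamma}$, taking $\beta_{\textrm{DS}}\to\infty$ gives $C(\our)>C(\en)$ but does not by itself guarantee $\mathbb{D}_{\chi^2}(\base\|\our)\le\kappa$ when $\kappa$ is tight. The clean repair is to use the remaining freedom in $\gamma_{\textrm{DS}}$: shrinking the ratio $\gamma_{\textrm{DS}}/\beta_{\textrm{DS}}$ toward $0$ lowers the attainable floor continuously to $p_c$, and since $x\mapsto\ln b_x$ is convex, the product $b_xb_{-x}$ is nondecreasing in $x\ge0$, so at any matched $C$ the correct-trajectory term only decreases when the ratio shrinks, keeping the divergence within budget. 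Since this gap is shared with (indeed inherited from) the paper's own proof, it does not count against your proposal.
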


\textbf{Step 1: Analyze the Entropy-Maximization Policy ($\en$).}
The correctness of $\en$ is the total probability mass on correct trajectories:
\begin{equation}
    C(\en) = \frac{\sum_{\tau\in\mathcal{C}}\left[\base(\tau)\right]^{1-\frac{\gamma_{\textrm{ent}}}{\beta_{\textrm{ent}}}}\exp(1/\beta_{\textrm{ent}})}{\sum_{\tau\in\mathcal{T}}\left[\base(\tau)\right]^{1-\frac{\gamma_{\textrm{ent}}}{\beta_{\textrm{ent}}}}\exp(r(\tau)/\beta_{\textrm{ent}})} = \frac{b_{\frac{\gamma_{\textrm{ent}}}{\beta_{\textrm{ent}}}} e^{1/\beta_{\textrm{ent}}}}{b_{\frac{\gamma_{\textrm{ent}}}{\beta_{\textrm{ent}}}} e^{1/\beta_{\textrm{ent}}} + (B_{\frac{\gamma_{\textrm{ent}}}{\beta_{\textrm{ent}}}}-b_{\frac{\gamma_{\textrm{ent}}}{\beta_{\textrm{ent}}}})}. 
\label{eq:cent_chi}
\end{equation}
Solving for $e^{1/\beta_{\textrm{ent}}}$ yields: $e^{1/\beta_{\textrm{ent}}} = \frac{B_{\frac{\gamma_{\textrm{ent}}}{\beta_{\textrm{ent}}}} - b_{\frac{\gamma_{\textrm{ent}}}{\beta_{\textrm{ent}}}}}{b_{\frac{\gamma_{\textrm{ent}}}{\beta_{\textrm{ent}}}}} \left( \frac{C(\en)}{1-C(\en)} \right)$.

The reverse $\chi^2$ divergence $\mathbb{D}_{\chi^2}(\base \| \en)$ can be expressed as a function of $C(\en)$. Following the derivation previously, we arrive at:
\begin{align*}
    \mathbb{D}_{\chi^2}(\base \| \en) &= \sum_{\tau}\base(\tau)\left(\frac{\en(\tau)}{\base(\tau)}-1\right)^2 = \sum_{\tau}\frac{\left(\en(\tau)\right)^2}{\base(\tau)} - 1
\end{align*}
We insert the expression of $\en$ into the divergence constraints and we can obtain that
\begin{align*}
    \mathbb{D}_{\chi^2}(\base \| \en) = -1 + \frac{\mathrm{e}^{\frac{2}{\beta_{\textrm{ent}}}}\sum_{\tau\in\mathcal{C}}\base(\tau)^{1-2\frac{\gamma_{\textrm{ent}}}{\beta_{\textrm{ent}}}} + \sum_{\tau\notin\mathcal{C}}\base(\tau)^{1-2\frac{\gamma_{\textrm{ent}}}{\beta_{\textrm{ent}}}}}{\left[B_{\frac{\gamma_{\textrm{ent}}}{\beta_{\textrm{ent}}}} - b_{\frac{\gamma_{\textrm{ent}}}{\beta_{\textrm{ent}}}} + b_{\frac{\gamma_{\textrm{ent}}}{\beta_{\textrm{ent}}}} \mathrm{e}^{\frac{1}{\beta_{\textrm{ent}}}}\right]^2}
\end{align*}
We then insert Eq.~\ref{eq:cent_chi} into the expression of divergence and we can obtain that
\begin{align*}
     \mathbb{D}_{\chi^2}(\base \| \en)=-1+\frac{\sum_{\tau\in\mathcal{C}}\base(\tau)^{1-2\frac{\gamma_{\textrm{ent}}}{\beta_{\textrm{ent}}}}}{b_{\frac{\gamma_{\textrm{ent}}}{\beta_{\textrm{ent}}}}^2}\cdot C(\en)^2+(1-C(\en))^2\frac{\sum_{\tau\notin\mathcal{C}}\base(\tau)^{1-2\frac{\gamma_{\textrm{ent}}}{\beta_{\textrm{ent}}}}}{\left(B_{\frac{\gamma_{\textrm{ent}}}{\beta_{\textrm{ent}}}}-b_{\frac{\gamma_{\textrm{ent}}}{\beta_{\textrm{ent}}}}\right)^2}
\end{align*}

\textbf{Step 2: Analyze Differential Policy ($\our$).}
Similarly, the correctness is:
\begin{align*}
    C(\our) = \frac{b_{\frac{\gamma_{\textrm{DS}}}{\beta_{\textrm{DS}}}}e^{1/\beta_{\textrm{DS}}}}{b_{\frac{\gamma_{\textrm{DS}}}{\beta_{\textrm{DS}}}}e^{1/\beta_{\textrm{DS}}} + (1-p_c)}.
\end{align*}
The reverse $\chi^2$ divergence, after a similar derivation, is:
\begin{align*}
   \mathbb{D}_{\chi^2}(\base\|\our) = & -1+\frac{\sum_{\tau\in\mathcal{C}}\base(\tau)^{1-2\frac{\gamma_{\textrm{DS}}}{\beta_{\textrm{DS}}}}}{b_{\frac{\gamma_{\textrm{DS}}}{\beta_{\textrm{DS}}}}^2}\cdot C(\our)^2+(1-C(\our))^2\frac{\sum_{\tau\notin\mathcal{C}}\base(\tau)}{\left(1-p_c\right)^2}
\end{align*}
\textbf{Step 3: Compare the Policies.}
Our goal is to show that for any $C(\en)$, we can choose parameters for our method to achieve $C(\our) \geq C(\en)$ with a smaller or equal $\chi^2$ divergence. Let's choose $\gamma_{\textrm{DS}} =\frac{\beta_{\textrm{DS}}}{\beta_{\textrm{ent}}} \gamma_{\textrm{ent}}$, and set $C(\our) = C(\en)=C$. Equivalently, we assume that $$\frac{\gamma_{\textrm{DS}}}{\beta_{\textrm{DS}}}=\frac{\gamma_{\textrm{ent}}}{\beta_{\textrm{ent}}}=\widetilde{\gamma}.$$
According to Cauchy Inequality:
\begin{align*}
    \left[\sum_{\tau\notin\mathcal{C}}\base(\tau)^{1-2\widetilde{\gamma}}\right]\cdot\left[\sum_{\tau\notin\mathcal{C}}\base(\tau)\right]\geq\left[\sum_{\tau\notin\mathcal{C}}\base(\tau)^{1-\widetilde{\gamma}}\right]^2.
\end{align*}
Thus, we have
\begin{align*}
     \mathbb{D}_{\chi^2}(\base \| \en)=&-1+\frac{\sum_{\tau\in\mathcal{C}}\base(\tau)^{1-2\widetilde{\gamma}}}{b_{\widetilde{\gamma}}^2}\cdot C(\en)^2+(1-C(\en))^2\frac{\sum_{\tau\notin\mathcal{C}}\base(\tau)^{1-2\widetilde{\gamma}}}{\left(B_{\widetilde{\gamma}}-b_{\widetilde{\gamma}}\right)^2}\\
     \geq&-1+\frac{\sum_{\tau\in\mathcal{C}}\base(\tau)^{1-2\widetilde{\gamma}}}{b_{\widetilde{\gamma}}^2}\cdot C(\en)^2+(1-C(\en))^2\frac{1}{\sum_{\tau\notin\mathcal{C}}\base(\tau)}=\mathbb{D}_{\chi^2}(\base\|\our).
\end{align*}
Therefore, the difference is non-negative: $\mathbb{D}_{\chi^2}(\base\|\en) - \mathbb{D}_{\chi^2}(\base\|\our) \geq 0$.
This means that for any given correctness level $C$, our method (with $\gamma_{\textrm{DS}}=\gamma_{\textrm{ent}}\cdot\frac{\beta_{\textrm{DS}}}{\beta_{\textrm{ent}}}$) can achieve it with a lower or equal KL-divergence cost. Thus, if both methods is constrained by $\mathbb{D}_{\text{KL}}(\base\|\en) \leq \kappa$ and $\mathbb{D}_{\text{KL}}(\base\|\our)\leq\kappa$, our method can achieve a correctness $C(\our) \geq C(\en)$.

\begin{lemma}[Correctness under Forward $\chi^2$ Constraint]
\label{lm:superiority_of_our_method_forward_chi}
When $K_\rho(\pi, \base)=\mathbb{D}_{\chi^2}(\pi \| \base)$, for any $\gamma_{\textrm{ent}} \geq 0, \beta_{\textrm{ent}}>0$ such that $\mathbb{D}_{\chi^2}(\en\|\base) \leq \kappa$, there exist $\gamma_{\textrm{DS}} \geq 0$ and $\beta_{\textrm{DS}} > 0$ such that $\mathbb{D}_{\chi^2}(\our\|\base) \leq \kappa$ and $C(\our) \geq C(\en)$.
\end{lemma}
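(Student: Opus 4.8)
The plan is to mirror the three-step template of Lemmas~\ref{lm:superiority_of_our_method_reverse_KL}--\ref{lm:superiority_of_our_method_reverse_chi}, replacing the divergence-specific estimate in Step~3 with a single Cauchy--Schwarz inequality adapted to the forward $\chi^2$ geometry. First I would invoke the closed form for $\en$ from Lemma~\ref{lm_leaf_distribution_only_add_entropy}: on correct trajectories $\en(\tau)\propto[\base(\tau)]^{1-\widetilde{\gamma}}e^{1/\beta_{\textrm{ent}}}$ and on incorrect ones $\en(\tau)\propto[\base(\tau)]^{1-\widetilde{\gamma}}$, where $\widetilde{\gamma}=\gamma_{\textrm{ent}}/\beta_{\textrm{ent}}$. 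Writing $C(\en)=b_{\widetilde{\gamma}}e^{1/\beta_{\textrm{ent}}}/Z_{\textrm{ent}}$ with $Z_{\textrm{ent}}=b_{\widetilde{\gamma}}e^{1/\beta_{\textrm{ent}}}+(B_{\widetilde{\gamma}}-b_{\widetilde{\gamma}})$, I would solve for the exponential factor and the partition function to get the clean identities $e^{-1/\beta_{\textrm{ent}}}=\frac{b_{\widetilde{\gamma}}}{B_{\widetilde{\gamma}}-b_{\widetilde{\gamma}}}\cdot\frac{1-C}{C}$ and $Z_{\textrm{ent}}=\frac{B_{\widetilde{\gamma}}-b_{\widetilde{\gamma}}}{1-C}$, exactly as in the earlier lemmas.

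The key computational step is Step~2. Unlike the reverse $\chi^2$ case, the forward divergence places the policy in the denominator, $\mathbb{D}_{\chi^2}(\en\|\base)=\sum_{\tau}\frac{\base(\tau)^2}{\en(\tau)}-1$, so the summand is proportional to $[\base(\tau)]^{1+\widetilde{\gamma}}e^{-r(\tau)/\beta_{\textrm{ent}}}$ rather than to $[\base(\tau)]^{1-2\widetilde{\gamma}}$. I would therefore reuse the paper's notation $b_x=\sum_{\tau\in\mathcal{C}}[\base(\tau)]^{1-x}$ at the negative index $x=-\widetilde{\gamma}$, so that $b_{-\widetilde{\gamma}}=\sum_{\tau\in\mathcal{C}}[\base(\tau)]^{1+\widetilde{\gamma}}$ and $B_{-\widetilde{\gamma}}-b_{-\widetilde{\gamma}}=\sum_{\tau\notin\mathcal{C}}[\base(\tau)]^{1+\widetilde{\gamma}}$. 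Substituting the Step~1 identities collapses the divergence to the compact form
\begin{align*}
\mathbb{D}_{\chi^2}(\en\|\base)=\frac{b_{\widetilde{\gamma}}\,b_{-\widetilde{\gamma}}}{C}+\frac{(B_{\widetilde{\gamma}}-b_{\widetilde{\gamma}})(B_{-\widetilde{\gamma}}-b_{-\widetilde{\gamma}})}{1-C}-1.
\end{align*}
Running the identical computation for $\our$ through Lemma~\ref{lm_leaf_distribution_split_reward}, whose incorrect-trajectory branch is the unmodified $\base(\tau)$ so that the corresponding negative sum is simply $\sum_{\tau\notin\mathcal{C}}\base(\tau)=1-p_c$, yields
\begin{align*}
\mathbb{D}_{\chi^2}(\our\|\base)=\frac{b_{\widetilde{\gamma}}\,b_{-\widetilde{\gamma}}}{C}+\frac{(1-p_c)^2}{1-C}-1.
\end{align*}

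Finally, in Step~3 I would take the matched choice $\gamma_{\textrm{DS}}/\beta_{\textrm{DS}}=\gamma_{\textrm{ent}}/\beta_{\textrm{ent}}=\widetilde{\gamma}$ and set $C(\our)=C(\en)=C$, so that the first and last terms cancel and the difference reduces to $\frac{1}{1-C}\big[(B_{\widetilde{\gamma}}-b_{\widetilde{\gamma}})(B_{-\widetilde{\gamma}}-b_{-\widetilde{\gamma}})-(1-p_c)^2\big]$. The crux is a single Cauchy--Schwarz inequality on the incorrect trajectories, using the split $\base(\tau)=[\base(\tau)]^{(1-\widetilde{\gamma})/2}[\base(\tau)]^{(1+\widetilde{\gamma})/2}$, which gives $(1-p_c)^2=\big(\sum_{\tau\notin\mathcal{C}}\base(\tau)\big)^2\le\big(\sum_{\tau\notin\mathcal{C}}[\base(\tau)]^{1-\widetilde{\gamma}}\big)\big(\sum_{\tau\notin\mathcal{C}}[\base(\tau)]^{1+\widetilde{\gamma}}\big)=(B_{\widetilde{\gamma}}-b_{\widetilde{\gamma}})(B_{-\widetilde{\gamma}}-b_{-\widetilde{\gamma}})$. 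Hence $\mathbb{D}_{\chi^2}(\our\|\base)\le\mathbb{D}_{\chi^2}(\en\|\base)\le\kappa$, so $\our$ attains correctness $C(\en)$ within the same proximity budget, giving $C(\our)\ge C(\en)$ exactly as in the prior lemmas. I expect the main obstacle to be bookkeeping rather than conceptual: because $1/\en$ amplifies low-probability trajectories, I would need to track the $1+\widetilde{\gamma}$ exponents carefully (the correct analogue of the reverse case's $1-2\widetilde{\gamma}$), confirm that the negative-index sums $b_{-\widetilde{\gamma}},B_{-\widetilde{\gamma}}$ are finite (guaranteed by the finite trajectory space), and identify the right geometric-mean split so that Cauchy--Schwarz lands on $\sum_{\tau\notin\mathcal{C}}\base(\tau)$ rather than some other power.
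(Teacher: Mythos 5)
Your proposal is correct and follows the paper's proof essentially verbatim: the same three-step template, the same matched choice $\gamma_{\textrm{DS}}/\beta_{\textrm{DS}}=\gamma_{\textrm{ent}}/\beta_{\textrm{ent}}=\widetilde{\gamma}$ with correctness levels equated, and the same Cauchy--Schwarz bound $\bigl(\sum_{\tau\notin\mathcal{C}}[\base(\tau)]^{1+\widetilde{\gamma}}\bigr)\bigl(\sum_{\tau\notin\mathcal{C}}[\base(\tau)]^{1-\widetilde{\gamma}}\bigr)\geq(1-p_c)^2$ on the incorrect trajectories. If anything, your $b_{-\widetilde{\gamma}}$ bookkeeping is tidier than the paper's own display, which drops a $-1$ and miswrites one exponent as $1-\widetilde{\gamma}$ where $1+\widetilde{\gamma}$ is meant; your compact expressions for $\mathbb{D}_{\chi^2}(\en\|\base)$ and $\mathbb{D}_{\chi^2}(\our\|\base)$ are exactly what the paper intends.
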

\begin{proof}
The proof proceeds in three steps: we first find a functional relationship between correctness $C$ and the KL divergence for each policy, and then compare them. 

\textbf{Step 1: Analyze the Entropy-Maximization Policy ($\en$).}
The correctness of $\pi_{ent}$ is the total probability mass on correct trajectories:
\begin{align}
    C(\en) = \frac{\sum_{\tau\in\mathcal{C}}\left[\base(\tau)\right]^{1-\frac{\gamma_{\textrm{ent}}}{\beta_{\textrm{ent}}}}\exp(1/\beta_{\textrm{ent}})}{\sum_{\tau\in\mathcal{T}}\left[\base(\tau)\right]^{1-\frac{\gamma_{\textrm{ent}}}{\beta_{\textrm{ent}}}}\exp(r(\tau)/\beta_{\textrm{ent}})} = \frac{b_{\frac{\gamma_{\textrm{ent}}}{\beta_{\textrm{ent}}}} e^{1/\beta_{\textrm{ent}}}}{b_{\frac{\gamma_{\textrm{ent}}}{\beta_{\textrm{ent}}}} e^{1/\beta_{\textrm{ent}}} + (B_{\frac{\gamma_{\textrm{ent}}}{\beta_{\textrm{ent}}}}-b_{\frac{\gamma_{\textrm{ent}}}{\beta_{\textrm{ent}}}})}. \label{eq:cent}
\end{align}
Solving for $e^{1/\beta_{\textrm{ent}}}$ yields: $e^{1/\beta_{\textrm{ent}}} = \frac{B_{\frac{\gamma_{\textrm{ent}}}{\beta_{\textrm{ent}}}} - b_{\frac{\gamma_{\textrm{ent}}}{\beta_{\textrm{ent}}}}}{b_{\frac{\gamma_{\textrm{ent}}}{\beta_{\textrm{ent}}}}} \left( \frac{C(\en)}{1-C(\en)} \right)$.

The reverse $\chi^2$ divergence $\mathbb{D}_{\chi^2}(  \en\| \base)$ can be expressed as a function of $C(\en)$. Following the derivation previously, we arrive at:
\begin{align*}
    \mathbb{D}_{\chi^2}(  \en\| \base) =& -1+\left[\sum_{\tau}\base(\tau)^{1-\frac{\gamma_{\textrm{ent}}}{\beta_{\textrm{ent}}}}\exp\left(\frac{r(\tau)}{\beta_{\textrm{ent}}}\right)\right]\cdot\left[\sum_{\tau}\base(\tau)^{1+\frac{\gamma_{\textrm{ent}}}{\beta_{\textrm{ent}}}}\exp\left(-\frac{r(\tau)}{\beta_{\textrm{ent}}}\right)\right]\\
    =&\frac{1}{C(\en)}b_{\frac{\gamma_{\textrm{ent}}}{\beta_{\textrm{ent}}}}\sum_{\tau\in\mathcal{C}}\base(\tau)^{1+\frac{\gamma_{\textrm{ent}}}{\beta_{\textrm{ent}}}}+\frac{1}{1-C(\en)}(B_{\frac{\gamma_{\textrm{ent}}}{\beta_{\textrm{ent}}}}-b_{\frac{\gamma_{\textrm{ent}}}{\beta_{\textrm{ent}}}})\sum_{\tau\notin\mathcal{C}}\base(\tau)^{1-\frac{\gamma_{\textrm{ent}}}{\beta_{\textrm{ent}}}}
\end{align*}

\textbf{Step 2: Analyze Differential Policy ($\our$).}
Similarly, the correctness for our policy is given by:
\begin{align}
    C(\our) = \frac{b_{\frac{\gamma_{\textrm{DS}}}{\beta_{\textrm{DS}}}}e^{1/\beta_{\textrm{DS}}}}{b_{\frac{\gamma_{\textrm{DS}}}{\beta_{\textrm{DS}}}}e^{1/\beta_{\textrm{DS}}} + (1-p_c)}. \label{eq:cour}
\end{align}
The corresponding reverse $\chi^2$ divergence as a function of $C(\our)$ is:
\begin{align*}
\mathbb{D}_{\chi^2}(\en \| \base) = 
    & \frac{1}{C(\en) }b_{\frac{\gamma_{\textrm{DS}}}{\beta_{\textrm{DS}}}} \sum_{\tau\in\mathcal{C}} \base(\tau)^{1+\frac{\gamma_{\textrm{DS}}}{\beta_{\textrm{DS}}}}  + \frac{1}{\left(1-C(\en)\right) }\left((1-p_c)\right) \sum_{\tau\notin\mathcal{C}} \base(\tau)^{1+\frac{\gamma_n}{\beta_{\textrm{DS}}}}
\end{align*}
\textbf{Step 3: Compare the Policies.}
Our goal is to show that for any $C(\en)$, we can choose parameters for our method to achieve $C(\our) \geq C(\en)$ with a smaller or equal $\chi^2$ divergence. Let's choose $\gamma_{\textrm{DS}} =\frac{\beta_{\textrm{DS}}}{\beta_{\textrm{ent}}} \gamma_{\textrm{ent}}$, and set $C(\our) = C(\en)=C$. Equivalently, we assume that $$\frac{\gamma_{\textrm{DS}}}{\beta_{\textrm{DS}}}=\frac{\gamma_{\textrm{ent}}}{\beta_{\textrm{ent}}}=\widetilde{\gamma}.$$. According to Cauchy Inequality:
\begin{align*}
    \left[\sum_{\tau\notin\mathcal{C}}\base(\tau)^{1+\widetilde{\gamma}}\right]\cdot\left[\sum_{\tau\notin\mathcal{C}}\base(\tau)^{1-\widetilde{\gamma}}\right]\geq\left[\sum_{\tau\notin\mathcal{C}}\base(\tau)\right]^2.
\end{align*}
Thus, we have
\begin{align*}
    \mathbb{D}_{\chi^2}(  \en\| \base) 
    =&\frac{1}{C(\en)}b_{\widetilde{\gamma}}\sum_{\tau\in\mathcal{C}}\base(\tau)^{1+\widetilde{\gamma}}+\frac{1}{1-C(\en)}(B_{\widetilde{\gamma}}-b_{\widetilde{\gamma}})\sum_{\tau\notin\mathcal{C}}\base(\tau)^{1-\widetilde{\gamma}}\\
    \geq& \frac{1}{C(\en)}b_{\widetilde{\gamma}}\sum_{\tau\in\mathcal{C}}\base(\tau)^{1+\widetilde{\gamma}}+\frac{1}{1-C(\en)}\left(\sum_{\tau\notin\mathcal{C}}\base(\tau)\right)^2
    =\mathbb{D}_{\chi^2}(\en \| \base)
\end{align*}

Therefore, the difference is non-negative: $\mathbb{D}_{\chi^2}(\en\|\base) - \mathbb{D}_{\chi^2}(\our\|\base) \geq 0$.
This means that for any given correctness level $C$, our method (with $\gamma_{\textrm{DS}}=\gamma_{\textrm{ent}}\cdot\frac{\beta_{\textrm{DS}}}{\beta_{\textrm{ent}}}$) can achieve it with a lower or equal KL-divergence cost. Thus, if the entropy method is constrained by $\mathbb{D}_{\chi^2} \leq \kappa$, our method can achieve a correctness $C(\our) \geq C(\en)$ while also satisfying the constraint.
\end{proof}
\begin{theorem}
\label{thm:superiority_of_our_method_appendix}
Assume the reward mechanism has access to all correct trajectories. For any parameters $\gamma_{\textrm{ent}} \geq 0$ and $\beta_{\textrm{ent}}>0$ used in the entropy-regularized policy $\en$ that satisfy a proximity constraint $K_\rho(\en, \base) \leq \kappa$, there exist parameters $\gamma_{\textrm{DS}} \geq 0$ and $\beta_{n,p} > 0$ for our proposed policy $\our$ such that it also satisfies $K_\rho(\our, \base) \leq \kappa$, and the following inequalities hold:
\begin{align*}
    C(\our) \geq C(\en) \quad \text{and} \quad \sigma_\textrm{DS} \geq \sigma_\textrm{En}.
\end{align*}
This result holds for divergence measures $K_\rho(\pi, \base)$ including $\mathbb{D}_{\text{KL}}(\pi \| \base)$, $\mathbb{D}_{\text{KL}}(\base \| \pi)$, $\mathbb{D}_{\chi^2}(\pi \| \base)$, and $\mathbb{D}_{\chi^2}(\base \| \pi)$.
\end{theorem}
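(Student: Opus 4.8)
The plan is to reduce everything to the closed-form optimal policies supplied by Lemma~\ref{lm_leaf_distribution_only_add_entropy} and Lemma~\ref{lm_leaf_distribution_split_reward}, and then to compare $\our$ against $\en$ at a carefully \emph{matched} operating point. Writing $\widetilde\gamma = \gamma_{\textrm{ent}}/\beta_{\textrm{ent}}$, those lemmas give $\en(\tau) \propto [\base(\tau)]^{1-\widetilde\gamma}\exp(r(\tau)/\beta_{\textrm{ent}})$ on \emph{every} trajectory, whereas $\our(\tau) \propto [\base(\tau)]^{1-\gamma_{\textrm{DS}}/\beta_{\textrm{DS}}}\exp(r(\tau)/\beta_{\textrm{DS}})$ only on correct trajectories and $\propto \base(\tau)\exp(r(\tau)/\beta_{\textrm{DS}})$ on incorrect ones. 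I would first set $\gamma_{\textrm{DS}} = (\beta_{\textrm{DS}}/\beta_{\textrm{ent}})\,\gamma_{\textrm{ent}}$ so that both policies carry the same exponent $\widetilde\gamma$ on the correct set. The immediate payoff is for the diversity term: with $b_x = \sum_{\tau\in\mathcal{C}}[\base(\tau)]^{1-x}$, the conditional distribution over $\mathcal{C}$ is $[\base(\tau)]^{1-\widetilde\gamma}/b_{\widetilde\gamma}$ for \emph{both} policies, so $\sigma(\our) = \sigma(\en)$ identically, which discharges the diversity comparison (with equality) at once.

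The substance of the argument then lies entirely in the correctness/proximity tradeoff. I would parametrize each policy by its correctness $C$: solving the closed form for $C(\en)$ (resp. $C(\our)$) for $e^{1/\beta_{\textrm{ent}}}$ (resp. $e^{1/\beta_{\textrm{DS}}}$) eliminates the KL coefficient and lets me write each proximity term $K_\rho(\cdot,\base)$ as an explicit function of $C$ alone at fixed $\widetilde\gamma$. The target inequality is that, at a common correctness $C(\our) = C(\en) = C$, the differential policy is at least as close to the base, i.e. $K_\rho(\our,\base) \le K_\rho(\en,\base)$. Intuitively this must hold because the bonus inside $\en$ also flattens the \emph{incorrect}-trajectory block, spending proximity budget on a region that contributes nothing to correctness, whereas $\our$ leaves that block in its base shape; the inequality quantifies exactly this wasted budget, with $p_c = \sum_{\tau\in\mathcal{C}}\base(\tau)$ and $B_x = \sum_{\tau}[\base(\tau)]^{1-x}$ bookkeeping the two regions.

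Proving this divergence inequality is the main obstacle, and it must be carried out separately for each of the four divergences. In both KL cases I would split the divergence into a correct-set contribution (identical for the two policies once $\widetilde\gamma$ and $C$ are matched) and an incorrect-set contribution, then apply Jensen's inequality to the concave $\ln$ against the base-weighted average of $[\base(\tau)]^{-\widetilde\gamma}$ over $\tau \notin \mathcal{C}$; this produces precisely the comparison of $(1-p_c)\ln(B_{\widetilde\gamma} - b_{\widetilde\gamma})$ against $(1-p_c)\ln(1-p_c)$, whose sign is pinned down by Jensen. In both $\chi^2$ cases the same split reduces the claim to a Cauchy--Schwarz inequality on the incorrect block, e.g. $\big(\sum_{\tau\notin\mathcal{C}}[\base(\tau)]^{1-\widetilde\gamma}\big)^2 \le \big(\sum_{\tau\notin\mathcal{C}}[\base(\tau)]^{1-2\widetilde\gamma}\big)\big(\sum_{\tau\notin\mathcal{C}}\base(\tau)\big)$, again with equality exactly when the bonus does nothing on incorrect trajectories. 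In all four cases the difference of divergences collapses to a single nonnegative term, giving $K_\rho(\our,\base) \le K_\rho(\en,\base) \le \kappa$.

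Finally I would convert this into the stated existence claim. The matched construction already exhibits parameters $(\gamma_{\textrm{DS}},\beta_{\textrm{DS}})$ with $K_\rho(\our,\base) \le \kappa$, $C(\our) = C(\en)$, and $\sigma(\our) = \sigma(\en)$, which satisfies the theorem's weak inequalities outright. Moreover, whenever the divergence inequality is strict there is slack in the constraint, and I would exploit it by decreasing $\beta_{\textrm{DS}}$ (with $\gamma_{\textrm{DS}} = \widetilde\gamma\,\beta_{\textrm{DS}}$ so that $\widetilde\gamma$, and hence $\sigma(\our)$, stays fixed): this raises $C(\our)$ continuously, and I push it up until the budget $K_\rho(\our,\base) = \kappa$ is tight, yielding $C(\our) \ge C(\en)$ at unchanged diversity. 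Collecting the four cases establishes Theorem~\ref{thm:superiority_of_our_method} for all of $\mathbb{D}_{\text{KL}}(\pi\|\base)$, $\mathbb{D}_{\text{KL}}(\base\|\pi)$, $\mathbb{D}_{\chi^2}(\pi\|\base)$, and $\mathbb{D}_{\chi^2}(\base\|\pi)$.
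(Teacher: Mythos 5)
Your proposal is correct and follows essentially the same route as the paper's proof: the same closed-form lemmas for $\en$ and $\our$, the same matched choice $\gamma_{\textrm{DS}}/\beta_{\textrm{DS}}=\gamma_{\textrm{ent}}/\beta_{\textrm{ent}}=\widetilde\gamma$, the same parametrization of each divergence as a function of the common correctness level $C$, and the same Jensen (both KL cases) and Cauchy--Schwarz (both $\chi^2$ cases) inequalities applied to the incorrect-trajectory block. If anything you are slightly more complete than the paper's appendix, whose four lemmas establish only the correctness comparison: your observation that matching $\widetilde\gamma$ makes the conditional distributions on $\mathcal{C}$ coincide---so $\sigma(\our)=\sigma(\en)$ holds identically---and your final slack-tightening step (shrinking $\beta_{\textrm{DS}}$ at fixed $\widetilde\gamma$ until $K_\rho(\our,\base)=\kappa$) make explicit two points the paper leaves implicit.
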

\begin{proof}
    According to Lemma~\ref{lm:superiority_of_our_method_reverse_KL}, Lemma~\ref{lm:superiority_of_our_method_forward_KL}, Lemma~\ref{lm:superiority_of_our_method_reverse_chi}, and 
    Lemma~\ref{lm:superiority_of_our_method_forward_chi}, we obtain that Theorem~\ref{thm:superiority_of_our_method_appendix} holds for $K_\rho(\pi , \base)=\mathbb{D}_{\text{KL}}(\pi \| \base), \mathbb{D}_{\text{KL}}(\base \| \pi), \mathbb{D}_{\chi^2}(\pi \| \base),\mathbb{D}_{\chi^2}(\base \| \pi)$. Thus, we finish the proof of the theorem.
\end{proof}
\subsection{Equivalence of Theoretical and Practical Reward Modifications}
\label{sec:equivalence-theory-practice}
In this section, we clarify the relationship between our theoretical reward modification and its practical implementation. Specifically, we demonstrate that subtracting a $\log\pi$ term from the reward is equivalent to subtracting a $\log\base$ term, under a re-parameterization of the optimization objective.

Consider the following theoretical reward modification, which uses the policy's own probability $\pi$:
\begin{align}
r_\textrm{DS}^{\pi} (\tau) = 
    \begin{cases}
    r(\tau) - \gamma_p \cdot \log(\pi(\tau)) & \text{if } r(\tau) > 0 \quad \text{(correct trajectories)} \\
    r(\tau) + \gamma_n \cdot \log(\pi(\tau)) & \text{if } r(\tau) \leq 0 \quad \text{(incorrect trajectories)}. 
\end{cases}
\label{eq:reward_pi}
\end{align}
We first define the theoretical optimization problem for parameters $\beta, \gamma_n, \gamma_p$:
\begin{align}
\our = \arg\max_{\pi} \E_{\tau\sim\pi} \left[ r_{\textrm{DS}}^{\pi}(\tau) \right] - \beta \cdot \mathbb{D}_{\text{KL}}(\pi||\base).
\label{eq:obj_pi}
\end{align}
Now, consider an alternative formulation where the reward is modified using the base policy $\base$:
\begin{equation}
r_\textrm{DS}^{\base} (\tau) = 
\begin{cases}
    r(\tau) - \widetilde{\gamma}_p \cdot \log(\base(\tau)) & \text{if } r(\tau) > 0 \quad \text{(correct trajectories)} \\
    r(\tau) + \widetilde{\gamma}_n \cdot \log(\base(\tau)) & \text{if } r(\tau) \leq 0 \quad \text{(incorrect trajectories)}. 
\end{cases}
\label{eq:reward_base}
\end{equation}
We show that the solution $\our$ to the original problem \eqref{eq:obj_pi} is \textbf{also} the solution to the following practical objective, which uses $r_\textrm{DS}^{\base}$:
\begin{align}
\our = \arg\max_{\pi} \E_{\tau\sim\pi} \left[ r_{\textrm{DS}}^{\base}(\tau) \right] - \widetilde{\beta} \cdot \mathbb{D}_{\text{KL}}(\pi||\base).
\label{eq:obj_base}
\end{align}
This equivalence holds when the new parameters $\widetilde{\beta}$, $\widetilde{\gamma}_p$, and $\widetilde{\gamma}_n$ are set as follows:
\begin{align}
    \widetilde{\beta} = \beta + \gamma_p, \quad
    \widetilde{\gamma}_p = \gamma_p, \quad 
    \widetilde{\gamma}_n = \frac{\gamma_n (\beta + \gamma_p)}{\beta + \gamma_n}.
    \label{eq:reparam}
\end{align}

Therefore, the theoretical analysis from Theorem~\ref{thm:superiority_of_our_method} still holds when $\log\pi$ is substituted for $\log\base$. Furthermore, the policy selection mechanism in Eq.~\ref{eq_n_p_modified_reward} is equivalent to directly maximizing entropy via an added regularization term. In practice, we find that using $\log \pi_{\theta_{\mathrm{old}}}$ (i.e., the log-probability of a previous policy iteration) in the advantage function modification yields empirically better performance than using $\log\base$ (the log-probability of the base policy).

\section{Experimental Details}

In this section, we provide additional details for the experiments in Section~\ref{sec:experiments}.

\subsection{Countdown Experiment}\label{app:countdown detail}

\subsubsection{Data}

We use the dataset released by \citet{tinyzero}, which contains $327{,}680$ training samples and $1{,}024$ test samples.\footnote{\url{https://huggingface.co/datasets/Jiayi-Pan/Countdown-Tasks-3to4}} 
An example training prompt is shown below.

\begin{responsebox}{Countdown Task Example}
\fontfamily{\sfdefault}\selectfont 

\textbf{[INST]} Using the numbers [5, 94, 9, 44], create an equation that equals 93. You can use basic arithmetic operations (+, -, *, /) and each number can only be used once. Show your work in \verb|<think>| \verb|</think>| tags. And return the final answer in \verb|<answer>| \verb|</answer>| tags, for example \verb|<answer>|(1 + 2) / 3\verb|</answer>|. \textbf{[/INST]}

Let me solve this step by step.
\end{responsebox}

Our implementation builds on the official repository of \citet{tinyzero}\footnote{\url{https://github.com/Jiayi-Pan/TinyZero}} 
and a fork adapted for A100 training.\footnote{\url{https://github.com/JerryWu-code/TinyZero}}

\subsubsection{Training}

We train with a global batch size of $128$, with $5$ rollouts per prompt, and use a mini-batch size of $64$. 
The learning rate is $1\times10^{-6}$, and the KL penalty coefficient is set to $\beta_{\mathrm{KL}}=1\times10^{-3}$. 
The reward is $1$ for correct responses, $0.1$ for incorrect yet properly formatted responses, and $0$ for all others.
The maximum response length is $8{,}192$ tokens. 
We perform RL fine-tuning of the \texttt{Qwen2.5-3B-Instruct} model~\citep{qwen2.5} for $320$ steps on $2$ A100 GPUs.

\begin{table}[h]
\caption{Configuration for Qwen3-1.7B}
  \label{tab:config2_transposed}
  \centering
  \begin{tabular}{llll}
    \toprule
    \textbf{Parameter} & \textbf{Value} & \textbf{Parameter} & \textbf{Value} \\
    \midrule
    Pretrained model & Qwen3-1.7B & Training set & DAPO14k \\
    Prompts per batch & 32 & Generations per prompt & 8 \\
    Gradient update per RL step & 2 & Max prompt length & 1024 \\
    Max response length & 4096 & Learning rate & $5 \times 10^{-7}$ \\
    Clip ratio low & 0.2 & Clip ratio high & 0.25 \\
    Training Steps & 300 & $\beta$ & 0.0 \\
    Entropy coefficient & 0.0 & $\gamma_p$ & 0.02 \\
    $\gamma_n$ & 0.002 & Remove padding & Enabled \\
    Rollout engine & \texttt{vllm} & Rollout temperature & 0.7 \\
    Validation temperature & 0.7 & Device & 4 x Nvidia-H100 \\
    \bottomrule
  \end{tabular}
\end{table}
\begin{table}[h]
  \caption{Configuration for Qwen2.5-Math-1.5B}
  \label{tab:config1_transposed}
  \centering
  \begin{tabular}{llll}
    \toprule
    \textbf{Parameter} & \textbf{Value} & \textbf{Parameter} & \textbf{Value} \\
    \midrule
    Pretrained Model & Qwen2.5-Math-1.5B & Training Set & DAPO14k + MATH12k \\
    Prompts per batch & 32 & Generations per prompt & 8 \\
    Gradient update per RL step & 1 & Max prompt length & 1024 \\
    Max response length & 2048 & Learning rate & $1 \times 10^{-6}$ \\
    Clip ratio low & 0.2 & Clip ratio high & 0.25 \\
    Training Steps & 1000 & $\beta$ & 0.0 \\
    Entropy coefficient & 0.0 & $\gamma_p$ & 0.01 \\
    $\gamma_n$ & 0.01 & Remove padding & Enabled \\
    Rollout engine & \texttt{vllm} & Rollout temperature & 0.7 \\
    Validation temperature & 0.7 & Device & 4 x Nvidia-L6000 \\
    \bottomrule
  \end{tabular}
\end{table}

\begin{table}[h]
  \caption{Configuration for Qwen2.5-Math-7B}
  \label{tab:config1_transposed}
  \centering
  \begin{tabular}{llll}
    \toprule
    \textbf{Parameter} & \textbf{Value} & \textbf{Parameter} & \textbf{Value} \\
    \midrule
    Pretrained Model & Qwen2.5-Math-7B & Training Set & DAPO14k + MATH12k \\
    Prompts per batch & 32 & Generations per prompt & 8 \\
    Gradient update per RL step & 1 & Max prompt length & 1024 \\
    Max response length & 2048 & Learning rate & $1 \times 10^{-6}$ \\
    Clip ratio low & 0.2 & Clip ratio high & 0.25 \\
    Training Steps & 500 & $\beta$ & 0.0 \\
    Entropy coefficient & 0.0 & $\gamma_p$ & 0.01 \\
    $\gamma_n$ & 0.01 & Remove padding & Enabled \\
    Rollout engine & \texttt{vllm} & Rollout temperature & 0.7 \\
    Validation temperature & 0.7 & Device & 4 x Nvidia-A100 \\
    \bottomrule
  \end{tabular}
\end{table}

For Qwen2.5-Math-7B model, we trained for three random seeds. During evaluation, we first generated 128 rollouts for each question, then estimated Pass@1 to Pass@64 using the unbiased estimator of each metric respectively, following \citet{walder2025passkpolicyoptimizationsolving}.

\begin{table}[h]
\caption{Configuration for Ministral-8B-Instruct}
  \label{tab:config3_transposed}
  \centering
  \begin{tabular}{llll}
    \toprule
    \textbf{Parameter} & \textbf{Value} & \textbf{Parameter} & \textbf{Value} \\
    \midrule
    Pretrained model & Ministral-8B-Instruct & Training set & DAPO14k + MATH12k \\
    Prompts per batch & 32 & Generations per prompt & 8 \\
    Gradient update per RL step & 2 & Max prompt length & 1024 \\
    Max response length & 2048 & Learning rate & $3 \times 10^{-7}$ \\
    Clip ratio low & 0.2 & Clip ratio high & 0.22 \\
    Training steps & 300 & $\beta$ & 0.001 \\
    Entropy coefficient & 0.0 & $\gamma_p$ & 0.02 \\
    $\gamma_n$ & 0.002 & Remove padding & Enabled \\
    Rollout engine & \texttt{vllm} & Rollout temperature & 0.7 \\
    Validation temperature & 0.7 & Device & 4 x Nvidia-A100 \\
    \bottomrule
  \end{tabular}
\end{table}

\subsection{Baseline Implementation}

\noindent\textbf{GR-PKPO.} We attempted to train the model using the Pass@$k$ metric directly as the reward signal for $k \in \{2,3,4\}$. However, this approach proved unstable across all configurations. The training process quickly collapsed, causing the model to generate degenerate outputs and yielding performance substantially worse than the baseline. Consequently, these results are omitted from our main comparisons. We hypothesize this instability may be attributed to the limited number of rollouts (5) used during training.

\noindent\textbf{Unlikeliness Reward Method.} We compare against the rank-based penalty proposed by \citet{he2025rewarding}, which down-weights the reward for high-probability solutions to encourage diversity. For a set of rollouts $\{y_i\}_{i=1}^G$, the modified reward is:
\begin{align*}
    r_{\mathrm{unlikely}}(y_i) = r(y_i)\left(1 - \beta_{\mathrm{rank}} \frac{G - \mathrm{rank}(y_i)}{G}\right),
\end{align*}
\noindent where $\mathrm{rank}(y_i)$ is the rank of the solution $y_i$ based on its probability under the policy $\pi_{\mathrm{old}}(\cdot \mid x)$. We performed a hyperparameter sweep for $\beta_{\mathrm{rank}} \in \{0.05, 0.1, 0.15, 0.2, 0.25, 0.3\}$ and found that $\beta_{\mathrm{rank}}=0.05$ yielded the best Pass@64 performance. For values of $\beta_{\mathrm{rank}} \geq 0.15$, the training became unstable and collapsed.

\noindent\textbf{Entropy-Based Methods.} For the entropy-based baselines, we augment GRPO with either an entropy bonus or penalty. We swept the coefficients $\eta_+$ and $\eta_-$ over the set $\{0.005, 0.01, 0.03, 0.05\}$. In Figure~\ref{fig:pkpo_entropy}, we report the results for the hyperparameter value that achieved the highest Pass@64 score. A comprehensive summary of all experimental runs is available in Appendix~\ref{app:countdown additional results}.

\subsection{Math Reasoning Experiment}\label{app:math-exp}

We first filter out 14,042 questions that are written in English from the DAPO dataset \citep{yu2025dapo}, and combined them with 11,998 questions in the MATH12k dataset \citep{hendrycks2021measuring}. After that, we conduct reinforcement finetuning on \texttt{Qwen2.5-Math-1.5B} \citep{qwen2.5}, \texttt{Qwen3-1.7B} \citep{qwen3technicalreport} and \texttt{Ministral-8B-Instruct} \citep{mistral7b} model. The detailed training setups are shown in Tab. \ref{tab:config2_transposed}\ref{tab:config3_transposed}\ref{tab:config_countdown_transposed_n}, and response examples are shown in boxes below respectively.

\begin{responsebox}{Mistral Instruction Response Example}
\fontfamily{\sfdefault}\selectfont 

\textbf{[INST]} Convert the point $(0,3)$ in rectangular coordinates to polar coordinates. Enter your answer in the form $(r,\theta),$ where $r > 0$ and $0 \le \theta < 2 \pi.$ Let's think step by step and output the final answer within \verb|\boxed{}|. \textbf{[/INST]}

1. **Calculate the radius \( r \):**
   \[
   r = \sqrt{0^2 + 3^2} = \sqrt{0 + 9} = \sqrt{9} = 3.
   \]

2. **Determine the angle \( \theta \):**
   \begin{itemize}
       \item The point \((0, 3)\) is on the positive \( y \)-axis.
       \item The angle \( \theta \) with the positive \( x \)-axis is given by:
       \[
       \theta = \tan^{-1}\left(\frac{3}{0}\right).
       \]
       \item Since the point is on the positive \( y \)-axis, the angle is \( \frac{\pi}{2} \).
   \end{itemize}

3. **Combine the results to get the polar coordinates:**
   \[
   \boxed{(3, \frac{\pi}{2})}
   \]
\end{responsebox}
\begin{responsebox}{Qwen2.5-Math-1.5B}
\fontfamily{\sfdefault}\selectfont 

\textbf{[INST]} Cities $A$ and $B$ are $45$ miles apart. Alicia lives in $A$ and Beth lives in $B$. Alicia bikes towards $B$ at 18 miles per hour. Leaving at the same time, Beth bikes toward $A$ at 12 miles per hour. How many miles from City $A$ will they be when they meet? Let's think step by step and output the final answer within \verb|\boxed{}|. \textbf{[/INST]}

To determine where Alicia and Beth will meet, we need to find out how long it will take for them to meet and then use that time to calculate the distance Alicia has traveled.

\begin{enumerate}
    \item First, let's calculate their combined speed. Since Alicia bikes at 18 miles per hour and Beth bikes at 12 miles per hour, their combined speed is $18 + 12 = 30$ miles per hour.

    \item Since the distance between the two cities is 45 miles, the time it takes for them to meet is the total distance divided by their combined speed. So, the time $t$ it takes for them to meet is
    \[
    t = \frac{45}{30} = 1.5 \ \text{hours}.
    \]

    \item Now, we need to find out how far Alicia has traveled in 1.5 hours. Since Alicia bikes at 18 miles per hour, the distance she travels in 1.5 hours is
    \[
    18 \times 1.5 = 27 \ \text{miles}.
    \]
\end{enumerate}

So, Alicia and Beth will meet 27 miles from City $A$. The final answer is
\[
\boxed{27}
\]
\end{responsebox}





\begin{table}[h]
  \caption{Configuration for Countdown Task}
  \label{tab:config_countdown_transposed_n}
  \centering
  \begin{tabular}{llll}
    \toprule
    \textbf{Parameter} & \textbf{Value} & \textbf{Parameter} & \textbf{Value} \\
    \midrule
    Pretrained model & Qwen2.5-3B-Instruct & Batch size & 128 \\
    Generations per prompt & 5 & Mini-batch size & 64 \\
    Max prompt length & 2,048 & Max response length & 8,192 \\
    Learning rate & $1 \times 10^{-6}$ & Training steps & 320 \\
    Entropy coefficient & 0.001 & Clip ratio & 0.2 \\
    $\gamma_p$ & 0.03 & $\gamma_n$ & 0.01 \\
    Rollout engine & \texttt{vllm} & Rollout temperature & 1 \\
    Validation temperature & 1 & Validation top-k & 50 \\
    Validation top-p & 0.7 & Device & 2 x A100 \\
    \bottomrule
  \end{tabular}
\end{table}

\subsection{Experimental Details for Section \ref{sec:entropy_control}}
\label{sec：entropy_control_appendix}

In this section, we present experimental details to substantiate the claim made in Section~\ref{sec:entropy_control}: namely, that the Solution Multiplicity $\mathrm{Solution~Multiplicity}(\mathcal X)$, the average number of unique solutions per problem instance, serves as an appropriate metric for evaluating the benefits of encouraging policy entropy in a task. The tasks we consider are as follows:

\begin{itemize}
    \item \textbf{Math Reasoning}: As described in Appendix~\ref{app:math-exp}.
    \item \textbf{Countdown}: As described in Appendix~\ref{app:countdown detail}.
    \item \textbf{Countdown-3}: A variant of Countdown where we filter the dataset to retain only samples with three given numbers. 
    \item \textbf{Knights and Knaves}: A classic logic puzzle in which the objective is to identify truth-tellers (knights) and liars (knaves) from their statements. \citep{stojanovski2025reasoning}. Below is an example question and correct response.
\end{itemize}

\begin{responsebox}{Knights-and-Knaves}
\scriptsize
\fontfamily{\sfdefault}\selectfont

\textbf{Question:}

A very special island is inhabited only by sages and fools. Sages always tell the truth, and fools always lie. You meet 6 inhabitants: Owen, Liam, Evelyn, William, Sofia, and Emily. As Owen put it, "William is a sage". Liam was heard saying, "William is a fool". Evelyn noted, "Sofia is a fool and Evelyn is a sage". "if Owen is a sage then Sofia is a sage," William mentioned. "if Liam is a sage then Evelyn is a sage," Sofia claimed. Emily told you that if Emily is a sage then Liam is a fool. So who is a sage and who is a fool? (Format your answer like: "Owen is a sage/fool, Liam is a sage/fool, Evelyn is a sage/fool, William is a sage/fool, Sofia is a sage/fool, and Emily is a sage/fool") Let's think step by step and output the final answer (should be a sentence) within \verb|\boxed{}|.


\textbf{Response:}

\begin{itemize}
  \item We need to consider two cases for Owen:
    \begin{itemize}
      \item \textit{Case 1: Owen is a sage.} If Owen is a sage, then his statement is true. Therefore, William must be a sage.
      \item \textit{Case 2: Owen is a fool.} If Owen is a fool, then his statement is false. Therefore, William must be a fool.
    \end{itemize}

  \item \textbf{Liam's statement}: ``William is a fool.''
    \begin{itemize}
      \item \textit{Case 1: Liam is a sage.} If Liam is a sage, then his statement is true. Therefore, William must be a fool.
      \item \textit{Case 2: Liam is a fool.} If Liam is a fool, then his statement is false. Therefore, William must be a sage.
    \end{itemize}

  \item \textbf{William's statement}: ``If Owen is a sage then Sofia is a sage.''\\
  Since Sofia is a sage, William's statement is true. Therefore, William must be a sage.

  \item \textbf{Owen's statement revisited}: ``William is a sage.''\\
  Since William is a sage, Owen's statement is true. Therefore, Owen must be a sage.

  \item \textbf{Liam's statement revisited}: ``William is a fool.''\\
  Since William is a sage, Liam's statement is false. Therefore, Liam must be a fool.

  \item \textbf{Emily's statement revisited}: ``If Emily is a sage then Liam is a fool.''\\
  Since Emily is a sage (from earlier), according to her statement, Liam must be a fool. This confirms our previous determination.

  \item \textbf{Owen's initial statement}: ``William is a sage.''\\
  Since Owen is a sage and William is a sage, Owen's statement is true. This is consistent.
\end{itemize}

\textbf{Therefore, summarizing:}
\begin{itemize}
  \item Owen is a sage.
  \item Liam is a fool.
  \item Evelyn's statement was a contradiction, so Evelyn must be a fool.
  \item William is a sage.
  \item Sofia is a sage.
  \item Emily's statement was consistent, so Emily must be a sage.
\end{itemize}

\[
\boxed{Owen\ \text{is a sage},\ Liam\ \text{is a fool},\ Evelyn\ \text{is a fool},\ William\ \text{is a sage},\ Sofia\ \text{is a sage},\ \text{and}\ Emily\ \text{is a sage}}
\]

\end{responsebox}

To estimate Solution Multiplicity, we query the GPT-5-Think model on 200 randomly selected samples with the following prompt:

\begin{responsebox}{Prompt for Querying Solution Multiplicity}
\fontfamily{\sfdefault}\selectfont 

\textbf{[INST]}You are an expert mathematics educator and problem solver. Analyze the given mathematical problem and determine how many different solution approaches exist for it.

Please provide a comprehensive analysis that:
1. Identifies all distinct solution methods/approaches
2. Briefly explains what each approach involves
3. Counts the total number of different approaches

Mathematical Problem:
{problem}

Please first explain what different solution approaches exist for this problem, then provide your final answer in the format: <ways> [number] </ways>

For example, if a problem has exactly 2 different solution methods, your response should end with: <ways> 2 </ways> \textbf{[/INST]}
\end{responsebox}

\begin{table}[h]
\caption{Illustration of Experimental Result}
\label{tab:config2_transposed}
\centering
\begin{tabular}{l|l|l|l|llllll}
\hline
\textbf{Task} &Knight and Knaves   & Math  & Countdown-3 & Countdown\\
\hline
\textbf{Solution Multiplicity} &1.5 &3.7&6.5 &15.7\\
\hline
\textbf{Pass@8 of GRPO} & 47.1 & 78.6 & 97.7 & 73.4\\
\hline
\textbf{Pass@8 of GRPO} + Entropy bonus & 38.1 & 72.6 & 98.7 &  76.8\\ 
\hline
\textbf{Entropy Effect for Pass@8}  &-9.0\%&-6.0\%&+1.0\%&+3.4\%  \\
\hline

\end{tabular}
\end{table}

\paragraph{Training details.} The training setups of Math reasoning and Countdown task are identical to the main experiments as described in Appendix \ref{app:countdown detail} and \ref{app:math-exp}. For Countdown-3, we train the model for $160$ steps. For GRPO with entropy bonus, we use a bonus coefficient of $\eta_+=0.05$. Other configurations are identical to those of the main experiment.  For Knights-and-Knaves, we RL fine-tune the Qwen2.5-7B-Instruct \cite{qwen2.5} model with LoRA adaptation (rank $256$) \citep{hu2022lora} for $100$ steps. 
We use a learning rate of $4\times 10^{-5}$ and a batch size of $32$, with $8$ rollouts per sample.  

\paragraph{Result Analysis.} The experimental results, presented in Table~\ref{tab:config2_transposed}, reveal a direct correlation between Solution Multiplicity and the efficacy of entropy regularization. Specifically, as a task's Solution Multiplicity increases, so does the performance gain (Pass@8) of an entropy bonus over vanilla GRPO. This provides strong empirical support for our hypothesis: for tasks with a larger solution space, the benefits of enhanced diversity outweigh the potential trade-offs in single-solution correctness. These findings thus validate Solution Multiplicity as a practical metric for guiding the decision of whether to increase or decrease entropy for a given task.

\section{Additional Experimental Results for Countdown}\label{app:countdown additional results}
In this section, we provide additional results for the Countdown task.

\subsection{Additional Experiments for Entropy coefficient}
\begin{figure*}[h!t]
\centering

\includegraphics[width=1.0\columnwidth]{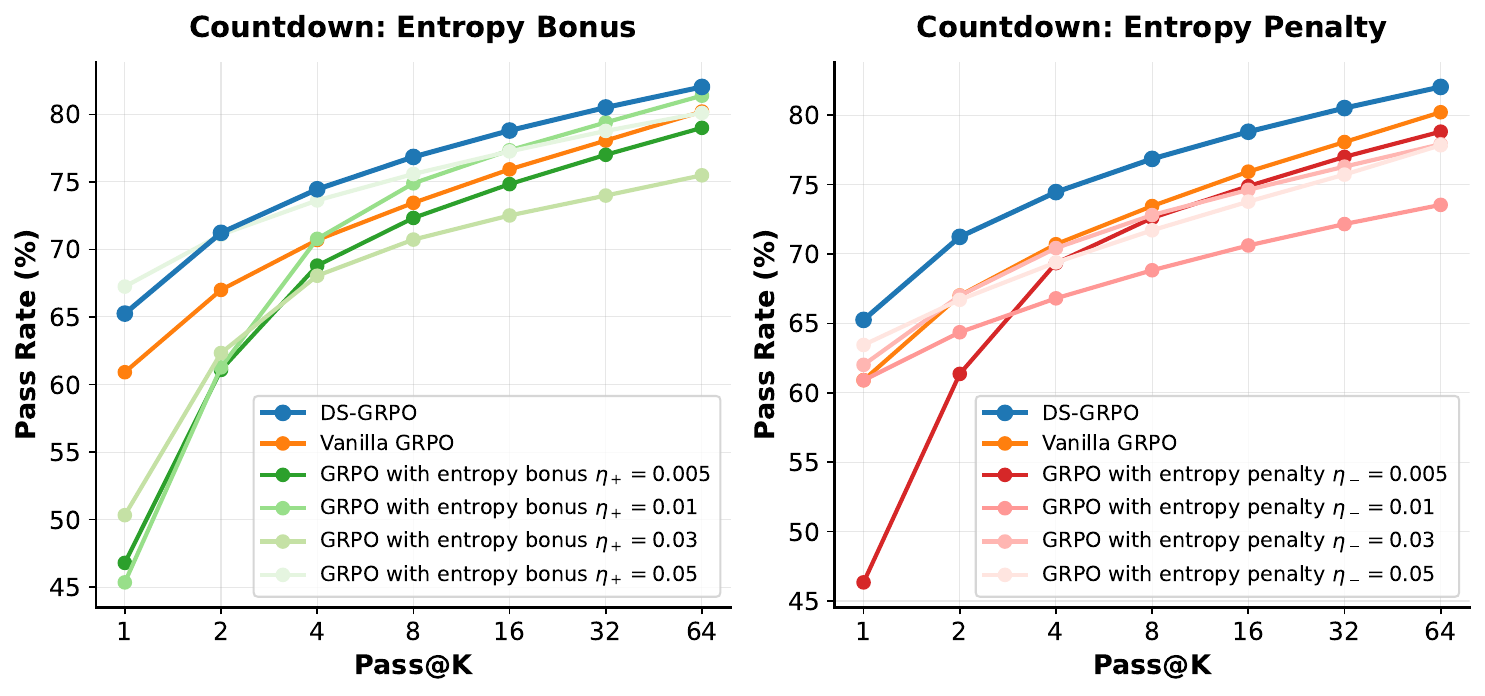}
\includegraphics[width=0.48\columnwidth]{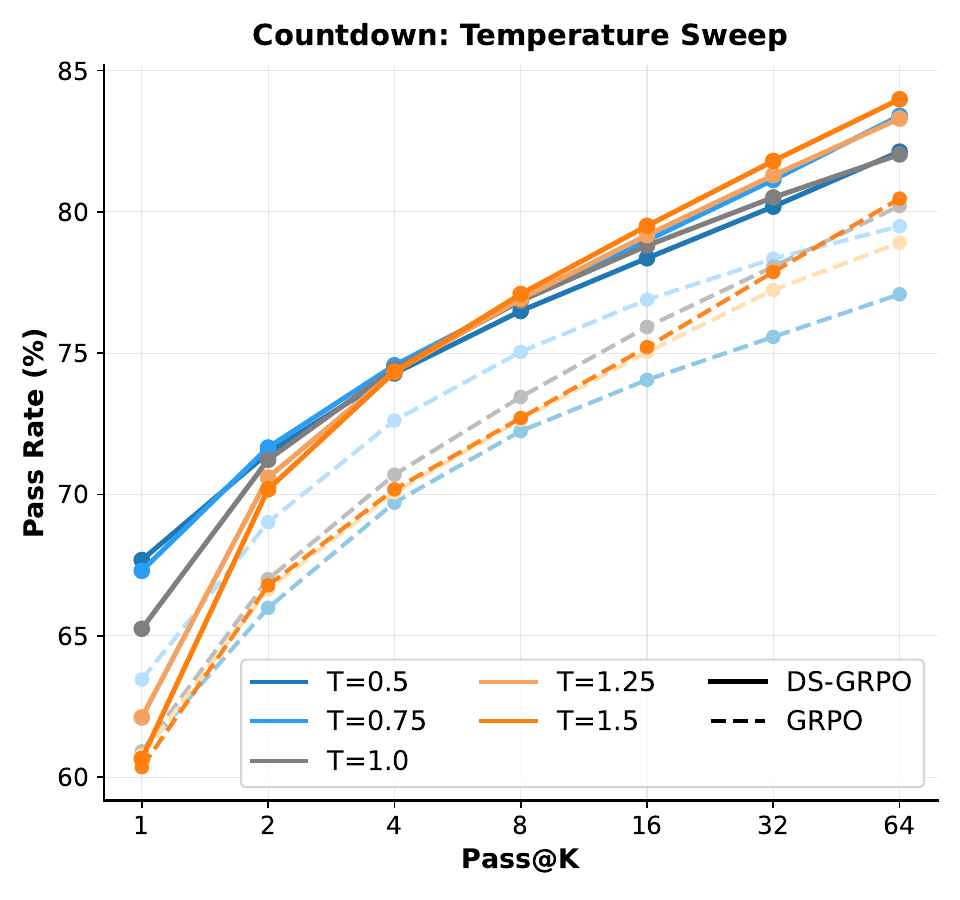} 
\includegraphics[width=0.48\columnwidth]{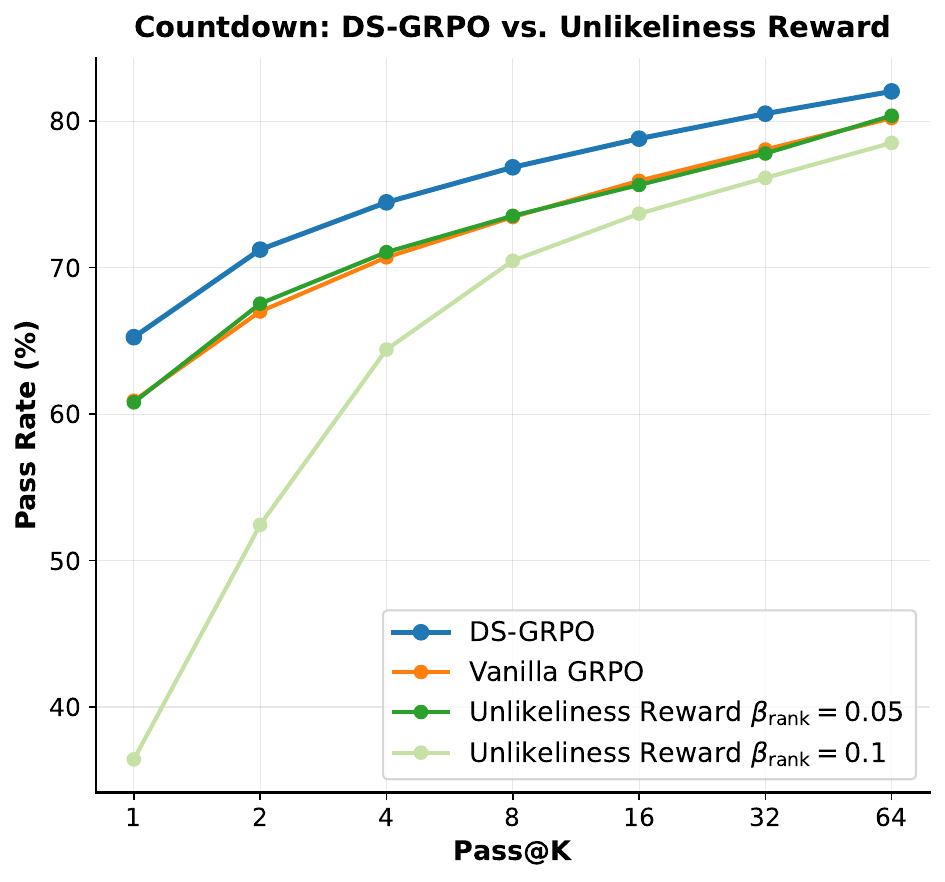} 
\caption{Additional results on the Countdown task comparing DS-GRPO with various baselines. 
Top: Pass@$K$ performance of DS-GRPO and GRPO under entropy bonus and entropy penalty. 
Bottom Left: Pass@$K$ performance of DS-GRPO and GRPO across different sampling temperatures. 
Bottom Right: Pass@$K$ performance of DS-GRPO and the Unlikeliness Reward method with varying coefficients.}

\label{fig:countdown_sweep_unlikeliness}
\end{figure*}
To provide a more comprehensive comparison, we analyze the performance of the entropy-based baselines across their full hyperparameter sweep. We compare DS-GRPO against GRPO with varying entropy bonus ($\eta_+$) and penalty ($\eta_-$) coefficients, with the results illustrated in Figure~\ref{fig:countdown_sweep_unlikeliness} (Top). The figure clearly demonstrates that DS-GRPO consistently outperforms the global entropy control methods across their entire range of tested hyperparameters for all values of $K$.

\subsection{Effects of KL Coefficient and Other Factors}
Figure~\ref{fig:countdown_sweep_unlikeliness} (Bottom Left) reports results with varying sampling temperatures for both DS-GRPO and GRPO. Under the same temperature, DS-GRPO achieves consistently higher Pass@$K$.

Figure~\ref{fig:countdown_sweep_unlikeliness} (Bottom Right) presents results from varying the unlikeliness reward coefficient $\beta_{\text{rank}}\in \{0.05, 0.1, 0.15, 0.2, 0.25, 0.3\}$ \citep{he2025rewarding}. For $\beta_{\text{rank}}\geq 0.15$, training collapses and accuracy drops to $0$, so we omit those results.


\newpage
\section{Additional Experimental Results for Math Reasoning Experiment}
\subsection{Additional Experimental Results on DS-GRPO vs GRPO}
\label{sec:appendix_math_experiment_result}
\noindent\textbf{Experimental Results.} Figure~\ref{fig:math_complete}, which contains the full results for Section~\ref{sec:math_experiment}, compares our proposed DS-GRPO against the vanilla GRPO baseline across three different base models and five mathematical reasoning benchmarks. The results consistently demonstrate that DS-GRPO outperforms vanilla GRPO across all tested models and datasets.

\begin{figure*}[htbp]
\centering
\includegraphics[width=1.00\columnwidth]{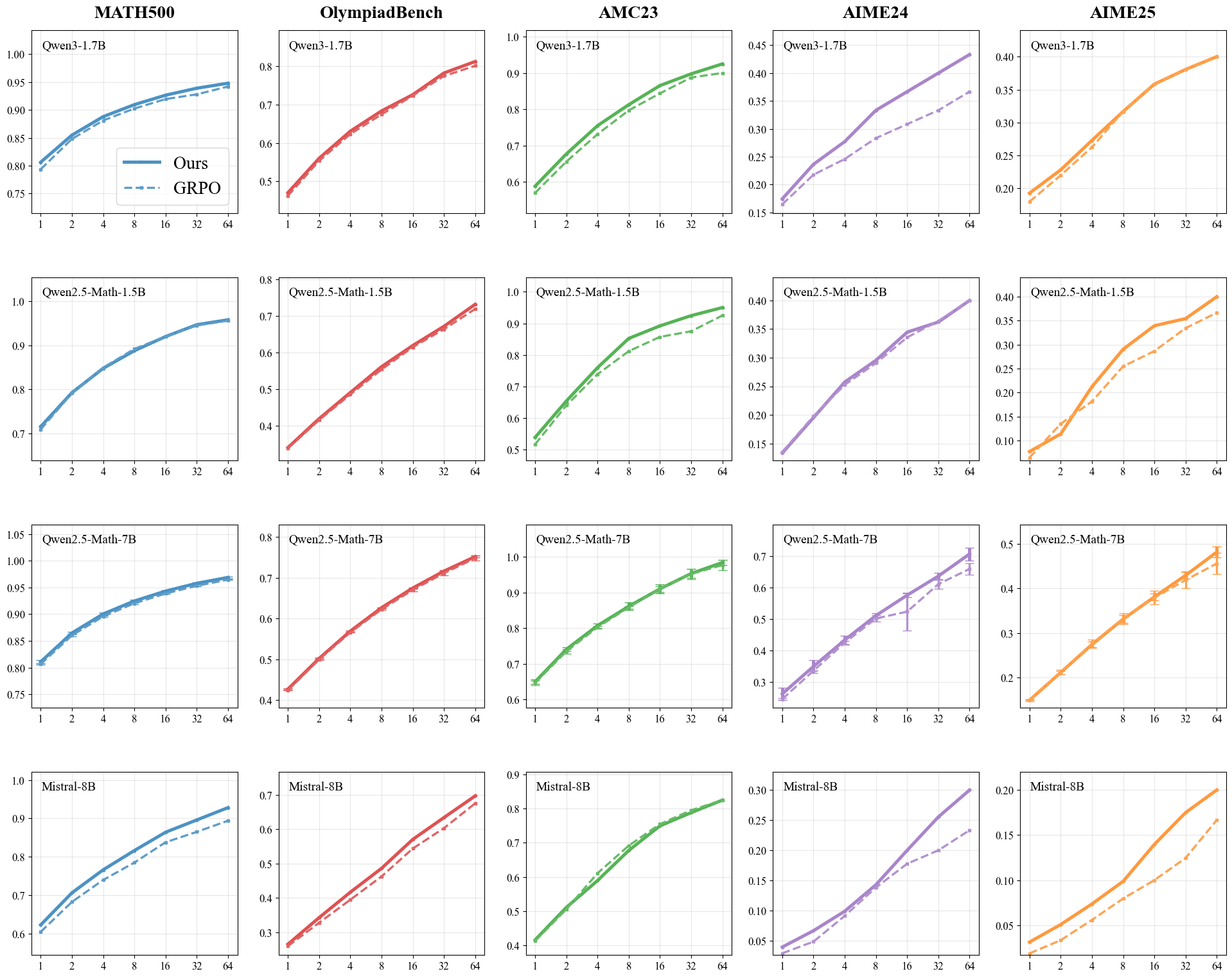} 
\caption{Pass@K performance after reward modification, compared with vanilla GRPO. X-axis denotes K and y-axis denotes pass rates. Trained on the DAPO\citep{yu2025dapo} and the MATH\citep{hendrycks2021measuring} Dataset. } 
\label{fig:math_complete}
\end{figure*} 
\subsection{Additional Ablation Study for DS-GRPO }
\label{sec:appendix_ablation}
 \noindent\textbf{Ablation Study Implementation.} To isolate the contribution of each component in our reward modification strategy, we conduct an ablation study. We compare the full DS-GRPO algorithm against two specialized variants: \emph{DS-GRPO-Positive}, which only modifies the advantage for correct trajectories, and \emph{DS-GRPO-Negative}, which only modifies the advantage for incorrect trajectories.

\noindent Their respective advantage modifications are defined as follows:
\begin{align*}
    A_i^\textrm{DS+}&=A_i-\gamma_p\;\log \pi_{\theta_{\mathrm{old}}}(y_i\mid x), \quad \text{if } r_i=1, \\
    A_i^\textrm{DS-}&=A_i+\gamma_n\;\log \pi_{\theta_{\mathrm{old}}}(y_i\mid x), \quad \text{if } r_i\neq1.
\end{align*}
The DS-GRPO-Positive variant applies only the modification to correct trajectories ($A_i^\textrm{DS+}$), leaving the advantage for incorrect trajectories as the standard $A_i$. Conversely, the DS-GRPO-Negative variant applies only the modification to incorrect trajectories ($A_i^\textrm{DS-}$), leaving the advantage for correct trajectories unchanged.
\begin{figure*}[htbp]
\centering

\includegraphics[width=1.0\columnwidth]{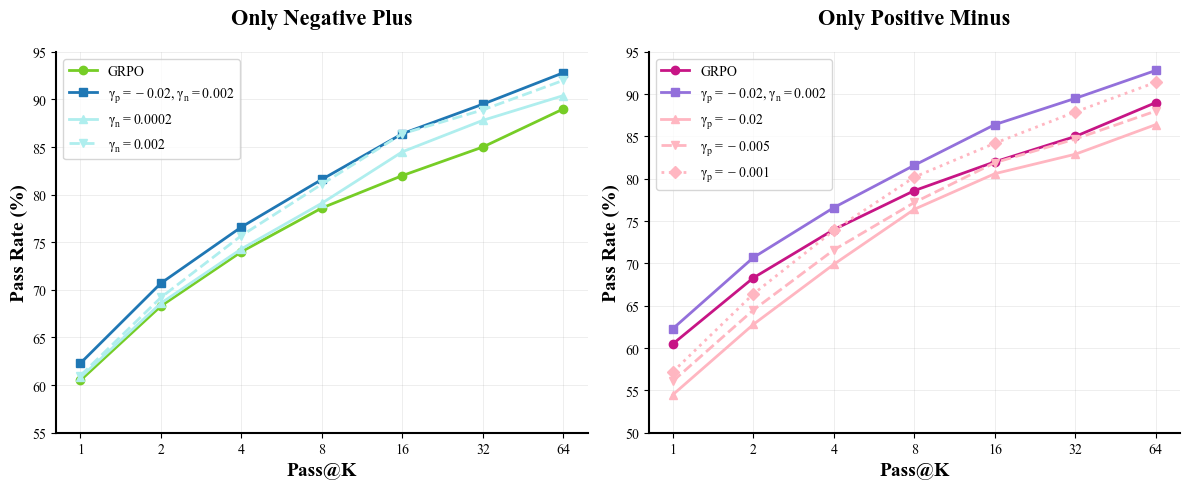}

\caption{Comparison on Different Hyperparameter of DS-GRPO.} 
\label{fig:math_DS-GRPO_parameter}
\end{figure*} 
\noindent\textbf{Result Analysis.} We present the results of our ablation study in Figure~\ref{fig:math_DS-GRPO_parameter}. The key findings are as follows:
\begin{itemize}[topsep=0pt, leftmargin=15pt] \setlength{\itemsep}{2pt}
\item \emph{DS-GRPO-Positive vs. Vanilla GRPO.} As shown in Figure~\ref{fig:math_DS-GRPO_parameter}, DS-GRPO-Positive outperforms vanilla GRPO, particularly for larger values of $K$. This demonstrates that modifying the reward for correct trajectories successfully mitigates the sharpening effect, providing empirical support for our intuition in Section~\ref{sec:theory-reward} that penalizing high-probability correct solutions enhances diversity.

\item \emph{DS-GRPO-Negative vs. Vanilla GRPO.} The figure also shows that DS-GRPO-Negative consistently outperforms vanilla GRPO across all values of $K$. This indicates that modifying the reward for incorrect trajectories is effective at improving the model's overall correctness.

\item \emph{DS-GRPO vs. Its Components.} The full DS-GRPO algorithm demonstrates superior performance over both of its individual components (DS-GRPO-Positive and DS-GRPO-Negative) for all $K$. This highlights a clear synergy: the "Positive" component drives diversity, while the "Negative" component enhances correctness. Their combination in DS-GRPO achieves the best balance, validating our complete reward modification strategy as outlined in Section~\ref{sec:theory-reward}.
\end{itemize}


\newpage
\subsection{Additional Experiment on Comparing DS-GRPO with CISPO}
\label{sec:appendix_CISPO}
We compare DS-GRPO with CISPO \citep{chen2025minimax}, as illustrated in Figure~\ref{fig:CISPO}. The results demonstrate that DS-GRPO consistently achieves a higher Pass@K compared to CISPO across all datasets. All experiments were conducted using the Qwen2.5-Math-1.5B model.

\begin{figure*}[htbp]
\centering

\includegraphics[width=1.0\columnwidth]{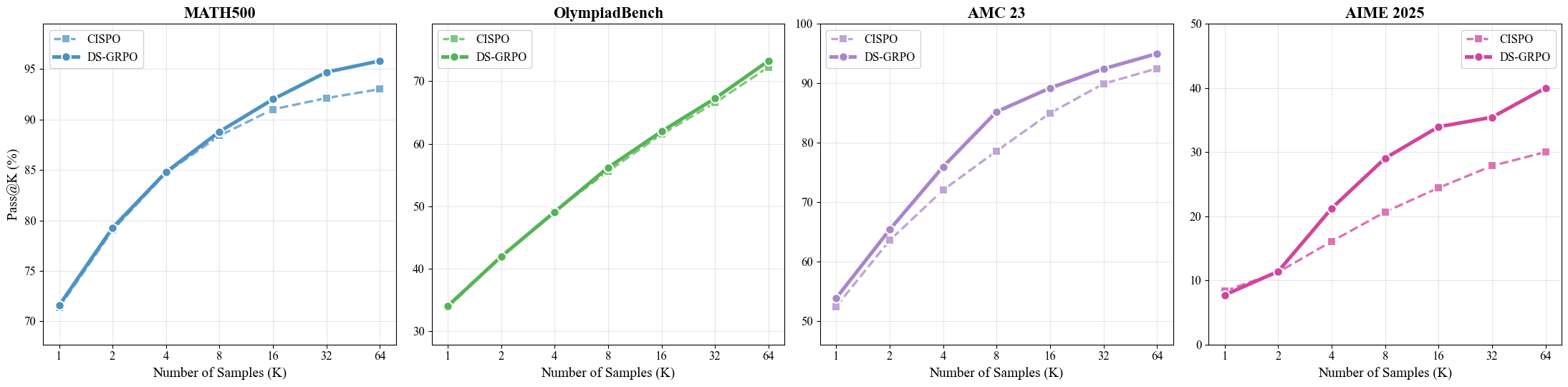}

\caption{Comparison on CISPO with DS-GRPO.} 
\label{fig:CISPO}
\end{figure*} 
\end{document}